\pgfplotsset{compat=1.18}
\icmltitlerunning{Reusing Trajectories in Policy Gradients Enables Fast Convergence}
\begin{document}

\twocolumn[
  \icmltitle{Reusing Trajectories in Policy Gradients Enables Fast Convergence}



  \icmlsetsymbol{equal}{*}

  \begin{icmlauthorlist}
    \icmlauthor{Alessandro Montenegro}{poli}
    \icmlauthor{Federico Mansutti}{poli}
    \icmlauthor{Marco Mussi}{poli}
    \icmlauthor{Matteo Papini}{unimi}
    \icmlauthor{Alberto Maria Metelli}{poli}
  \end{icmlauthorlist}

  \icmlaffiliation{poli}{Politecnico di Milano, Milan, Italy.}
  
  \icmlaffiliation{unimi}{University of Milan, Milan, Italy}

  \icmlcorrespondingauthor{Alessandro Montenegro}{alessandro.montenegro@polimi.it}

  \icmlkeywords{Machine Learning, Reinforcement Learning, Policy Gradients, Convergence, Sample Reuse, Sample Efficiency, ICML}

  \vskip 0.3in
]



\printAffiliationsAndNotice{}  

\setlength{\belowdisplayskip}{3pt}
\setlength{\abovedisplayskip}{3pt}
\setlength{\floatsep}{4pt}
\setlength{\textfloatsep}{5pt}

\begin{abstract}
\emph{Policy gradient} (PG) methods are a class of effective \emph{reinforcement learning} algorithms, particularly when dealing with continuous control problems. They rely on fresh \emph{on-policy} data, making them sample-inefficient and requiring $\mathcal{O}(\epsilon^{-2})$ trajectories to reach an $\epsilon$-approximate stationary point. A common strategy to improve efficiency is to \emph{reuse} information from past iterations, such as previous \emph{gradients} or \emph{trajectories}, leading to \emph{off-policy} PG methods. While gradient reuse has received substantial attention, leading to improved rates up to $\mathcal{O}(\epsilon^{-3/2})$, the reuse of past trajectories, although intuitive, remains largely unexplored from a theoretical perspective. In this work, we provide the first rigorous theoretical evidence that reusing past off-policy trajectories can significantly accelerate PG convergence. We propose \algnameshort (\algname), a novel algorithm that leverages a \emph{power mean}-corrected multiple importance weighting estimator to effectively combine on-policy and off-policy data coming from the most recent $\win$ iterations. Through a novel analysis, we prove that \algnameshort achieves a sample complexity of $\widetilde{\mathcal{O}}(\epsilon^{-2}\win^{-1})$. When reusing \emph{all} available past trajectories, this leads to a rate of $\widetilde{\mathcal{O}}(\epsilon^{-1})$, the best known one in the literature for PG methods. We further validate our approach empirically, demonstrating its effectiveness against baselines with state-of-the-art rates.
\end{abstract}

\section{Introduction} \label{sec:intro}
Among \emph{reinforcement learning}~\citep[RL,][]{sutton2018reinforcement} approaches, \emph{policy gradient}~\citep[PG,][]{deisenroth2013survey} methods have demonstrated notable success in tackling real-world problems, due to their ability to operate in continuous settings~\citep{peters2006policy}, robustness to noise~\citep{gravell2020learning}, and effectiveness in partially observable environments~\citep{azizzadenesheli2018policy}. Moreover, PG methods allow for the integration of expert prior knowledge into policy design~\citep{ghavamzadeh2006bayesian}, thereby enhancing the safety, interpretability, and performance of the learned policy~\citep{peters2008reinforcement}.
PG methods directly optimize the parameters $\vtheta \in \mR^{\dt}$ of \emph{parametric policies} to maximize a performance objective $J(\vtheta)$, typically the expected return. The parameter vector $\vtheta$ is updated via gradient ascent, based on an estimate of the gradient $\nabla J(\vtheta)$ \wrt $\vtheta$. The goal is to learn an optimal parameterization $\vtheta^{*}$, i.e., maximizing $J(\vtheta)$. In most cases, $J(\vtheta)$ is non-convex \wrt $\vtheta$, and this notion of optimality is often relaxed to the less demanding objective of finding a \emph{first-order stationary point}, defined by the condition $\| \nabla J(\vtheta^{*}) \|_{2} = 0$~\citep{papini2018stochastic}.

Vanilla PGs such as REINFORCE~\citep{williams1992simple} and GPOMDP~\citep{baxter2001infinite} aim to learn $\vtheta^{*}$ via stochastic gradient ascent, using only \emph{on-policy} trajectories, \ie those generated by the current policy. This class of PG methods is notoriously data-hungry, as each update requires collecting fresh data for gradient estimation. This limitation is reflected in their convergence guarantees, which have become a central focus of the PG literature. Indeed, vanilla PG methods require order of $\cO(\epsilon^{-2})$ trajectories to reach an $\epsilon$-approximate stationary point, \ie a parameter $\vtheta$ such that $\| \nabla J(\vtheta) \|_{2}^{2} \le \epsilon$.
This unsatisfactory convergence rate is primarily due to the high variance of the gradient estimates, which does not vanish throughout the learning process, since the gradients are always estimated from new samples. To mitigate this phenomenon, one can reuse \emph{off-policy} data collected during the learning process (originating from different policy parameterizations), such as \emph{past gradients} or \emph{past trajectories}. 
These data are typically incorporated into the gradient estimation through \emph{importance weighting}~\citep{owen2000safe}. However, the use of vanilla \emph{importance weights} (IWs) may inject high variance in the estimate~\citep{mandel2014offline}, making the design of the gradient estimator more challenging.


    

    

\begin{table*}[t!]
    \centering

    \resizebox{.8\linewidth}{!}{\renewcommand{\arraystretch}{1.5}
\begin{tabular}{|lcccccc|}
    \hline
     \multicolumn{1}{|c}{\multirow{2}{*}{\makecell{\phantom{aaa}\\\phantom{aaa}\\ \textbf{Algorithm}}}} & \multicolumn{1}{c}{\multirow{2}{*}{\makecell{\phantom{aaa}\\\phantom{aaa}\\ \textbf{Data}\\\textbf{reused}}}} & \multicolumn{1}{c}{\multirow{2}{*}{\makecell{\phantom{aaa}\\\phantom{aaa}\\ \textbf{Memory}\\\textbf{(\#trajectories)}}}} & \multicolumn{1}{c}{\multirow{2}{*}{\makecell{\phantom{aaa}\\\phantom{aaa}\\ \textbf{Storage of}\\\textbf{old policies}}}} & \multicolumn{2}{c}{\textbf{Assumptions}} & \multicolumn{1}{c|}{\multirow{2}{*}{\makecell{\phantom{aaa}\\\phantom{aaa}\\\textbf{Sample}\\\textbf{complexity}}}} \\
     \multicolumn{1}{|c}{} & & & & \rotatebox[origin=c]{90}{\makecell{\phantom{a}Regular\phantom{a}\\\phantom{a}policies\textsuperscript{§}\phantom{a}}} & \rotatebox[origin=c]{90}{\makecell{\phantom{a}Bounded IW\phantom{a}\\\phantom{a}variance\phantom{a}}} & \multicolumn{1}{c|}{} \\
    \hline\hline
    \rowcolor{gred!20} 
    REINFORCE~\citep{williams1992simple} & --- & $\cO(1)$ & \no & \yes & \no & $\cO\left(\epsilon^{-2}\right)$ \\
    \hline
    \rowcolor{gred!20} PGT~\citep{sutton1999policy} & --- & $\cO(1)$ & \no & \yes & \no & $\cO\left(\epsilon^{-2}\right)$ \\
    \hline
    \rowcolor{gred!20} GPOMDP~\citep{baxter2001infinite} & --- & $\cO(1)$ & \no & \yes & \no & $\cO\left(\epsilon^{-2}\right)$ \\
    \hline
    \rowcolor{gyellow!20} BPO~\citep{papini2024policy} & Trajectories & $\cO\left(\epsilon^{-4/3}\right)$ & \yes & $\text{\yes}^{\dagger}$ & $\text{\yes}^{\dagger}$ & $\cO\left(\epsilon^{-5/3}\right)$ \\
    \hline
    \rowcolor{gyellow!20} SVRPG~\citep{papini2018stochastic} & Gradients & $\cO(1)$ & \yes & \yes & \yes &  $\cO\left(\epsilon^{-5/3}\right)$ \\
    \hline
    \rowcolor{ggreen!20} SRVRPG~\citep{xu2019sample} & Gradients & $\cO(1)$ & \yes  & \yes & \yes & $\cO\left(\epsilon^{-3/2}\right)$ \\
    \hline
    \rowcolor{ggreen!20} STORM-PG~\citep{yuan2020stochastic} & Gradients & $\cO(1)$ & \yes & \yes & \yes & $\cO\left(\epsilon^{-3/2}\right)$ \\
     \hline
    \rowcolor{ggreen!20} PAGE-PG~\citep{GargianiZMSL22} & Gradients & $\cO(1)$ & \yes & \yes & \yes & $\cO\left(\epsilon^{-3/2}\right)$ \\
    \hline
    \rowcolor{ggreen!20} DEF-PG~\citep{paczolay2024sample} & Gradients & $\cO(1)$ & \yes & \yes & \no & $\cO\left(\epsilon^{-3/2}\right)$ \\
    \hline \hline
    \rowcolor{gblue!20} \algnameshort: Partial Reuse \textbf{(this work)} & Trajectories & $\widetilde{\cO}\left(\epsilon^{-1}\right)$ & \no & \yes & \yes & $\widetilde{\cO}\left(\epsilon^{-2}\win^{-1}\right)$\textsuperscript{\#} \\
    \hline
    \rowcolor{gblue!20} \algnameshort: Full Reuse \textbf{(this work)} & Trajectories & $\widetilde{\cO}\left(\epsilon^{-1}\right)$ & \no  & \yes & \yes & $\widetilde{\cO}\left(\epsilon^{-1}\right)$\textsuperscript{\#} \\
    \hline
\end{tabular}}

    \vspace{.1cm}

    {\footnotesize \textsuperscript{$\dagger$} Stricter assumptions implying the ones matched.}
    {\footnotesize \textsuperscript{\#} $\widetilde{\cO}(\cdot)$ hides logarithmic factors.}
    {\footnotesize \textsuperscript{§} See Assumption~\ref{asm:log-pi-lc-ls}.}

    \vspace{.1cm}
    
    \caption{Memory requirements and sample complexity to achieve $\| \nabla J(\vtheta) \|_{2}^{2} \le \epsilon$.}
    \label{tab:comparison-complexities}
    
    \vspace{-.3cm} 
    
\end{table*}

While reusing trajectories is the most natural choice for improving PGs' sample efficiency, the literature has extensively focused on reusing gradients, proposing various update schemes.
Among these methods, \citet{papini2018stochastic} introduced SVRPG, which incorporates ideas from stochastic variance-reduced gradient methods~\citep{johnson2013accelerating,allen2016variance,reddi2016stochastic}. Specifically, SVRPG employs a \emph{semi-stochastic gradient} that combines the stochastic gradient at the current iteration with that of a past \quotes{snapshot} parameterization. This method achieves a sample complexity of $\cO(\epsilon^{-5/3})$~\citep{xu2020improved}. An improvement over this result was proposed by \citet{xu2019sample}, who introduced SRVRPG, employing a \emph{recursive semi-stochastic gradient}, which integrates the current stochastic gradient with those accumulated throughout the entire learning process, achieving a sample complexity of $\cO(\epsilon^{-3/2})$. Furthermore, \citet{yuan2020stochastic} proposed STORM-PG, which, instead of alternating between small and large batch updates as in SRVRPG, maintains a \emph{moving average} of past stochastic gradients. This approach enables adaptive step sizes and eliminates the need for large batches of trajectories, still ensuring a sample complexity of $\cO(\epsilon^{-3/2})$. More recently, following the approach of \citep{GargianiZMSL22} of stochastically deciding whether to reuse past gradients at each iteration, \citet{paczolay2024sample} introduced a variant of STORM-PG that makes use of defensive samples. This method retains the sample complexity of $\cO(\epsilon^{-3/2})$, while relaxing the standard assumption of bounded IW variance.
Beyond gradient reuse, variance reduction can also be achieved by reusing \emph{off-policy} (past) trajectories. While this approach is conceptually more natural, it has received relatively little theoretical attention. 
\citet{metelli2018policy} proposed reusing trajectories to compute multiple successive stochastic gradient estimates, but with no convergence guarantees. Similarly, \citet{papini2024policy} leveraged past trajectories collected under multiple policy parameterizations, achieving a sample complexity of $\cO(\epsilon^{-5/3})$, under quite demanding technical assumptions.
More recently, \citet{lin2025reusing} proposed RNPG, a multiple IW corrected natural PG method~\citep{kakade2001natural} exhibiting asymptotic convergence only under strong assumptions, reusing \emph{all} past data to estimate the gradient, but just \emph{on-policy} data to estimate the Fisher information matrix.
Table~\ref{tab:comparison-complexities} summarizes the key assumptions, data reused, memory requirements, and the sample complexity of the surveyed PG methods.

\textbf{Original Contributions.}~~Given the scenario above, in this paper, we address the following fundamental question:
\vspace{-.3cm}
\begin{center}
\emph{Can the reuse of past off-policy trajectories lead to provable improvements in the sample complexity of PGs?}
\end{center}
\vspace{-.3cm}
We answer this question positively. We introduce the \algnameshort (\algname) algorithm, which estimates the policy gradient using a novel \emph{power mean} (PM) corrected version of the \emph{multiple importance weighting} (MIW) estimator. Our estimator, called \emph{multiple PM} (\estimname), effectively mixes fresh on-policy data with trajectories collected from the most recent $\win$ iterations. The convergence guarantees for \algnameshort are established through a novel analysis of the concentration properties of our estimator. Our main contributions are:
\begin{itemize}[noitemsep,topsep=-3pt,leftmargin=*]
    \item \textbf{Challenges of Data Reuse:} We formalize a learning setting that uses previously collected (off-policy) trajectories from the most recent $\win$ iterations together with a batch of fresh (on-policy) trajectories. We identify the computational and statistical challenges, primarily due to the bias introduced by \emph{data reuse} (Section~\ref{sec:biases}). 
    \item \textbf{Novel Estimator and Algorithm:} To address these issues, we design the novel \estimname estimator applying a PM correction~\citep{metelli2021subgaussian} to the MIW estimator and, based on it, we propose \algnameshort (\algname), a novel PG algorithm (Section~\ref{sec:algo}). 
    \item \textbf{Theoretical Analysis:} We derive high-probability concentration bounds for the \estimname estimator, combining martingale concentration via Freedman's inequality with a covering argument. This technical approach
    is essential to simultaneously control the bias of \emph{data reuse} and the inherent bias of the \emph{PM correction} (Section~\ref{sec:conc}).
    \item \textbf{Sample Complexity:} We prove that when reusing \emph{all} past trajectories, \algnameshort achieves a sample complexity of $\widetilde{\cO}(\epsilon^{-1})$ for reaching an $\epsilon$-approximate stationary point under standard assumptions. Furthermore, we show that in the \emph{partial} reuse setting, where the trajectories coming from the $\omega$ most recent iterations are used, the rate becomes $\widetilde{\cO}(\epsilon^{-2} \win^{-1})$. These results provide the \emph{first rigorous finite-time theoretical evidence} that trajectory reuse can accelerate PG convergence (Section \ref{sec:sc}).
    \item \textbf{Experiments:} We propose and empirically evaluate a practical variant of \algnameshort that adaptively weights off-policy data based on an estimate of the dissimilarity between policies.
    Our experiments confirm that even partial reuse yields significant performance gains over variance-reduced PG baselines (Section~\ref{sec:experiments}).
\end{itemize}
In Appendix~\ref{apx:pb}, we discuss how all the results extend to parameter-based PGs~\citep{sehnke2010parameter}.
\section{Background and Notation} \label{sec:pg}

{\thinmuskip=3mu
  \medmuskip=3mu \thickmuskip=3mu\textbf{Notation.}~~For $n,m\in\mathbb{N}$ with $n \ge m$, we denote $\dsb{m,n} \coloneqq \{ m, m+1,  \, \ldots , \, n \}$ and $\dsb{n} \coloneqq \dsb{1,n}$. We denote with $\Delta({\Xs})$ the set of probability measures over the measurable set $\Xs$. For $P \in \Delta(\mathcal{X})$, we denote with $p$ its density function \wrt a reference measure that we assume to exist whenever needed. For $P,Q \in \Delta(\mathcal{X})$, we denote that $P$ is absolutely continuous \wrt $Q$ as $P \ll Q$. If $P \ll Q$, the $\chi^{2}$-divergence is defined as $\chi^{2}(P \| Q) \coloneqq \left(\int_{\mathcal{X}} p(x)^{2} q(x)^{-1} \de x\right) - 1$.}%

\textbf{Lipschitz Continuous and Smooth Functions.}~~A function $f: \mathcal{X} \subseteq \mR^d \rightarrow \Reals$ is $L_1$-\emph{Lipschitz continuous} ($L_1$-LC) if $|f(\mathbf{x}) - f(\mathbf{x}') |\le L_1 \|\mathbf{x}-\mathbf{x}'\|_2$ for every $\mathbf{x},\mathbf{x}' \in \mathcal{X}$.  Similarly, $f$ is $L_2$-\emph{Lipschitz smooth} ($L_2$-LS) if it is continuously differentiable and its gradient $\nabla f$ is $L_2$-LC, i.e.,   $\|\nabla f(\mathbf{x}) - \nabla  f(\mathbf{x}') \|_2\le L_2 \|\mathbf{x}-\mathbf{x}'\|_2$ for every $\mathbf{x},\mathbf{x}' \in \mathcal{X}$.

{\thinmuskip=3mu
  \medmuskip=3mu \thickmuskip=3mu
\textbf{Markov Decision Processes.}~~A Markov Decision Process~\citep[MDP,][]{puterman1990markov} is represented by $\mathcal{M} \coloneqq \left( \mathcal{S}, \mathcal{A}, p, r, \rho_0, \gamma, T \right)$, where $\mathcal{S} \subseteq \mR^{\ds}$ and $\mathcal{A} \subseteq \mR^{\da}$ are the measurable state and action spaces; $p: \mathcal{S} \times \mathcal{A} \xrightarrow[]{} \Delta(\mathcal{S})$ is the transition model, where $p(\vs' | \vs, \va)$ is the probability density of landing in state $\vs'\in \mathcal{S}$ by playing action $\va\in \mathcal{A}$ in state $\vs\in \mathcal{S}$; $r: \mathcal{S} \times \mathcal{A} \xrightarrow[]{} [-R_{\max}, R_{\max}]$ is the reward function, where $r(\vs, \va)$ is the reward the agent gets when playing action $\va$ in state $\vs$; $\rho_0 \in \Delta(\mathcal{S})$ is the initial-state distribution; $\gamma \in [0, 1]$ is the discount factor. A trajectory $\tau = \left( \vs_{\tau,1}, \va_{\tau,1},\dots, \vs_{\tau,T}, \va_{\tau,T} \right)$ of length $T \in \mathbb{N} \cup \{+\infty\}$ is a sequence of $T$ state-action pairs. In the following, we refer to $\cT$ as the set of all trajectories. The \emph{discounted return} of a trajectory $\tau \in \cT$ is given by $R(\tau) \coloneqq \sum_{t=1}^{T} \gamma^{t-1} r(\vs_{\tau,t}, \va_{\tau,t})$. We let $\gamma=1$ only when $T<+\infty$.}%

\textbf{Policy Gradients.}~~Consider a \textit{parametric stochastic policy} $\pi_{\vtheta}: \mathcal{S} \rightarrow \Delta(\mathcal{A})$, where $\vtheta \in \Theta$ is the parameter vector belonging to the parameter space 
$\Theta \subseteq \mR^{\dt}$. The policy is used to sample actions $\va_t \sim \pi_{\vtheta}(\cdot|\vs_t)$ to be played in state $\vs_t$ for \emph{every step} $t$ of interaction. The performance of $\pi_{\vtheta}$ is assessed via the \emph{expected return} $J: \Theta \rightarrow \mR$, defined as $
    J(\vtheta) \coloneqq \E_{\tau \sim p_{\vtheta}} \left[ R(\tau) \right]$, 
where $p_{\vtheta}(\tau) \coloneqq \rho_0(\vs_{\tau, 1}) \prod_{t=1}^{T} \pi_{\vtheta}(\va_{\tau,t} | \vs_{\tau, t}) p(\vs_{\tau, t+1} | \vs_{\tau, t}, \va_{\tau, t})$
is the density function of trajectory $\tau$ induced by $\pi_{\vtheta}$. 

\textbf{On-Policy Estimators.}~~If $J(\vtheta)$ is differentiable w.r.t.~$\vtheta$, PG methods~\citep{peters2008reinforcement} update the parameter $\vtheta$ via gradient ascent $\vtheta_{k+1} \xleftarrow{} \vtheta_{k} + \zeta_{k} \widehat{\nabla} J(\vtheta_{k})$, where $\zeta_{k}  >0$ is the \emph{step size} and $\widehat{\nabla} J(\vtheta)$ is an estimator of $\nabla_{\vtheta} J(\vtheta)$. In particular, $\estim{}J(\vtheta)$ often takes the form
$\estim{}J(\vtheta) \coloneqq \frac{1}{N} \sum_{j=1}^{N} \singleEstim{}_{\vtheta}(\tau_{j})$,
being $\singleEstim{}_{\vtheta}(\tau)$ a single-trajectory gradient estimator and $N$ the number of independent trajectories $\{\tau_{j}\}_{j=1}^{N}$ collected with policy $\pi_{\vtheta}$ (\ie $\tau_{j}\sim p_{\vtheta}$), called \emph{batch size}.
Classical on-policy gradient estimators are REINFORCE~\citep{williams1992simple} and GPOMDP~\citep{baxter2001infinite}.
They are unbiased and their variance reduces with the batch size $N$ \citep{papini2022smoothing}.

\textbf{Off-Policy Estimators.}~~The gradient $\nabla J(\vtheta)$ can also be estimated by employing independent trajectories $\{\tau_{j}\}_{j=1}^{N}$ collected via a \emph{behavioral} policy with parameter ${\vtheta_{b}}$. Under the assumption that $\pi_{\vtheta}(\cdot | \vs) \ll \pi_{\vtheta_{b}}(\cdot | \vs), \forall \vs \in \cS$, the \emph{single off-policy gradient estimator}~\citep{owen2013monte} is defined as:
\begin{align*}
    \estim{SIW} J(\vtheta) \coloneqq \frac{1}{N} \sum_{j=1}^{N} \frac{p_{\vtheta}(\tau_{j})}{p_{\vtheta_{b}}(\tau_{j})} \singleEstim{}_{\vtheta}(\tau_{j}),
\end{align*}
where $\tau_{j} \sim p_{\vtheta_{b}}$ and $\frac{p_{\vtheta}(\tau)}{p_{\vtheta_{b}}(\tau)}$ is the \emph{importance weight}~\citep[IW,][]{owen2000safe} of the trajectory $\tau \in \mathcal{T}$, defined as $
    \frac{p_{\vtheta}(\tau)}{p_{\vtheta_{b}}(\tau)} = \prod_{t=1}^{T}\frac{\pi_{\vtheta}(\va_{\tau,t} | \vs_{\tau,t})}{\pi_{\vtheta_{b}}(\va_{\tau,t} | \vs_{\tau,t})}$.
We call $\estim{SIW} J(\vtheta)$ the \emph{single importance weighting} (SIW) estimator. The SIW estimator is unbiased whenever $\vtheta \in \Theta$ is \emph{statistically independent} of $\vtheta_b$ and $\{\tau_{j}\}_{j=1}^{N}$. Its variance depends on the batch size $N$ and on the variance of the IWs~\citep{cortes2010learning}, i.e., the $\chi^{2}$-divergence between the two trajectory distributions:
\begin{align*}
    \Var_{\tau \sim p_{\vtheta_{b}}}\left[ \frac{p_{\vtheta}(\tau)}{p_{\vtheta_{b}}(\tau)} \right] = \chi^{2}(p_{\vtheta} \| p_{\vtheta_{b}}).
\end{align*}
Consider $m \in \mN$ behavioral policies with parameters $\{\vtheta_{i}\}_{i=1}^{m}$ and suppose to have collected $N_{i}$ independent trajectories $\{\tau_{i,j}\}_{j=1}^{N_{i}}$ for each $\vtheta_{i}$  (\ie $\tau_{i,j} \sim p_{\vtheta_{i}}$) with $i \in \dsb{m}$.  Let $\{\beta_{i}\}_{i=1}^{m}$ be a \emph{partition of the unit}, \ie $\beta_{i} \ge 0$ and $\sum_{i=1}^{m}\beta_{i}(\tau) = 1$ for every $i \in \dsb{m}$ and $\tau \in \mathcal{T}$. Assuming $\beta_{i} \pi_{\vtheta}(\cdot | \vs) \ll \pi_{\vtheta_{i}}(\cdot | \vs)$ the \emph{multiple off-policy gradient estimator}~\citep{veach1995optimally} is defined as follows:\footnote{The subscript $m$ denotes the number of behavioral policies whose trajectories are used by the estimator.}
\begin{align*}
    \estim{MIW}_{m} J(\vtheta) 
    \coloneqq \sum_{i=1}^{m} \frac{1}{N_{i}} \sum_{j=1}^{N_{i}} \beta_{i}(\tau_{i,j}) \frac{p_{\vtheta}(\tau_{i,j})}{p_{\vtheta_{i}}(\tau_{i,j})} \singleEstim{}_{\vtheta}(\tau_{i,j}), 
\end{align*}
We refer to this gradient estimator as the \emph{multiple importance weighting} (MIW) one. 
It is unbiased whenever $\vtheta$, $\{\vtheta_i\}_{i=1}^{m}$, $\{\{\tau_{i,j}\}_{j=1}^{N_i}\}_{i=1}^{m}$, and  $\{\beta_i\}_{i=1}^{m}$ are all \emph{statistically independent}, apart from $\vtheta_i$ and $\{\tau_{i,j}\}_{j=1}^{N_i}$ for every $i \in \dsb{m}$.\footnote{As we will see in Section~\ref{sec:algo}, this independence is violated when trajectories are reused to estimate the policy gradient.} The variance of the MIW estimator depends on the choice of the coefficients $\{\beta_i\}_{i=1}^{m}$ and on the batch sizes $\{N_i\}_{i=1}^{m}$. 
A common choice of 
$\{\beta_i\}_{i=1}^{m}$ 
is the \emph{balance heuristic}~\citep[BH,][]{veach1995optimally} in which $\beta_{i}^{\text{BH}}(\tau) \coloneqq \frac{N_i p_{\vtheta_{i}}(\tau)}{\sum_{l=1}^{m} N_l p_{\vtheta_{l}}(\tau)}$ for every $\tau \in \mathcal{T}$ and $i \in \dsb{m}$,
leading to the BH estimator:\footnote{Note that the coefficients $\beta_i^{\text{BH}}$ depend on all $\{\vtheta_i\}_{i=1}^{m}$.}
\begin{align*}
    \estim{BH}_m J(\vtheta) \coloneqq \frac{1}{M} \sum_{i=1}^{m} \sum_{j=1}^{N_i} \frac{ p_{\vtheta}(\tau_{i,j})}{\sum_{l=1}^{m}  \frac{N_l}{M} p_{\vtheta_{l}}(\tau_{i,j})} \singleEstim{}_{\vtheta}(\tau_{i,j}),
\end{align*}
where $M = \sum_{l=1}^{m} N_l$. The BH estimator is unbiased 
under the same conditions presented for the MIW one.
Whenever $\vtheta = \vtheta_{h}$ for some $h \in \dsb{m}$
the BH estimator enjoys the \emph{defensive} property~\citep{owen2013monte}, \ie the IWs are \emph{bounded} by a finite constant, which in the case of the BH is $\frac{M}{N_h}$. Moreover, the BH estimator is proven to enjoy nearly-optimal variance~\citep[][Theorem 1]{veach1995optimally}.

\section{Challenges of Trajectory Reuse}\label{sec:biases}
In this section, we discuss the technical challenges of designing a PG estimator that reuses past trajectories due to the \emph{statistical dependence} that originates from the sequential update of the parameters. Consider the following scenario. At iteration $k \in \Nat$, let $\win \in \Nat$ denote the \emph{window size}, \ie the number of most recent iterations from which trajectories are reused, let $\win_{k} \coloneqq \min \{\win, k\}$ and $k_0\coloneqq k - \omega_k + 1$. We make use of the trajectories $\{\{\tau_{i,j}\}_{j=1}^{N_i}\}_{i=k_0}^{k}$ and parameters $\{\vtheta_i\}_{i=k_0}^{k}$ collected over the most recent $\win_k$ iterations to estimate the policy gradient $\nabla J(\vtheta_{k})$ at the current parameter $\vtheta_{k}$. This is, for instance, the case when we employ the MIW estimator $\estim{MIW}_{\omega_k} J(\vtheta_{k})$ presented in Section \ref{sec:pg}. Such an estimate is then employed to update the policy parameters using, for instance, the gradient ascent rule  $\vtheta_{k+1} \leftarrow \vtheta_{k} + \zeta_{k} \estim{MIW}_{\omega_k} J(\vtheta_{k})$, where $\zeta_{k}>0$ is the step size. Thus, $\vtheta_{k+1}$ is a random variable statistically dependent on the history of parameters and trajectories observed so far, i.e., $\mathcal{H}_k = (\vtheta_1,\{\tau_{1,j}\}_{j=1}^{N_1},\dots,\vtheta_{k},\{\tau_{k,j}\}_{j=1}^{N_k})$. This violates the condition that the parameters $\{\vtheta_i\}_{i=1}^{k}$ and $\vtheta_k$ are independent, making the estimator  $\estim{MIW}_{\omega_k} J(\vtheta_{k})$ possibly biased.
To highlight the sources of bias, we make use of some examples built on the graphical model of Figure \ref{fig:gm-main}.

\begin{figure}[t]
    \centering
    \resizebox{.7\linewidth}{!}{\includegraphics{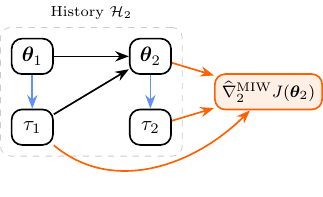}}
    \caption{Update scheme up to $k=2$. Arrows denote: (\textcolor{ibmUltramarine}{\rule[0.5ex]{0.25cm}{1pt}}) trajectory sampling, (\textcolor{black}{\rule[0.5ex]{0.25cm}{1pt}}) update dynamics, and (\textcolor{ibmOrange}{\rule[0.5ex]{0.25cm}{1pt}}) estimator inputs.}
    \label{fig:gm-main}
    \vspace{0.35cm}
\end{figure}

\textbf{Target Bias.}~~We start showing that when the target parameter $\vtheta_{k}$ depends on the history $\mathcal{H}_{k}$, the MIW estimator may be biased,\footnote{This \emph{target bias} was also highlighted in~\citep{eckman2018green,lin2025reusing}.} even in the simplest case when the coefficients $\beta_i$ for $i \in \dsb{k_0,k}$ are chosen as deterministic constants.

\begin{restatable}[History-dependent Target $\vtheta_k$]{fact}{targetBias}{} \label{ex:bias-1}
    Consider $k=2$ (and $\omega \ge 2$), $N_1=N_2=1$, with $\tau_{1} \sim p_{\vtheta_{1}}$ and $\tau_{2} \sim p_{\vtheta_{2}}$. If $\vtheta_{2}$ depends on $\vtheta_1$ and $\tau_1$ (e.g., with a gradient ascent update), then the MIW 
    estimator $\estim{MIW}_2 J(\vtheta_{2})$ with $\beta_1(\tau_1) = \beta_2(\tau_2) = 1/2$ may be biased, i.e., $\E[\estim{MIW}_2 J(\vtheta_{2})|\vtheta_2] \neq \nabla J(\vtheta_2)$. 
    Indeed, consider the estimator's form:
    \begin{align*}
        \estim{MIW}_2 J(\vtheta_{2}) =  \frac{1}{2}\frac{p_{\vtheta_{2}}(\tau_{1})}{p_{\vtheta_{1}}(\tau_{1})} \singleEstim{}_{\vtheta_{2}}(\tau_{1}) + \frac{1}{2} \singleEstim{}_{\vtheta_{2}}(\tau_{2}).
    \end{align*}
    Following the graphical model in Figure \ref{fig:gm-main}, we decompose the joint probability as $p(\vtheta_1,\tau_1,\vtheta_2,\tau_2) = p(\vtheta_1)p_{\vtheta_1}(\tau_1)p(\vtheta_2|\vtheta_1,\tau_1)p_{\vtheta_2}(\tau_2)$. In Appendix \ref{apx:bias}, we prove that the second addendum is unbiased, i.e.,  $ \E[\singleEstim{}_{\vtheta_{2}}(\tau_{2})|\vtheta_2] = \nabla J(\vtheta_2)$, being the \emph{on-policy} estimator, whereas the first addendum may be biased, i.e., $ \E\left[\frac{p_{\vtheta_{2}}(\tau_{1})}{ p_{\vtheta_{1}}(\tau_{1})} \singleEstim{}_{\vtheta_{2}}(\tau_{1})|\vtheta_2\right] \neq \nabla J(\vtheta_2)$, since the distribution of $\tau_1$ is not conditionally independent of $\vtheta_2$, given $\vtheta_1$, i.e., $p(\tau_1|\vtheta_1,\vtheta_2) \neq p(\tau_1|\vtheta_1)$. This is because $\vtheta_2$ is dependent on $\vtheta_1$ and $\tau_1$ (e.g., with a gradient ascent update) and, consequently, \emph{leaks} information on $\tau_1$.
\end{restatable}

The target bias emerges because we are evaluating the estimator $\estim{MIW}_2 J(\cdot)$ \emph{at the same target parameter} $\vtheta_2$, which is itself the result of the learning process. In contrast, 
if we evaluate $\estim{MIW}_2 J(\cdot)$ at a parameter $\target \in \Theta$ that is independent of the history $\mathcal{H}_2$, the estimator is unbiased.
\begin{restatable}[History-independent Target $\target$]{fact}{targetBiasCont}{} \label{ex:bias-11}
    In the same setting of Fact \ref{ex:bias-1}, the MIW 
    estimator $\estim{MIW}_2 J(\target)$ with $\beta_1(\tau_1) = \beta_2(\tau_2) = 1/2$ is unbiased, i.e., $\E[\estim{MIW}_2 J(\target)|\target] = \nabla J(\target)$, for every target parameter $\target \in \Theta$ chosen independently of the history $\mathcal{H}_2$. 
    Consider the estimator's form:
    \begin{align*}
        \estim{MIW}_2 J(\target) =  \frac{1}{2}\frac{p_{\target}(\tau_{1})}{p_{\vtheta_{1}}(\tau_{1})} \singleEstim{}_{\target}(\tau_{1}) + \frac{1}{2} \frac{p_{\target}(\tau_{2})}{p_{\vtheta_{2}}(\tau_{2})} \singleEstim{}_{\target}(\tau_{2}).
    \end{align*}
    Following the graphical model in Figure \ref{fig:gm-main}, we decompose the joint probability as $p(\vtheta_1,\tau_1,\vtheta_2,\tau_2,\target) = p(\vtheta_1)p_{\vtheta_1}(\tau_1)p(\vtheta_2|\vtheta_1,\tau_1)p_{\vtheta_2}(\tau_2)p(\target)$. In Appendix \ref{apx:bias}, we prove that both addenda are unbiased since the first addendum depends on $(\vtheta_1,\tau_1)$ but not on $(\vtheta_2,\tau_2)$, and the second one vice versa, being, then, $\target$ independent of both. 
\end{restatable}
It is immediate to generalize this by realizing that whenever the target parameter $\target \in \Theta$ is independent of the history $\mathcal{H}_k$ and the coefficients $\{\beta_i\}_{i=k_0}^k$ are deterministic constants, the MIW estimator $\estim{MIW}_{\omega_k} J(\target)$ is unbiased.
In our analysis, we will address the \emph{target bias} by providing concentration guarantees that are \emph{uniform} w.r.t. the target parameter, applying a covering argument over $\overline{\Theta}_{k} \subseteq \Theta$, 
a suitable subset of the parameter space that almost surely contains the actual history-dependent target parameter, i.e., $\vtheta_{k} \in \overline{\Theta}_{k}$. However, the MIW with deterministic coefficients $\beta_{i}$, differently from BH, does not enjoy the \emph{defensive} property, leading to an estimator which might be unbounded.

\textbf{Cross-time Bias.}~~We now highlight another source of bias that originates from a choice of the coefficients $\{\beta_i\}_{i=k_0}^k$ that depend on the history $\mathcal{H}_{k}$ even when the target parameter is history-independent $\target \in \Theta$. 
In particular, 
we show that such a bias, which we call \emph{cross-time} bias, originates when employing the BH.
\begin{restatable}{fact}{crossTimeBias}{} \label{ex:bias-2}
      Consider $k=2$ (and $\omega \ge 2$), $N_1=N_2=1$, with $\tau_{1} \sim p_{\vtheta_{1}}$ and $\tau_{2} \sim p_{\vtheta_{2}}$. If $\vtheta_{2}$ depends on $\vtheta_1$ and $\tau_1$ (e.g., with a gradient ascent update), then the BH gradient estimator $\estim{BH}_2 J(\target)$ may be biased, i.e., $\E[\estim{BH}_2 J(\target)|\target] \neq \nabla J(\target)$, for a target parameter $\target \in \Theta$ chosen independently of the history $\mathcal{H}_2$. Consider the form of the estimator:
    \begin{align*}
         \estim{BH}_2 J(\target) & = \frac{p_{\target}(\tau_{1}) \singleEstim{}_{\target}(\tau_{1})}{p_{\vtheta_{1}}(\tau_{1}) + p_{\vtheta_{2}}(\tau_{1})}  + \frac{p_{\target}(\tau_{2}) \singleEstim{}_{\target}(\tau_{2})}{p_{\vtheta_{1}}(\tau_{2}) + p_{\vtheta_{2}}(\tau_{2})} .
    \end{align*}
     Following the graphical model in Figure \ref{fig:gm-main}, we decompose the joint probability as $p(\vtheta_1,\tau_1,\vtheta_2,\tau_2,\target) = p(\vtheta_1)p_{\vtheta_1}(\tau_1)p(\vtheta_2|\vtheta_1,\tau_1)p_{\vtheta_2}(\tau_2)p(\target)$. In Appendix \ref{apx:bias}, we prove that the estimator introduces a bias since the current parameter $\vtheta_2$ is a random variable depending on the previously collected trajectory $\tau_1$ and not only on the previous parameter $\vtheta_1$ (e.g., with a gradient ascent update), i.e., $p(\vtheta_2|\vtheta_1,\tau_1) \neq p(\vtheta_2|\vtheta_1)$.
\end{restatable}
The underlying reason why the BH fails to achieve unbiasedness is quite technical and becomes apparent from the derivation in Appendix~\ref{apx:bias}. 
Intuitively, 
it emerges because the coefficients $\beta_i^{\text{BH}}$ for $i \in \dsb{k_0,k}$ also depend on future parameters $\vtheta_l$ with $l \in \dsb{i+1,k}$, breaking the time coherence of the estimator and, thus, introducing what we refer to as cross-time dependence.\footnote{Quantifying or controlling cross-time bias seems to pose significant technical challenges. Therefore, in Section \ref{sec:algo}, we focus on estimators that avoid this bias by design.} Finally, we highlight that the BH estimator suffers from computational and memory limitations as it requires evaluating \emph{every trajectory under every policy}, i.e., computing 
$p_{\vtheta_l}(\tau_{i,j})$ for every $l,i \in [k_0,k]$ and $j \in \dsb{N_i}$. When $N_i = N$ for every $i \in \dsb{k_0,k}$, this leads 
to evaluating $2N\omega_k$ new likelihoods per-iteration (i.e., new collected trajectories under old policies and old trajectories under the current policy)
and, more critically, it requires storing the most recent $\omega_k$ policies, which is problematic for large parameter spaces~\citep{deepseekmath2024shao}.

\RestyleAlgo{ruled}
\begin{algorithm}[t!]
    \caption{\algnameshort.}\label{alg:rpg}
    \SetKwInOut{Input}{Input}
    \small
    \Input{iterations $K\in \Nat$, batch sizes $\{N_k\}_{k=1}^{K}$, learning rates $\{\zeta_{k}\}_{k=1}^{K}$, coefficients $\{(\alpha_{i,k},\lambda_{i,k})\}_{i \in \dsb{k},k \in \dsb{K}}$ initial parameter $\vtheta_{1} \in \Theta$, window size $\omega \in \Nat$.}
    
    \For{$k \in \dsb{K}$}{
        Collect $N_k$ trajectories $\{ \tau_{k,j} \}_{j=1}^{N_k}$ with policy $\pi_{\vtheta_{k}}$

        Update the policy parameter: 
        
        $\vtheta_{k+1} \leftarrow \vtheta_{k} + \zeta_{k} \estim{MPM}_{\omega_k} J(\vtheta_{k})$
    }
    \textbf{Return} $\vtheta_{\text{OUT}} \in \{\vtheta_{i}\}_{i=1}^{K}$ chosen uniformly at random.
\end{algorithm}

\section{MPM Estimator and Algorithm}\label{sec:algo}
In this section, we design the \emph{multiple power mean} (MPM) estimator, a novel PG estimator that reuses the trajectories collected with the most recent $\omega_k$ policies. Then, we propose our PG algorithm \algnameshort (\algname, Algorithm \ref{alg:rpg}) built upon the MPM estimator.

We aim to obtain an estimator that: ($i$) eliminates the \emph{cross-time} bias, as for the MIW estimator with deterministic coefficients $\{\beta_i\}_{i=k_0}^k$ and ($ii$) enjoys the \emph{defensive} property, \ie with IWs that are bounded, as for the BH estimator. For ($i$), we select the coefficients $\beta_{i}(\tau) = \alpha_{i,k}$ for every $\tau \in \mathcal{T}$ as deterministic constants defined in terms of the iteration count $k$ and on the index $i \in \dsb{k_0,k}$ only. For ($ii$), we borrow the idea of a \emph{power mean} (PM) correction of the IW \citep{metelli2021subgaussian,MetelliRR25}. 
This way, we replace the vanilla IW {$\frac{p_{\vtheta}(\tau_{i,j})}{p_{\vtheta_{i}}(\tau_{i,j})}$} with the PM-corrected one: 
{$ \left((1-\lambda_{i,k}) \frac{p_{\vtheta_{i}}(\tau_{i,j})}{p_{\vtheta}(\tau_{i,j})} + \lambda_{i,k}\right)^{-1}$},
representing the harmonic mean (i.e., power mean with exponent $-1$) between the vanilla IW and the constant $1$, weighted by the deterministic coefficient $\lambda_{i,k} \in [0,1]$. Note that the PM-corrected weight is bounded by $1/{\lambda_{i,k}}$.\footnote{For the coefficients $\alpha_{i,k}$ and $\lambda_{i,k}$, we explicitly highlight the dependence on the iteration count $k$.} 
Then, our Multiple PM (\estimname) estimator for a target 
$\vtheta \in \Theta$ is given by: 
{
\begin{align*}
    \estim{\estimname}_{\win_{k}} J(\vtheta) \coloneqq \sum_{i=k_{0}}^{k} \frac{1}{N_i} \sum_{j=1}^{N_i} \frac{\alpha_{i,k} \ \singleEstim{}_{\vtheta}(\tau_{i,j})}{(1-\lambda_{i,k}) \frac{p_{\vtheta_{i}}(\tau_{i,j})}{p_{\vtheta}(\tau_{i,j})} + \lambda_{i,k}}.
\end{align*}}%
Notice that in the PG scenario, we will instance the estimator as $\estim{\estimname}_{\win_{k}} J(\vtheta_k)$.
The \estimname estimator offers significant computational advantages \wrt the BH one. Specifically, it requires evaluating the likelihood of each trajectory $\tau_{i,j}$ \wrt the policy under which it was collected, i.e., $p_{\vtheta_i}(\tau_{i,j})$, and \wrt the current target policy, i.e., $p_{\vtheta_k}(\tau_{i,j})$, only for $j \in \dsb{N_i}$. This leads, when $N_i = N$ for every $i \in \dsb{k_0,k}$, to evaluating $N \win_k$ new likelihoods.
Crucially, 
as for the MIW with constant 
$\{\beta_i\}_{i=k_0}^k$, we do not need to store all the $\omega_k$ most recent policies.
Clearly, even with a history-independent target $\target \in \Theta$, the presence of the correcting term $\lambda_{i,k}>0$ in the MPM estimator introduces another {bias} that we call \emph{PM bias}. Nevertheless, differently from the cross-time bias of the BH, the PM bias is easily manageable thanks to the PM correction's properties
with a careful choice of $\lambda_{i,k}$~\citep{metelli2021subgaussian}.
Table \ref{tab:est-comp} 
compares the studied estimators, while
Algorithm \ref{alg:rpg} provides the pseudocode of \algnameshort, leveraging the MPM estimator.

\begin{table}[t!]
    \centering

\renewcommand{\arraystretch}{1.5}
\small
\resizebox{\linewidth}{!}{\begin{tabular}{|l|ccc|c|cc|c|}
    \hline
    \multicolumn{1}{|c|}{\multirow{2}{*}[0.24cm]{\textbf{Estimator}}} & 
    \multicolumn{3}{c|}{\textbf{Biases}} & 
    \multirow{2}{*}[-0.75cm]{\rotatebox[origin=c]{90}{\textbf{Bounded}}} &
    \multicolumn{2}{c|}{\textbf{Storage}} &
    \multirow{2}{*}[-0.25cm]{\rotatebox[origin=c]{90}{\makecell{\textbf{Per-iterate time}\\\textbf{complexity}}}}
    \\
    & 
    \rotatebox[origin=c]{90}{{Target}} &
    \rotatebox[origin=c]{90}{{Cross-time}} & 
    \rotatebox[origin=c]{90}{{PM}} & 
    &
    \rotatebox[origin=c]{90}{{\# Policies}} &
    \rotatebox[origin=c]{90}{{\phantom{x}\# Trajectories\phantom{x}}} &
    \\
    \hline\hline
    MIW (constant $\beta_i$) &  \yes & \no & \no & \nos &  $1$ & $N\omega_k$ & $N \omega_k
    $  \\
    \hline
    BH &  \yes &  \yes &  \no &  \yess &  $\omega_k$ &  $N\omega_k$ &  $2 N\omega_k$ \\
    \hline
    \rowcolor{ibmIndigo!30} \estimname (\textbf{ours})  &  \yes &  \no &  \yes &  \yess &  $1$ &  $N\omega_k$ &  $N\omega_k$\\
    \hline
\end{tabular}}
    
    \vspace{0.2cm}

    \caption{
    Comparison between the estimators in terms of whether they are affected by biases, whether they are bounded, the number of policies and trajectories to be stored, and the per-iteration time complexity (i.e., number of new likelihood evaluations).
    }
    \label{tab:est-comp}
\end{table}

\section{MPM Estimator Concentration}\label{sec:conc}
In this section, we derive high-probability concentration bounds for the proposed \estimname estimator.
First, we analyze the estimator $\estim{\estimname}_{\win_{k}} J(\target)$ evaluated at a target parameter $\target \in \Theta$ that is independent of the history $\mathcal{H}_k$, leading to a bound on the quantity $\| \estim{\estimname}_{\win_{k}} J(\target) - \nabla J(\target) \|_{2}$. This allows us to postpone the treatment of the \emph{target bias} and to focus on the \emph{PM bias}, enabling a meaningful choice of the parameters $\alpha_{i,k}$ and $\lambda_{i,k}$.
Second, building upon the latter result, we control the target bias by analyzing the
estimator $\estim{\estimname}_{\win_{k}} J(\vtheta_k)$ evaluated at 
$\vtheta_k$, which is the result of the learning process (thus, depending on $\mathcal{H}_k$). To this end, we study the concentration of the uniform quantity $\sup_{\target \in \overline{\Theta}_k} \| \estim{\estimname}_{\win_{k}} J(\target) - \nabla J(\target) \|_{2}$ where $\vtheta_k \in \overline{\Theta}_k \subseteq \Theta$ almost surely, which allows controlling the quantity of interest $\| \estim{\estimname}_{\win_{k}} J(\vtheta_k) - \nabla J(\vtheta_k) \|_{2}$. 
For the sake of the analysis, we will consider 
$N_k = N$ for every $k \in \Nat$.



\textbf{History--independent Target $\target$.}~~
We start with the case in which the target 
$\target \in \Theta$ is independent of the history $\mathcal{H}_k$.
We first state two core assumptions.
\begin{ass}[Regularity of $\log \pi_{\vtheta}$] \label{asm:log-pi-lc-ls}
    There exist two constants $\LCpi, \LSpi \in \mR_{>0}$ such that, for every $\vtheta \in \Theta$ and for every $\va \in \cA$ and $\vs \in \cS$:
    \begin{align*}
        \left\| \nabla \log \pi_{\vtheta}(\va | \vs) \right\|_{2} \le \LCpi \; \text{and} \; \left\| \nabla^{2} \log \pi_{\vtheta}(\va | \vs) \right\|_{2} \le \LSpi.
    \end{align*}
\end{ass}
It is worth noting that such regularity conditions on the policy class are common in the PG literature~\citep{papini2018stochastic,xu2019sample,xu2020improved,yuan2020stochastic}.
These conditions are needed to ensure the following result.
\begin{restatable}[Bounded Single-Trajectory On-Policy Gradient Estimator]{lemma}{boundedG}\label{lem:bounded-G-G2}
    Under Assumption~\ref{asm:log-pi-lc-ls}, there exist two constants $\gradbfree, \hessbfree \in \mathbb{R}_{>0}$ such that:
    \begin{align*}
        \sup_{\vtheta,\tau \in \Theta \times \cT} \| \singleEstim{}_{\vtheta}(\tau) \|_{2} \le \gradbfree \;\; \text{and} \; \sup_{\vtheta,\tau \in \Theta \times \cT} \| \nabla \singleEstim{}_{\vtheta}(\tau) \|_{2} \le \hessbfree.
    \end{align*}
\end{restatable}
In the proof, we characterize $\gradbfree$ and $\hessbfree$ for both REINFORCE and GPOMDP in terms of $\LCpi$ and $\LSpi$, respectively. Note that, thanks to Assumption \ref{asm:log-pi-lc-ls}, it is guaranteed that the PG $\nabla J(\vtheta)$ is $L_{2,J}$-LS (Lemma \ref{asm:J-lc-ls}).
Next, we introduce the last assumption needed for this section.
\begin{ass}[Bounded $\chi^{2}$ Divergence] \label{asm:bounded-renyi}
    There exists a constant $D \in \mR_{\ge 1}$ s.t.:
    $\sup_{\vtheta_{1}, \vtheta_{2} \in \Theta} \chi^{2}\left(p_{\vtheta_{1}} \| p_{\vtheta_{2}} \right) \le D - 1$.
\end{ass}
Assumption~\ref{asm:bounded-renyi} makes the variance of the vanilla IWs uniformly bounded, a standard assumption in the analysis of variance-reduced PGs~\citep{papini2018stochastic,xu2019sample,xu2020improved,yuan2020stochastic}. Moreover, as shown by \citet{cortes2010learning}, this assumption holds, for instance, with univariate Gaussian policies with $\sigma_{1} < \sqrt{2} \sigma_{2}$, being $\sigma_{1}$ and $\sigma_{2}$ the standard deviations of $\pi_{\vtheta_{1}}$ and $\pi_{\vtheta_{2}}$ respectively.
This is a central assumption for our analysis, as it enables a construction of the coefficients $\alpha_{i,k}$ and $\lambda_{i,k}$, allowing us to derive concentration bounds for the \estimname estimator.

We are now ready to provide the concentration bound for the \estimname estimator for a history-independent target $\target$, which requires controlling the bias of the PM correction.
\begin{restatable}[History-independent Target \estimname Concentration]{thr}{estimErrorBound} \label{thr:concentration-pm}
    Under Assumptions~\ref{asm:log-pi-lc-ls} and~\ref{asm:bounded-renyi}, let $k \in \Nat$, $\target \in \Theta$ be chosen independently of the history $\cH_{k}$, and $\delta \in (0,1)$. If $N \ge \cO\left(\frac{\dt + \log (1/\delta)}{D}\right)$, for every $i \in \dsb{k_0, k}$ select:
    \begin{align*}
        \lambda_{i,k} = \sqrt{\frac{4 \left(\dt \log 6 +  \log\left(\frac{1}{\delta}\right)\right)}{3 D N \win_{k}}} \quad \text{and} \quad \alpha_{i,k} = \frac{1}{\win_{k}}.
    \end{align*}
    Then, with probability at least $1-\delta$, it holds that:
    
    {\centering \resizebox{8cm}{!}{$\displaystyle \left\| \estim{\estimname}_{\win_{k}} J(\target) - \nabla J(\target) \right\|_{2} \le 8 \gradbfree \sqrt{\frac{D \left( \dt \log 6  + \log\left( \frac{1}{\delta} \right)\right)}{N \win_{k}}}.$} \par}
\end{restatable}
The bound is derived via a novel analysis that combines Freedman's inequality~\citep{freedman1975tail}, existing bias and variance bounds for the PM correction in the single IW scenario~\citep{metelli2021subgaussian}, and a first covering argument to handle the fact that the gradient estimator is vector-valued. 

Several observations are in order. 
First, the proposed concentration bound depends on the parameter dimensionality $\dt$, which stems from the use of the Euclidean norm $\| \cdot \|_{2}$. 
Second, it enlarges with the constant $D$ from Assumption~\ref{asm:bounded-renyi}, which is consistent with the intuition that the larger $D$ is, the less information is shared by reusing trajectories. 
Third, and more importantly, the estimation error scales as $\cO((N \win_{k})^{-1/2})$, where $N \win_{k}$ is the \emph{total number of trajectories} collected in the most recent $\omega_k$ iterations. 
This is a significant improvement over standard on-policy estimators, which typically enjoy a concentration bound scaling as $\cO(N^{-1/2})$ \citep{papini2022smoothing}, depending only on the current {batch size}.
Fourth, Theorem~\ref{thr:concentration-pm} enforces a condition on the minimum batch size $N$ to ensure that $\lambda_{i,k}$ lie within $[0,1]$. 
Finally, the coefficients $\alpha_{i,k}$ and $\lambda_{i,k}$ are deterministic and both independent of the index $i \in \dsb{k_0,k}$.\footnote{We remark that the result of Theorem \ref{thr:concentration-pm} is \emph{general} in the sense that it does not require that the sequence of parameters and trajectories in the history $\mathcal{H}_k$ are collected using \algnameshort.}

\textbf{History--dependent Target $\vtheta_k$.}~~We now extend the 
result of 
Theorem~\ref{thr:concentration-pm} to the case where the target 
parameter 
$\vtheta_{k}$ is statistically dependent on the history $\cH_{k}$, thus, explicitly addressing the \emph{target} bias. 
As mentioned previously, we make use of a \emph{covering argument} bounding $\| \estim{\estimname}_{\win_{k}} J(\vtheta_k) - \nabla J(\vtheta_k) \|_{2}$ with the uniform quantity $\sup_{\target \in \Theta_k} \| \estim{\estimname}_{\win_{k}} J(\target) - \nabla J(\target) \|_{2}$, where $\Theta_k \subseteq \Theta$ is a $\dt$-dimensional ball centered in $\vtheta_{k_0}$ that is guaranteed to contain $\vtheta_k$ almost surely when the sequence of parameters is generated by \algnameshort (Lemma \ref{lem:setContaining}). To this end, we prove that the \estimname estimation error $\| \estim{\estimname}_{\win_{k}} J(\vtheta) - \nabla J(\vtheta) \|_{2}$ is LC w.r.t. $\vtheta$ when $\alpha_{i,k}$ and $\lambda_{i,k}$ comply with the prescription of Theorem \ref{thr:concentration-pm} (Lemma \ref{lem:estim-error-lc}).





We are now ready to extend Theorem~\ref{thr:concentration-pm} to the setting in which the target parameter for $\estim{\estimname}_{\win_{k}} J(\cdot)$ depends on $\cH_{k}$.
\begin{restatable}[History-dependent Target \estimname Concentration]{thr}{estimErrorBoundCover} \label{thr:estimation-error}
    Under Assumptions~\ref{asm:log-pi-lc-ls} and \ref{asm:bounded-renyi}, let $k \in \Nat$, $\mathcal{H}_k$ be a history generated after $k$ iterations of \emph{\algnameshort}, and $\delta \in (0,1)$. Let $N \ge \widetilde{\cO}\left(\frac{\dt + \log(1/\delta)}{D}\right)$. 
    For every $i \in \dsb{k_0, k}$, select $\alpha_{i,k}$ as in Theorem~\ref{thr:concentration-pm} and:
    \begin{align*}
         \lambda_{i,k} = \widetilde{\cO}\left(\sqrt{\frac{\dt + \log \left(\frac{1}{\delta}\right) }{D N \win_{k}}}\right).
    \end{align*}
    Then, with probability at least $1-\delta$, it holds that:
        {\thinmuskip=.8mu
  \medmuskip=.8mu \thickmuskip=.8mu
    \begin{align*}
        \left\| \estim{\estimname}_{\win_{k}} J(\vtheta_{k}) - \nabla J(\vtheta_{k}) \right\|_{2} \le \widetilde{\cO}\left(\gradbfree \sqrt{\frac{D \left(\dt +  \log \left(\frac{1}{\delta}\right) \right)}{N \win_{k}} }\right).
    \end{align*}}%
\end{restatable}
Some observations are in order.
First, the 
full expressions of $\lambda_{i,k}$ and of the bound are 
in the proof (Equations~\ref{eq:lambdaDiff} and \ref{eq:boundComplete}, respectively).
Second, 
the concentration 
of \estimname remains of order $\widetilde{\mathcal{O}}((N\win_{k})^{-1/2})$, 
scaling with the total number of trajectories, up to logarithmic terms, as in Theorem~\ref{thr:concentration-pm}. Indeed, the covering argument affects the bound only up to multiplicative absolute constants and logarithmic terms. 
Notice that Theorem \ref{thr:estimation-error} can be converted to a bound to the expected estimation error (Lemma \ref{lem:expBound}).

\section{Convergence and Sample Complexity} \label{sec:sc}
Building on 
Theorem~\ref{thr:estimation-error},
we analyze the sample complexity of \algnameshort 
to find an $\epsilon$-approximate stationary point.

\begin{restatable}[\algnameshort Sample Complexity]{thr}{sampleComplexity} \label{thr:rpg-sc}
    Under Assumptions~\ref{asm:log-pi-lc-ls} and \ref{asm:bounded-renyi}, let $\epsilon \in (0,\gradbfree^2]$. Suppose to run \emph{\algnameshort} for $K \in \Nat$ iterations with a constant step size $\zeta \le \frac{1}{L_{2,J}}$, choosing $\delta = \delta_k = \frac{1}{N^2 \omega_k^2}$ at iteration $k \in \dsb{\ite}$, $\alpha_{i,k}$ and $\lambda_{i,k}$ as defined in Theorem \ref{thr:estimation-error}. Let $\vtheta_{\text{OUT}}$ be sampled uniformly from the iterates $\{\vtheta_{k}\}_{k=1}^{\ite}$. To guarantee that $\E[\| \nabla J(\vtheta_{\text{OUT}}) \|_{2}^{2}] \le \epsilon$, it suffices an \emph{iteration complexity} of $\ite \ge \cO\left(\frac{J^*-J(\vtheta_1)}{\zeta\epsilon}\right)$, where $J^* \coloneqq \sup_{\vtheta \in \Theta} J(\vtheta)$, and:
    \begin{itemize}[noitemsep, topsep=-10pt, leftmargin=*]
        \item (\textbf{partial reuse}) if $\omega < K$, \emph{batch size} $N \ge \widetilde{\cO}\left( \frac{\gradbfree^{2} D \dt}{\epsilon \win} \right)$, leading to a \emph{sample complexity} of:
        \begin{align*}
            N K \ge \widetilde{\cO} \left( \frac{\gradbfree^{2} D \dt }{\epsilon} \max \left\{ 1  , \frac{J^*-J(\vtheta_1)}{\zeta\omega\epsilon} \right\} \right);
        \end{align*}
        \item (\textbf{full reuse}) if $\omega \ge K$, \emph{batch size} $N \ge \widetilde{\cO}\left(\dt D^{-1}\right)$, leading to a \emph{sample complexity} of:
        \begin{align*}
             N K \ge \widetilde{\cO} \left( \frac{\dt}{\epsilon} \max \left\{ \gradbfree^{2} D, \frac{J^*-J(\vtheta_1)}{D\zeta} \right\} \right).
        \end{align*}
    \end{itemize}
\end{restatable}

    

    



Theorem~\ref{thr:rpg-sc} is derived by leveraging 
Theorem~\ref{thr:estimation-error} and by setting the confidence levels to $\delta_k = \frac{1}{N^2 \omega_k^2}$, different for every 
$k \in \dsb{K}$. 
It establishes an \emph{average-iterate} convergence result, where the output $\vtheta_{\text{OUT}}$ is sampled uniformly from 
$\{ \vtheta_{k} \}_{k=1}^{\ite}$, as common in non-convex optimization~\citep{papini2018stochastic,xu2019sample,yuan2020stochastic}. 

\textbf{Strengths.}~~Differently from previous works~\citep{yuan2020stochastic,GargianiZMSL22,paczolay2024sample}, the step size $\zeta$ can be selected as a constant independent of~$\epsilon$.
The result establishes sufficient conditions on the \emph{iteration complexity} and on the \emph{batch size}, and computes the corresponding \emph{sample complexity}. Supposing
$\frac{J^* - J(\vtheta_1)}{\zeta} = \widetilde{\mathcal{O}}(1)$, the required number of iterations is 
$\widetilde{\mathcal{O}}(\epsilon^{-1})$, as in the convergence analysis of standard PGs~\citep{yuan2022general}.
We then distinguish two cases. In the \emph{partial reuse} case, where $\omega < K$, after $\omega$ iterations of \algnameshort{}, we start discarding older trajectories and the window size $\omega_k$ saturates at~$\omega$. In this setting, the sample complexity
is $\widetilde{\mathcal{O}}(\gradbfree^2 D \dt\, \omega^{-1} \epsilon^{-2})$, making the convergence rate adaptive \wrt the amount of reused data $\omega$.
Notice that when 
$\win$ is independent of~$\epsilon$, i.e., $\omega = \widetilde{\mathcal{O}}(1)$, the dependence on~$\epsilon$ matches that of standard PGs~\citep{yuan2022general}, up to logarithmic terms. However, similarly to prior works~\citep{papini2018stochastic,xu2020improved,xu2019sample,yuan2020stochastic,paczolay2024sample}, we require a batch size $\widetilde{\mathcal{O}}(\gradbfree^2 D \dt\, \omega^{-1} \epsilon^{-1})$, explicitly depending on~$\epsilon$. This is the price we pay for $(i)$ an 
estimator whose bias (PM bias) can only be controlled by increasing the batch size~$N$, and $(ii)$ the possibility of using a learning rate 
independent of~$\epsilon$.
In the \emph{full reuse} case, 
where $\omega \ge K$, the window size keeps increasing, i.e., $\omega_k = k$, leading to an improved sample complexity of 
$\widetilde{\mathcal{O}}(\gradbfree^2 D \dt\, \epsilon^{-1})$. 
Here, the requirement on the batch size, namely $N \ge \widetilde{\mathcal{O}}(\dt D^{-1})$, is mild and follows from 
Theorem~\ref{thr:estimation-error}. Remarkably, 
here our algorithm does not need to know the value of 
$\epsilon$ in advance.
Notice that, even under partial reuse, an analogous 
rate can be achieved by selecting an $\epsilon$-dependent window size, i.e., $\omega = \mathcal{O}(\epsilon^{-1})$. 
To the best of our knowledge, 
this is the fastest 
rate for PGs with stochastic gradients, general policy classes, and continuous state-action spaces.

\textbf{Limitations.}~~Our result leverages coefficients $\alpha_{i,k}$ and $\lambda_{i,k}$ defined in terms of the \emph{uniform} bound $D$ on the divergence between trajectory distributions (Assumption~\ref{asm:bounded-renyi}). While this assumption is standard in the literature~\citep{papini2018stochastic,xu2019sample,yuan2020stochastic}, removing it~\citep{paczolay2024sample},
remains an open challenge for our method. In principle, one could use \emph{adaptive} coefficients $\alpha_{i,k}$ and $\lambda_{i,k}$ constructed using empirical divergences~\citep{MetelliRR25}. Although this may be effective in practice (Section \ref{sec:experiments}), it would make $\lambda_{i,k}$ (and possibly $\alpha_{i,k})$ random variables depending on the sequential learning process, significantly complicating the analysis. Furthermore, the sample complexity we establish for \algnameshort is of order $\widetilde{\cO}(\gradbfree^2 D \dt \omega^{-1}\epsilon^{-2})$, with an explicit dependence on 
$D$ 
(due to the choice of $\lambda_{i,k}$ terms and, thus, the PM bias) and on the dimensionality $\dt$ of the parameter space (due to the covering arguments to control the target bias), regardless of the choice of the window size $\omega$. 
However, when $\win=1$ (on-policy case), one would like to recover the \emph{dimension-free} rate of REINFORCE/GPOMDP of order $\cO(\epsilon^{-2})$~\citep{yuan2022general}.\footnote{Note that the sample complexity of REINFORCE/GPOMDP may contain $\dt$ dependencies hidden in the estimator variance.} 
In our setting, this is possible at the price of making a different choice of $\lambda_{i,k} = 1$ (and related analysis), reducing our MPM estimator to the REINFORCE/GPOMDP one in the case $\omega=1$. 

\textbf{Considerations on Memory Requirements.}~~As anticipated in 
Table~\ref{tab:comparison-complexities}, \algnameshort poses memory 
requirements. This is by design of the method itself: at every iteration 
$k \in \dsb{K}$, \algnameshort, after collecting $N$ fresh trajectories 
via $\vtheta_{k}$, needs access to $N\omega_{k}$ trajectories to estimate 
the policy gradient. Specifically, in both the reuse scenarios, $\widetilde{\cO}\left(\epsilon^{-1}\right)$ trajectories are to be held in memory according to Theorem~\ref{thr:rpg-sc}.
Indeed, 
under \emph{full reuse} it 
holds $\omega \ge K$ and, when supposing $\frac{J^{*}-J(\vtheta_{1})}
{\zeta} = \mathcal{O}(1)$, $K \ge \widetilde{\cO}\left(\epsilon^{-1}
\right)$, while the batch size $N$ remains $\epsilon$-independent. Thus, 
since at every iteration $k \in \dsb{K}$ a number of $N\omega_{k}$ 
trajectories needs to be stored, the memory peak is at $k=K$. So, 
at most $\widetilde{\cO}\left(\epsilon^{-1}\right)$ trajectories 
have to be stored simultaneously. The same condition is required under 
\emph{partial reuse} as well. In this case, $\omega=\widetilde{\cO}(1)$ 
is chosen independently of $\epsilon$ and we consider $\omega < K$. Here, 
the required batch size is $N \ge \widetilde{\cO}\left(\epsilon^{-1}
\right)$, thus the memory peak is reached at $k = \omega$ and it consists 
of retaining at most $\widetilde{\cO}\left(\epsilon^{-1}\right)$ 
trajectories, as 
under full reuse. 
On the other hand, gradient-reuse methods incur an $\epsilon$-independent 
memory requirement concerning the amount of trajectories 
to be retained at every instant.
Specifically, even if most of 
them~\citep[e.g.,][]{yuan2020stochastic,paczolay2024sample} require $\epsilon$-dependent (mini-)batch sizes, they are not required to store and reuse past 
trajectories, but past gradients and past parameterizations. 
This allows such methods to admit incremental implementations 
regardless of the batch size, as for standard PGs. 
Notably, \algnameshort is the first 
algorithm achieving an improved rate that does not 
require storing past policies. This may outweigh 
the cost of retaining past trajectories when policies have many parameters.

\begin{figure*}[t]
    \centering
    \begin{minipage}[t]{.73\linewidth}
        \vspace{0pt} 
        \centering
        \begin{subfigure}[t]{0.32\linewidth} 
            \centering
            \raisebox{0.75mm}{\includegraphics[width=\linewidth]{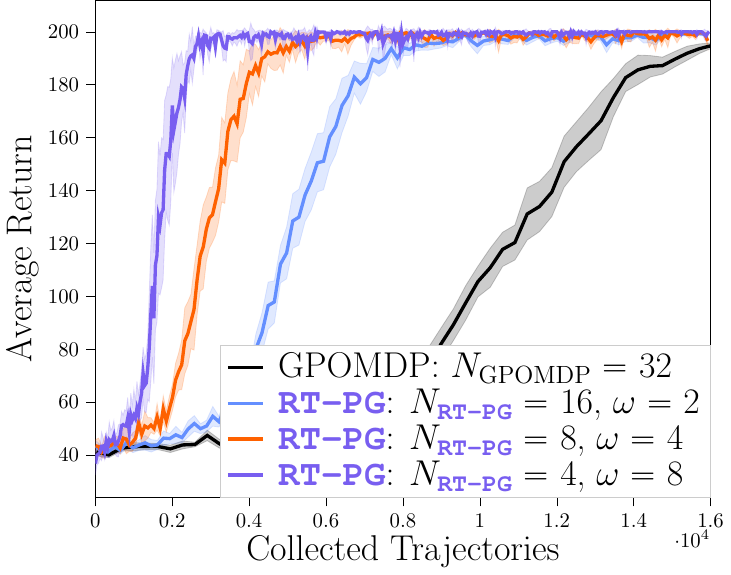}}
            \caption{$N \win = 32$.}
            \label{fig:boost-32}
        \end{subfigure}
        \hfill 
        \begin{subfigure}[t]{0.32\linewidth} 
            \centering
            \includegraphics[width=\linewidth]{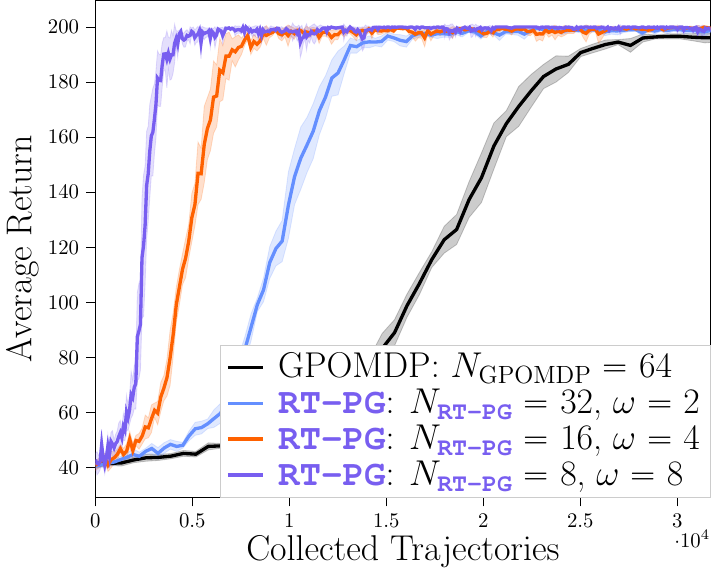}
            \caption{$N \win = 64$.}
            \label{fig:boost-64}
        \end{subfigure}
        \hfill
        \begin{subfigure}[t]{0.32\linewidth} 
            \centering
            \includegraphics[width=\linewidth]{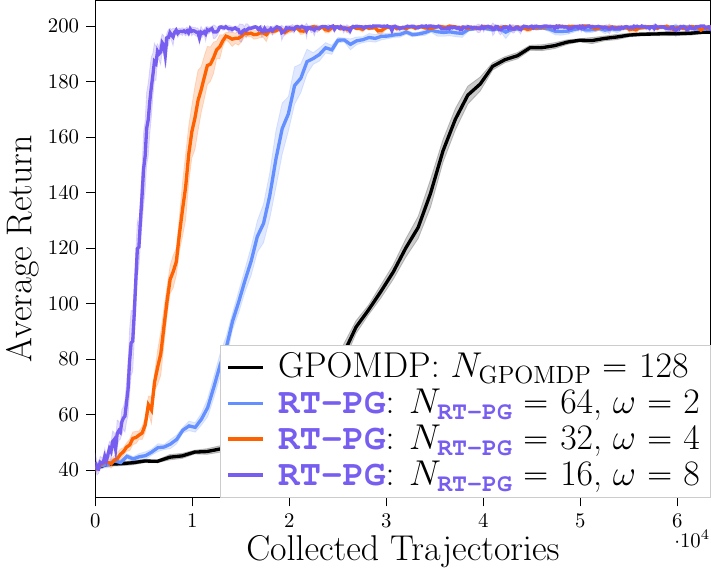}
            \caption{$N \win = 128$.}
            \label{fig:boost-128}
        \end{subfigure}
        \vspace{-0.1cm}
        \caption{
        Average return over collected trajectories (\algnameshort vs. GPOMDP) in \emph{Cart Pole}. 10 runs (mean $\pm$ 95\% C.I.).}
        \label{fig:boost}
    \end{minipage}
    \hspace{0.1cm} 
    \begin{minipage}[t]{.2336\linewidth}
        \vspace{0pt} 
        \centering 
        \begin{subfigure}[t]{\linewidth}
            \centering
            \includegraphics[width=\linewidth]{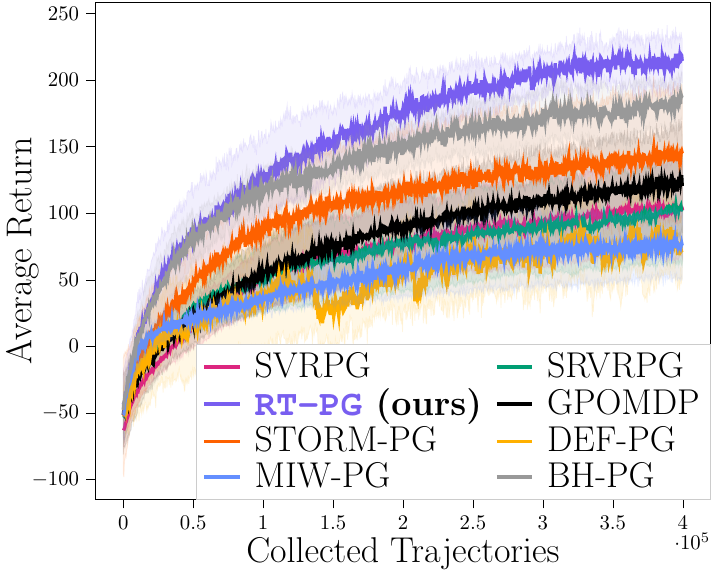}  
        \end{subfigure}

        \caption{
        Average return over collected trajectories (\algnameshort vs. baselines) in \emph{Half Cheetah}. 10 runs (mean $\pm$ 95\% C.I.).
        }
        \label{fig:main-half}
    \end{minipage}
    \vspace{-.3cm}
\end{figure*}

\textbf{Comparison with the Existing Lower Bound.}~~\citet{paczolay2024sample} adapted a lower bound by \citet{arjevani2023lower} for first-order non-convex stochastic optimization. They show that, with no assumption on the variance of the IWs, actor-only 
PGs need $\Omega(\epsilon^{-3/2})$ trajectories to find an $\epsilon$-approximate stationary point in the worst case. Note that our $\widetilde{\cO}(D \dt \epsilon^{-1})$ upper bound for \algnameshort complies with that lower bound. Indeed, the policy-class construction used in~\citep{paczolay2024sample} requires a number of parameters 
$\dt=\widetilde{\cO}(\epsilon^{-1})$ explicitly depending on $\epsilon$ and translating into a lower bound of order $\Omega(\dt \epsilon^{-1/2})$.
This is clear in Theorem 3 by~\citet{arjevani2023lower}, on which 
\citet{paczolay2024sample} build on.
Moreover, the lower bound does not require the variance of IWs to be bounded. Hence, we cannot exclude the existence of algorithms able to achieve dimension-free rate $\widetilde{\cO}(D \epsilon^{-1})$ under Assumption~\ref{asm:bounded-renyi}.


\begin{table}[t]
    \centering
    \begin{subtable}{\linewidth}
        \centering
        \resizebox{.9\linewidth}{!}{{
\renewcommand{\arraystretch}{1.2}
\begin{tabular}{c ccc} 
    \toprule
    \multirow{2}{*}{\textbf{Window Size} ($\win$)} & \multicolumn{3}{c}{\textbf{Used Trajectories} ($N\win$)} \\ 
    
    \cmidrule(lr){2-4} 
    
     & 32 & 64 & 128 \\ 
     
    \midrule
    
    2 
    & \cellcolor{ibmOrange!20} \makecell{\textbf{2.11}\\{\small (1.94 -- 2.32)}} 
    & \cellcolor{ibmUltramarine!20} \makecell{\textbf{1.91}\\{\small (1.83 -- 2.05)}} 
    & \cellcolor{ibmGold!20} \makecell{\textbf{1.92}\\{\small (1.87 -- 2.03)}} \\
    
    4 
    & \cellcolor{ibmIndigo!20} \makecell{\textbf{3.72}\\{\small (3.26 -- 4.20)}} 
    & \cellcolor{ibmOrange!20} \makecell{\textbf{3.73}\\{\small (3.38 -- 4.08)}} 
    & \cellcolor{ibmUltramarine!20} \makecell{\textbf{3.74}\\{\small (3.49 -- 3.99)}} 
    \\
    
    8
    & \cellcolor{ibmGreen!20} \makecell{\textbf{6.48}\\{\small (5.51 -- 7.60)}} 
    & \cellcolor{ibmIndigo!20} \makecell{\textbf{7.11}\\{\small (6.33 -- 7.99)}} 
    & \cellcolor{ibmOrange!20} \makecell{\textbf{7.04}\\{\small (6.51 -- 7.63)}} 
    \\
    
    \bottomrule
\end{tabular}}}
        \vspace{0.2cm}
        \caption{Sample efficiency ratio of \algnameshort over GPOMDP. Cells show: the empirical sample efficiency ratio obtained by comparing the mean curves; $95\%$ confidence interval for the mean factor.}
        \label{tab:boost-conv}
    \end{subtable}
    
    \vspace{0.2cm}

    \begin{subtable}{\linewidth}
        \centering
        \resizebox{.95\linewidth}{!}{{
\renewcommand{\arraystretch}{1.2}
\begin{tabular}{c ccc} 
    \toprule
    \multirow{2}{*}{\textbf{Window Size} ($\win$)} & \multicolumn{3}{c}{\textbf{Used Trajectories} ($N\win$)} \\ 
    
    \cmidrule(lr){2-4} 
    
     & 32 & 64 & 128 \\ 
     
    \midrule

    1 
    & \cellcolor{ibmUltramarine!20} \textbf{12.53} $\pm$ 1.40 
    & \cellcolor{ibmGold!20} \textbf{21.54} $\pm$ 1.42 
    & \cellcolor{ibmGrey!20} \textbf{37.82} $\pm$ 1.61 \\
    
    2 
    & \cellcolor{ibmOrange!20} \textbf{20.97} $\pm$ 1.14
    & \cellcolor{ibmUltramarine!20} \textbf{31.75} $\pm$ 2.11 
    & \cellcolor{ibmGold!20} \textbf{52.46} $\pm$ 2.65 \\
    
    4 
    & \cellcolor{ibmIndigo!20} \textbf{25.22} $\pm$ 0.98
    & \cellcolor{ibmOrange!20} \textbf{47.62} $\pm$ 1.05 
    & \cellcolor{ibmUltramarine!20} \textbf{70.93} $\pm$ 1.76
    \\
    
    8
    & \cellcolor{ibmGreen!20} \textbf{49.89} $\pm$ 1.63 
    & \cellcolor{ibmIndigo!20} \textbf{51.76} $\pm$ 2.22 
    & \cellcolor{ibmOrange!20} \textbf{101.52} $\pm$ 2.20 
    \\
    
    \bottomrule
\end{tabular}}}
        \vspace{0.2cm}
        \caption{Time (s) required by \algnameshort and GPOMDP ($\win=1$) under equal budgets of total trajectories. 
        Cells show mean $\pm$ std.}
        \label{tab:boost-times}
    \end{subtable}

    \caption{Comparisons in the setting of Figure~\ref{fig:boost} (10 runs). Cells with the same color represent configurations with the same $N$.}
    \label{tab:boost}

    \vspace{-0.2cm}
\end{table}

\section{Experiments} \label{sec:experiments} 
For our experiments, we adapt \algnameshort into a 
practical variant. 
Since the bound $D$ on the $\chi^2$-divergence between trajectory distributions (Assumption~\ref{asm:bounded-renyi}) is unknown in practice, we set the $\alpha_{i,k}$ and $\lambda_{i,k}$ coefficients 
dynamically using an estimate $\widehat{D}_{i}$ of $\chi^{2}(p_{\vtheta_{k}}\|p_{\vtheta_{i}})$, where $\vtheta_{k}$ is the current target policy and $i \in \dsb{k_0, k}$. In particular, both $\alpha_{i,k}$ and $\lambda_{i,k}$ are chosen to be $\cO((\widehat{D}_{i}+1)^{-1/2})$ (see Appendix~\ref{apx:implementation-details}).
Experimental details and additional experiments are deferred to Appendices~\ref{apx:implementation-details} and~\ref{apx:experiments}.\footnote{The code is available at: \url{https://github.com/MontenegroAlessandro/MagicRL/tree/offpolicy}.}

\textbf{On Reusing Trajectories.}~~We investigate the impact of trajectory reuse on \emph{sample efficiency}, comparing \algnameshort against GPOMDP as a standard on-policy baseline. The comparison is conducted matching the total number of trajectories used for the gradient estimation. Specifically, while GPOMDP collects $N_{\text{GPOMDP}} \in \{32, 64, 128\}$ fresh trajectories at each iteration, \algnameshort collects only $N_{\text{\algnameshort}} = N_{\text{GPOMDP}} / \win$ samples, reusing the remaining ones from recent $\win$ iterations (with $\win \in \{2,4,8\}$). Experiments are conducted in the \emph{Cart Pole} environment with linear Gaussian policies.
Figure~\ref{fig:boost} shows how reusing past data accelerates convergence \wrt the number of collected samples. This benefit scales with the window size $\win$, confirming that \algnameshort effectively reduces the sampling cost without compromising the learning stability. 
Table~\ref{tab:boost-conv} quantifies this gain by reporting the sample efficiency ratio between GPOMDP and \algnameshort (\eg a value of $2.11$ indicates that GPOMDP requires $2.11\times$ as many trajectories to match the performance of \algnameshort).\footnote{Details on how these values are computed in Appendix~\ref{subapx:batch-sensitivity}.}
Table~\ref{tab:boost-times} reports the runtime to collect the same amount of total trajectories. While increasing $\win$ improves sample efficiency, higher values (with lower $N$) require more updates to accumulate the same data, thereby increasing training duration.
\textbf{Comparison with Baselines.}~~We compare \algnameshort in \emph{Half Cheetah-v4}~\citep{mujoco} against: 
GPOMDP (standard on-policy); 
variance-reduced methods (SVRPG, SRVRPG, STORM-PG, DEF-PG); and MIW-PG and BH-PG, which reuse trajectories via the 
uniform-weight MIW estimator~\citep{lin2025reusing} and the BH, respectively.
All methods employ 
deep Gaussian policies. 
For fair comparison, we fixed the same amount of new collected data for all.
Specifically, \algnameshort, MIW-PG, and BH-PG were configured with $N=40$ and window $\omega = 8$, while variance-reduced methods, alternating between large-batch snapshots and mini-batch updates, were configured to observe \emph{on average} $N=40$ new trajectories per iteration.
As shown in Figure~\ref{fig:main-half}, \algnameshort outperforms most baselines, 
despite using the same amount of fresh data. This confirms that our weighting scheme effectively uses past experience to accelerate convergence.
Notably, the performance gap over MIW-PG empirically validates the effectiveness of our estimator compared to a na\"{i}ve uniform weighting. 
Finally, while BH-PG behaves similarly to \algnameshort, it incurs memory overheads to compute the BH coefficients. 
\section{Conclusion} \label{sec:conclusions}


We provided the first rigorous theoretical evidence that extensively reusing past trajectories can accelerate PG convergence. 
We introduced \algnameshort, which leverages a PM-corrected MIW estimator, and proved it achieves a sample complexity of $\widetilde{\cO}(\epsilon^{-2}\win^{-1})$ for finding an $\epsilon$-accurate stationary point. 
In the full reuse regime, this yields a rate of $\widetilde{\cO}(\epsilon^{-1})$, the best known one for this setting.
Future work includes supporting dynamic \estimname coefficients to remove the reliance on $D$, relaxing Assumption~\ref{asm:bounded-renyi} entirely~\citep{paczolay2024sample}, and eliminating the $\dt$ dependency. 

\clearpage

\setlength{\belowdisplayskip}{8pt}
\setlength{\abovedisplayskip}{8pt}

\section*{Acknowledgments}
This publication was funded with the contribution of Ministero dell’Università e della Ricerca pursuant to D.D. n. 7206 of 17 April 2025 – BANDO FIS 2. Project FIS-2023-02598 (Starting Grant), title: “Unified Learning from Diverse Human Feedback” (HUmLrn). CUP: D53C25000710001.

\section*{Impact Statement}
This paper presents work whose goal is to advance the field of machine learning. There are many potential societal consequences of our work, none of which we feel must be specifically highlighted here.

\bibliography{biblio}
\bibliographystyle{icml2026}


\clearpage
\appendix
\onecolumn

\section{On Employing Parameter-based Exploration in \algnameshort} \label{apx:pb}

As always in RL, addressing the \emph{exploration} problem is essential. This refers to the need for an agent to try out different actions, not necessarily to collect immediate rewards, but to gather information about the possible outcomes and long-term effects of actions. In PG methods, this is often done by carrying out the exploration either at the action level or directly at the policy-parameter level.
These two exploration strategies are known as \emph{action-based} (AB) and \emph{parameter-based} (PB) exploration~\citep{metelli2018policy,montenegro2024learning}, respectively. 
In particular, AB exploration, whose prototypical algorithms are REINFORCE~\citep{williams1992simple} and GPOMDP~\citep{baxter2001infinite}, keeps the exploration at the action level by leveraging \emph{stochastic policies} (\eg Gaussian). Instead, PB approaches, whose prototype is PGPE~\citep{sehnke2010parameter}, explore at the parameter level via \emph{stochastic hyperpolicies}, used to sample the parameters of an underlying (typically deterministic) policy.

For readability purposes, in the main paper we focused only on AB PG methods, while here we provide insights on the fact that the entire analysis extends to PB PG methods as well.

\subsection{Parameter-based Exploration}
In PB exploration, we use a \textit{parametric stochastic hyperpolicy} $\nu_{\vxi} \in\Delta(\Theta)$, where $\vxi \in \Xi \subseteq \mathbb{R}^{d_{\Xi}}$ is the hyperparameter vector. The hyperpolicy is used to sample parameters $\vtheta \sim \nu_{\vxi}$ to be plugged in the underlying parametric policy $\pi_{\vtheta}$ (that may also be deterministic) at the beginning of \emph{every trajectory}. The performance index of $\nu_{\vxi}$ is $\Jp: \Xi \to \mathbb{R}$, that is the expectation over $\vtheta$ of $J(\vtheta)$ defined as:
\begin{align}
    \Jp(\vxi) \coloneqq \E_{\vtheta \sim \nu_{\vxi}} \left[ J(\vtheta) \right].
\end{align}
PB exploration aims at learning $\vxi^{*} \in \argmax_{\vxi \in \Xi} \Jp(\vxi)$ and we denote $ \Jp^{*} \coloneqq \Jp(\vxi^{*})$.
If $\Jp(\vxi)$ is differentiable \wrt $\vxi$, PGPE~\citep{sehnke2010parameter} updates the hyperparameter $\vxi$ via gradient ascent: $\vxi_{k+1} \leftarrow \vxi_{k} + \zeta_{k} \widehat{\nabla}_{\vxi} \Jp(\vxi_{k})$. In particular, PGPE uses an estimator of $\nabla_{\vxi} \Jp(\vxi)$ defined as:
\begin{align}
    \widehat{\nabla}_{\vxi} \Jp(\vxi) = \frac{1}{N} \sum_{i=1}^{N} \singleEstim{P}_{\vxi}(\vtheta_{i},\tau_{i}),
\end{align}
where $N$ is the \emph{batch size}, which in this context is the number of independent parameters-trajectories pairs $\{(\vtheta_i,\tau_i)\}_{i=1}^{N}$, collected with hyperpolicy $\nu_{\vxi}$ ($\vtheta_i\sim \nu_{\vxi}$ and $\tau_i \sim p_{\vtheta_{i}}$).
The single-parameter gradient estimator for the hyperpolicy is defined as follows:
\begin{align}
    \singleEstim{P}_{\vxi}(\vtheta,\tau) \coloneqq \nabla \log \nu_{\vxi}(\vtheta) R(\tau),
\end{align}
where $\tau \sim p_{\vtheta}$.

\subsection{Importance Sampling for Parameter-based Exploration}
Consider running a PGPE-like method for $k$ iterations, collecting a set of hyperpolicy parameterizations $\{\vxi_{i}\}_{i=1}^{k}$ and, for each $\vxi_{i}$, sampling $N$ policy parameterizations $\{\vtheta_{i,j}\}_{j=1}^{N}$, \ie $\forall j \in \dsb{N}: \vtheta_{i,j} \sim \nu_{\vxi_{i}}$. Consider each policy parameterization $\vtheta_{i,j}$ to be used to sample a single trajectory $\tau_{i,j} \sim p_{\vtheta_{i,j}}$.
In this scenario, the data reused from previous iterates are the sampled policy parameterizations $\vtheta_{i,j}$ collected under hyperpolicy $\vxi_{i}$ with their associated trajectories $\tau_{i,j} \sim p_{\vtheta_{i,j}}$. The IW for incorporating this data into the gradient estimator is simply:
\begin{align}
    \frac{\nu_{\vxi_{k}}(\vtheta_{j}) p_{\vtheta_{j}}(\tau_{j})}{\nu_{\vxi_{i}}(\vtheta_{j}) p_{\vtheta_{j}}(\tau_{j})} = \frac{\nu_{\vxi_{k}}(\vtheta_{j})}{\nu_{\vxi_{i}}(\vtheta_{j})}.
\end{align}

That being said, the PB version of the \estimname estimator, considering always a window size $\win$, is defined as:
\begin{align*}
    \estim{\estimname}_{\win_{k}} \Jp(\vxi_{k}) = \frac{1}{N} \sum_{i=k-\win_{k}+1}^{k} \sum_{j=1}^{N} \frac{\alpha_{i,k} \nu_{\vxi_{k}}(\vtheta_{i,j})}{(1-\lambda_{i,k}) \nu_{\vxi_{i}}(\vtheta_{i,j}) + \lambda_{i,k} \nu_{\vxi_{k}}(\vtheta_{i,j})} \singleEstim{P}_{\vxi_{k}}(\vtheta_{i,j},\tau_{i,j}).
\end{align*}

\subsection{Theoretical Guarantees of Parameter-based \algnameshort}
All the results presented in the main paper hold for the PB version of \algnameshort, under assumptions that are the PB versions of Assumptions~\ref{asm:log-pi-lc-ls} and~\ref{asm:bounded-renyi}: 
\begin{itemize}[noitemsep,topsep=0pt,leftmargin=10pt]
    \item Assumption~\ref{asm:log-pi-lc-ls} translates into requiring that there exists $L_{1,\Xi}, L_{2,\Xi} \in \mR_{\ge 0}$ such that, for every $\vxi \in \Xi$ and $\vtheta \in \Theta$, \begin{align}\left\| \nabla \log \nu_{\vxi}(\vtheta) \right\|_{2} \le L_{1,\Xi} \quad \text{and} \quad \left\| \nabla^{2} \log \nu_{\vxi}(\vtheta) \right\|_{2} \le L_{2,\Xi}.\end{align}
    \item Assumption~\ref{asm:bounded-renyi} translates into requiring that there exists $D_{P} \in \mR_{\ge 1}$ such that \begin{align}
        \sup_{\vxi_{1},\vxi_{2} \; \in \; \Xi} \chi^{2}(\nu_{\vxi_{1}} \| \nu_{\vxi_{2}} ) \le D_{P} - 1.
    \end{align}
\end{itemize}

In particular, the $\chi^{2}$ takes a simple form in the common case of Gaussian hyperpolicies, making it easier to ensure Assumption~\ref{asm:bounded-renyi}~\citep{metelli2020importance}.
\section{Proofs of Section~\ref{sec:biases}} \label{apx:bias}

\targetBias*

\begin{proof}
By taking the conditional expectation, we have:
    \begin{align}
        \E\left[\estim{MIW}_2 J(\vtheta_{2})|\vtheta_2\right] =  \frac{1}{2}\E\left[\frac{p_{\vtheta_{2}}(\tau_{1})}{ p_{\vtheta_{1}}(\tau_{1})} \singleEstim{}_{\vtheta_{2}}(\tau_{1})|\vtheta_2\right] + \frac{1}{2} \E\left[\singleEstim{}_{\vtheta_{2}}(\tau_{2})|\vtheta_2\right].
    \end{align}
    For the second addendum, we decompose the  conditional probability under which the expectation is computed as:
    \begin{align}
    p(\vtheta_1,\tau_1,\tau_2|\vtheta_2) & = \frac{p(\vtheta_1,\tau_1,\vtheta_2,\tau_2)}{p(\vtheta_2)}\\
    & = \frac{ p(\vtheta_1)p_{\vtheta_1}(\tau_1)p(\vtheta_2|\vtheta_1,\tau_1)p_{\vtheta_2}(\tau_2) }{p(\vtheta_2)} \\
    & = \frac{p(\vtheta_1,\tau_1,\vtheta_2)}{p(\vtheta_2)} p_{\vtheta_2}(\tau_2) \\
    & = p(\vtheta_1,\tau_1|\vtheta_2) p_{\vtheta_2}(\tau_2).
    \end{align} 
    We can now compute the expectation:
    \begin{align}
        \E\left[\singleEstim{}_{\vtheta_{2}}(\tau_{2})|\vtheta_2\right] = \int p(\vtheta_1,\tau_1|\vtheta_2)\int p_{\vtheta_2}(\tau_2)  \singleEstim{}_{\vtheta_{2}}(\tau_{2}) \de \tau_2 \de \tau_1 \de \vtheta_1 = \nabla J(\vtheta_2) \int p(\vtheta_1,\tau_1|\vtheta_2) \de \tau_1 \vtheta_1  = \nabla J(\vtheta_2).
    \end{align}
    For the first addendum, there are no decompositions that allow to isolate the conditional probability $p_{\vtheta_1}(\tau_1)$ with no further dependences on $\tau_1$. Indeed:
    \begin{align}
         p(\vtheta_1,\tau_1,\tau_2|\vtheta_2) & = \frac{p(\vtheta_1,\tau_1,\vtheta_2,\tau_2)}{p(\vtheta_2)}\\
    & = \frac{ p(\vtheta_1)p_{\vtheta_1}(\tau_1)p(\vtheta_2|\vtheta_1,\tau_1)p_{\vtheta_2}(\tau_2) }{p(\vtheta_2)} \\
    & = \frac{ p(\vtheta_1)p(\vtheta_2|\vtheta_1,\textcolor{ibmOrange}{\tau_1})p_{\vtheta_2}(\tau_2) }{p(\vtheta_2)} p_{\vtheta_1}(\textcolor{ibmOrange}{\tau_1}).
    \end{align}
    Thus, looking at the expectation, we have:
    \begin{align}
        \E\left[\frac{p_{\vtheta_{2}}(\tau_{1})}{ p_{\vtheta_{1}}(\tau_{1})} \singleEstim{}_{\vtheta_{2}}(\tau_{1})|\vtheta_2\right] & = \int \frac{ p(\vtheta_1)p(\vtheta_2|\vtheta_1,\textcolor{ibmOrange}{\tau_1})p_{\vtheta_2}(\tau_2) }{p(\vtheta_2)} p_{\vtheta_1}(\textcolor{ibmOrange}{\tau_1}) \frac{p_{\vtheta_{2}}(\textcolor{ibmOrange}{\tau_{1}})}{ p_{\vtheta_{1}}(\textcolor{ibmOrange}{\tau_{1}})} \singleEstim{}_{\vtheta_{2}}(\tau_{1}) \de \tau_2 \de \tau_1 \de \vtheta_1 \\
        & =  \int \frac{ p(\vtheta_1) }{p(\vtheta_2)} \int p(\vtheta_2|\vtheta_1,\textcolor{ibmOrange}{\tau_1}) p_{\vtheta_{2}}(\textcolor{ibmOrange}{\tau_{1}})\singleEstim{}_{\vtheta_{2}}({\tau_{1}}) \de \textcolor{ibmOrange}{\tau_{1}}  \de \vtheta_1  \int p_{\vtheta_2}(\tau_2) \de \tau_2 \\
        & =  \int \frac{ p(\vtheta_1) }{p(\vtheta_2)} \int p(\vtheta_2|\vtheta_1,\textcolor{ibmOrange}{\tau_1}) p_{\vtheta_{2}}(\textcolor{ibmOrange}{\tau_{1}})\singleEstim{}_{\vtheta_{2}}({\tau_{1}}) \de \textcolor{ibmOrange}{\tau_{1}}  \de \vtheta_1  \\
        & \neq \nabla J(\vtheta_2).
    \end{align}
    As an alternative, one may attempt as follows:
    \begin{align}
         p(\vtheta_1,\tau_1,\tau_2|\vtheta_2) & = \frac{p(\vtheta_1,\tau_1,\vtheta_2,\tau_2)}{p(\vtheta_2)} \\
         & = \frac{p(\tau_1|\vtheta_1,\vtheta_2,\tau_2)p(\vtheta_1,\vtheta_2,\tau_2)}{p(\vtheta_2)} \\
         & = {p(\tau_1|\vtheta_1,\vtheta_2)p(\vtheta_1,\tau_2|\vtheta_2)},
    \end{align}
    having exploited the fact that conditioning on $\vtheta_2$ makes the dependence on $\tau_2$ irrelevant ($d$-separation, \citealp{heckerman1998tutorial}). Now, however, we have, in general, $p(\tau_1|\vtheta_1,\vtheta_2) \neq p_{\vtheta_1}(\tau_1)$. Thus, looking at the expectation, we have:
     \begin{align}
        \E\left[\frac{p_{\vtheta_{2}}(\tau_{1})}{ p_{\vtheta_{1}}(\tau_{1})} \singleEstim{}_{\vtheta_{2}}(\tau_{1})|\vtheta_2\right]   
        & = \int p(\vtheta_1,\tau_2|\vtheta_2) \int p(\tau_1|\vtheta_1,\vtheta_2) \frac{p_{\vtheta_{2}}(\tau_{1})}{ p_{\vtheta_{1}}(\tau_{1})} \singleEstim{}_{\vtheta_{2}}(\tau_{1}) \de \tau_2 \de \tau_1 \vtheta_1 \\
        & \neq \nabla J(\vtheta_2).
    \end{align}
\end{proof}

\targetBiasCont*
\begin{proof}
 By taking the conditional expectation, we have:
    \begin{align}
        \E\left[\estim{MIW}_2 J(\target)|\target\right] & =  \frac{1}{2}\E\left[\frac{p_{\target}(\tau_{1})}{p_{\vtheta_{1}}(\tau_{1})} \singleEstim{}_{\target}(\tau_{1})|\target\right] + \frac{1}{2} \E\left[\frac{p_{\target}(\tau_{2})}{p_{\vtheta_{2}}(\tau_{2})} \singleEstim{}_{\target}(\tau_{2})|\target\right].
    \end{align}
    The conditional probability under which the expectation is computed is simply given by:
    \begin{align}
        p(\vtheta_1,\tau_1,\vtheta_2,\tau_2|\target)= \frac{p(\vtheta_1,\tau_1,\vtheta_2,\tau_2,\target)}{p(\target)} = p(\vtheta_1,\tau_1,\vtheta_2,\tau_2).
    \end{align}
    For the second addendum, exploiting the usual decomposition $p(\vtheta_1,\tau_1,\vtheta_2,\tau_2) = p(\vtheta_1)p_{\vtheta_1}(\tau_1)p(\vtheta_2|\vtheta_1,\tau_1)p_{\vtheta_2}(\tau_2)$, we have:
    \begin{align}
        \E\left[\frac{p_{\target}(\tau_{2})}{p_{\vtheta_{2}}(\tau_{2})} \singleEstim{}_{\target}(\tau_{2})|\target\right] & = \int p(\vtheta_1)p_{\vtheta_1}(\tau_1)p(\vtheta_2|\vtheta_1,\tau_1) p_{\vtheta_2}(\tau_2) \frac{p_{\target}(\tau_{2})}{p_{\vtheta_{2}}(\tau_{2})} \singleEstim{}_{\target}(\tau_{2}) \de \vtheta_1 \de \tau_1 \de \vtheta_2 \de \tau_2 \\
        & = \int p(\vtheta_1)p_{\vtheta_1}(\tau_1)p(\vtheta_2|\vtheta_1,\tau_1) \int p_{\target}(\tau_{2}) \singleEstim{}_{\target}(\tau_{2})  \de \tau_2 \de \vtheta_1 \de \tau_1 \de \vtheta_2\\
        & = \nabla J(\target )\int p(\vtheta_1)p_{\vtheta_1}(\tau_1)p(\vtheta_2|\vtheta_1,\tau_1)   \de \vtheta_1 \de \tau_1 \de \vtheta_2 \\
        & = \nabla J(\target ).
    \end{align}
    Similarly, for the first addendum, exploiting the same decomposition:
    \begin{align}
        \E\left[\frac{p_{\target}(\tau_{1})}{p_{\vtheta_{1}}(\tau_{1})} \singleEstim{}_{\target}(\tau_{1})|\target\right] & = \int p(\vtheta_1)p_{\vtheta_1}(\tau_1)p(\vtheta_2|\vtheta_1,\tau_1) p_{\vtheta_2}(\tau_2) \frac{p_{\target}(\tau_{1})}{p_{\vtheta_{1}}(\tau_{1})} \singleEstim{}_{\target}(\tau_{1}) \de \vtheta_1 \de \tau_1 \de \vtheta_2 \de \tau_2 \\
        & =  \int p(\vtheta_1)  p_{\target}(\tau_{1}) \singleEstim{}_{\target}(\tau_{1})  \int  p(\vtheta_2|\vtheta_1,\tau_1) \int  p_{\vtheta_2}(\tau_2) \de \tau_2 \de \vtheta_2 \de \tau_1 \de \vtheta_1 \\
        & =  \int p(\vtheta_1)  p_{\target}(\tau_{1}) \singleEstim{}_{\target}(\tau_{1})  \de \tau_1 \de \vtheta_1 \\
        & = \nabla J(\target )  \int p(\vtheta_1)  \de \vtheta_1 \\
        & = \nabla J(\target ) ,
    \end{align}
    having exploited the fact that $\int  p_{\vtheta_2}(\tau_2) \de \tau_2 =1$, and, as a consequence,  $\int  p(\vtheta_2|\vtheta_1,\tau_1) \de \vtheta_2 = 1$.
\end{proof}

\crossTimeBias*

\begin{proof}
By taking the conditional expectation, we have:
    \begin{align}
        \E & \left[\estim{BH} J(\target)|\target\right] =  \E\left[\frac{p_{\target}(\tau_{1})}{p_{\vtheta_{1}}(\tau_{1}) + p_{\vtheta_{2}}(\tau_{1})} \singleEstim{}_{\target}(\tau_{1})|\target\right]  + \E\left[\frac{p_{\target}(\tau_{2})}{p_{\vtheta_{1}}(\tau_{2}) + p_{\vtheta_{2}}(\tau_{2})} \singleEstim{}_{\target}(\tau_{2}) |\target \right].
    \end{align}
The conditional probability under which the expectation is computed is simply given by:
    \begin{align}
        p(\vtheta_1,\tau_1,\vtheta_2,\tau_2|\target)= \frac{p(\vtheta_1,\tau_1,\vtheta_2,\tau_2,\target)}{p(\target)} = p(\vtheta_1,\tau_1,\vtheta_2,\tau_2),
    \end{align}
    which can be decomposed as $p(\vtheta_1,\tau_1,\vtheta_2,\tau_2) = p(\vtheta_1)p_{\vtheta_1}(\tau_1)p(\vtheta_2|\vtheta_1,\tau_1)p_{\vtheta_2}(\tau_2)$.
    Let us compute explicitly the expectation:
    \begin{align}
        &\E\left[\estim{BH} J(\target)|\target\right] \\
        &= \E\left[\frac{p_{\target}(\tau_{1})}{p_{\vtheta_{1}}(\tau_{1}) + p_{\vtheta_{2}}(\tau_{1})} \singleEstim{}_{\target}(\tau_{1})|\target\right] + \E\left[\frac{p_{\target}(\tau_{2})}{p_{\vtheta_{1}}(\tau_{2}) + p_{\vtheta_{2}}(\tau_{2})} \singleEstim{}_{\target}(\tau_{2}) |\target \right]\\
        & = \int  p(\vtheta_1)p_{\vtheta_1}(\tau_1)p(\vtheta_2|\vtheta_1,\tau_1)p_{\vtheta_2}(\tau_2) \frac{p_{\target}(\tau_{1})}{p_{\vtheta_{1}}(\tau_{1}) + p_{\vtheta_{2}}(\tau_{1})} \singleEstim{}_{\target}(\tau_{1}) \de \vtheta_1 \de \tau_1 \de \vtheta_2 \de \tau_2 \\
        & \quad + \int  p(\vtheta_1)p_{\vtheta_1}(\tau_1)p(\vtheta_2|\vtheta_1,\tau_1)p_{\vtheta_2}(\tau_2) \frac{p_{\target}(\tau_{2})}{p_{\vtheta_{1}}(\tau_{2}) + p_{\vtheta_{2}}(\tau_{2})} \singleEstim{}_{\target}(\tau_{2}) \de \vtheta_1 \de \tau_1 \de \vtheta_2 \de \tau_2 \\
         & = \int  p(\vtheta_1)p_{\vtheta_1}(\textcolor{ibmUltramarine}{\tau})p(\vtheta_2|\vtheta_1,\textcolor{ibmUltramarine}{\tau})\frac{p_{\target}(\textcolor{ibmUltramarine}{\tau})}{p_{\vtheta_{1}}(\textcolor{ibmUltramarine}{\tau}) + p_{\vtheta_{2}}(\textcolor{ibmUltramarine}{\tau})} \singleEstim{}_{\target}(\textcolor{ibmUltramarine}{\tau})  p_{\vtheta_2}(\tau_2)  \de \tau_2 \de \vtheta_1 \de \textcolor{ibmUltramarine}{\tau} \de \vtheta_2 \\
        & \quad + \int  p(\vtheta_1)p_{\vtheta_1}(\tau_1)p(\vtheta_2|\vtheta_1,\tau_1)p_{\vtheta_2}(\textcolor{ibmUltramarine}{\tau}) \frac{p_{\target}(\textcolor{ibmUltramarine}{\tau})}{p_{\vtheta_{1}}(\textcolor{ibmUltramarine}{\tau}) + p_{\vtheta_{2}}(\textcolor{ibmUltramarine}{\tau})} \singleEstim{}_{\target}(\textcolor{ibmUltramarine}{\tau}) \de \vtheta_1 \de \tau_1 \de \vtheta_2 \de \textcolor{ibmUltramarine}{\tau} \\
                 & = \int  p(\vtheta_1) \left( p_{\vtheta_1}(\textcolor{ibmUltramarine}{\tau})p(\vtheta_2|\vtheta_1,\textcolor{ibmUltramarine}{\tau}) + \int p_{\vtheta_1}(\tau_1)p(\vtheta_2|\vtheta_1,\tau_1) \de \tau_1 p_{\vtheta_2}(\textcolor{ibmUltramarine}{\tau}) \right) \frac{p_{\target}(\textcolor{ibmUltramarine}{\tau})}{p_{\vtheta_{1}}(\textcolor{ibmUltramarine}{\tau}) + p_{\vtheta_{2}}(\textcolor{ibmUltramarine}{\tau})} \singleEstim{}_{\target}(\textcolor{ibmUltramarine}{\tau}) \de \vtheta_1 \de \textcolor{ibmUltramarine}{\tau} \de \vtheta_2.
    \end{align}
    In general we have that $p(\vtheta_2|\vtheta_1,\textcolor{ibmUltramarine}{\tau}) \neq \int p_{\vtheta_1}(\tau_1)p(\vtheta_2|\vtheta_1,\tau_1) \de \tau_1 = p(\vtheta_2|\vtheta_1)$, preventing unbiasedness. Instead, if such equality would hold, we would have:
    \begin{align}
        \int  p(\vtheta_1) p(\vtheta_2|\vtheta_1) & \left( p_{\vtheta_1}(\textcolor{ibmUltramarine}{\tau}) +  p_{\vtheta_2}(\textcolor{ibmUltramarine}{\tau}) \right) \frac{p_{\target}(\textcolor{ibmUltramarine}{\tau})}{p_{\vtheta_{1}}(\textcolor{ibmUltramarine}{\tau}) + p_{\vtheta_{2}}(\textcolor{ibmUltramarine}{\tau})} \singleEstim{}_{\target}(\textcolor{ibmUltramarine}{\tau}) \de \vtheta_1 \de \textcolor{ibmUltramarine}{\tau} \de \vtheta_2 \\
        &  =  \int  p(\vtheta_1) p(\vtheta_2|\vtheta_1)   p_{\target}(\textcolor{ibmUltramarine}{\tau}) \singleEstim{}_{\target}(\textcolor{ibmUltramarine}{\tau}) \de \vtheta_1 \de \textcolor{ibmUltramarine}{\tau} \de \vtheta_2 \\
        & = \int p(\vtheta_1) p(\vtheta_2|\vtheta_1) p_{\target}(\textcolor{ibmUltramarine}{\tau}) \singleEstim{}_{\target}(\textcolor{ibmUltramarine}{\tau})  \de \textcolor{ibmUltramarine}{\tau}    \de \vtheta_1 \de \vtheta_2 \\
        & = \nabla J(\target) \int p(\vtheta_1) p(\vtheta_2|\vtheta_1)     \de \vtheta_1 \de \vtheta_2 \\
        & = \nabla J(\target).
    \end{align}
\end{proof}

\section{Proofs of Section~\ref{sec:conc}} \label{apx:estim}

Before presenting the proofs, let us define explicitly the single-trajectory estimators for REINFORCE and GPOMDP:
\begin{align*}
     \singleEstim{R}_{\vtheta} (\tau) = \sum_{t=1}^{T} \nabla_{\vtheta} \log \pi_{\vtheta}(\va_{\tau,t} | \vs_{\tau,t}) R(\tau) \quad \text{and} \quad \singleEstim{G}_{\vtheta} (\tau) = \sum_{t=1}^{T} \bigg(\sum_{l=1}^{t} \nabla_{\vtheta} \log \pi_{\vtheta}(\va_{\tau,l} | \vs_{\tau,l}) \bigg) \gamma^{t-1} r(\vs_{\tau, t}, \va_{\tau, t}).
\end{align*}

\boundedG*
\begin{proof}
    We prove this result directly characterizing the $\gradbfree$ and $G_2$ constants for both REINFORCE and GPOMDP. In particular, we prove that the following hold:
    \begin{align*}
        &\sup_{\vtheta,\tau \; \in \; \Theta \times \cT} \left\| \singleEstim{R}_{\vtheta}(\tau) \right\|_{2} \le \gradb{R} \coloneqq \frac{T(1-\gamma^{T})}{1-\gamma} R_{\max} \LCpi, \\
        &\sup_{\vtheta,\tau \; \in \; \Theta \times \cT} \left\| \nabla \singleEstim{R}_{\vtheta}(\tau) \right\|_{2} \le \hessb{R} \coloneqq \frac{T(1-\gamma^{T})}{1-\gamma} R_{\max} \LSpi, \\
        &\sup_{\vtheta,\tau \; \in \; \Theta \times \cT} \left\| \singleEstim{G}_{\vtheta}(\tau) \right\|_{2} \le \gradb{G} \coloneqq \frac{1-\gamma^{T}}{(1-\gamma)^{2}} R_{\max} \LCpi, \\
        &\sup_{\vtheta,\tau \; \in \; \Theta \times \cT} \left\| \nabla \singleEstim{G}_{\vtheta}(\tau) \right\|_{2} \le \hessb{G}  \coloneqq \frac{1-\gamma^{T}}{(1-\gamma)^{2}} R_{\max} \LSpi. \\
    \end{align*}
    These results simply come from the explicit forms of $\singleEstim{R}_{\vtheta}(\tau)$ and $\singleEstim{G}_{\vtheta}(\tau)$, then applying Assumption~\ref{asm:log-pi-lc-ls}. Similar results are presented in~\citep{papini2022smoothing}.
    Moreover, here we tackle the case $\gamma<1$ and $T<+\infty$.

    For REINFORCE, we have:
    \begin{align}
        \left\| \singleEstim{R}_{\vtheta}(\tau) \right\|_{2} = \left\| \sum_{t=1}^{T} \nabla \log \pi_{\vtheta}(\va_{\tau, t} | \vs_{\tau, t}) R(\tau)\right\|_{2} \le \frac{T (1 - \gamma^{T})}{1 - \gamma} R_{\max} \LCpi,
    \end{align}
    and
    \begin{align}
         \left\| \nabla \singleEstim{R}_{\vtheta}(\tau) \right\|_{2} = \left\| \nabla \sum_{t=1}^{T} \nabla \log \pi_{\vtheta}(\va_{\tau,t} | \vs_{\tau,t}) R(\tau)\right\|_{2} \le \frac{T(1-\gamma^{T})}{1-\gamma} R_{\max} \LSpi.
    \end{align}

    Similarly, for GPOMDP, the following holds:
    \begin{align}
        \left\| \singleEstim{G}_{\vtheta}(\tau) \right\|_{2} = \left\| \sum_{t=1}^{T} \left( \sum_{l=1}^{t} \nabla \log \pi_{\vtheta}(\va_{\tau, l} | \vs_{\tau, l}) \right) \gamma^{t-1} r(\vs_{\tau,t}, \va_{\tau,t})\right\|_{2} \le \frac{1 - \gamma^{T}}{(1 - \gamma)^{2}} R_{\max} \LCpi,
    \end{align}
    and
    \begin{align}
        \left\| \nabla \singleEstim{G}_{\vtheta}(\tau) \right\|_{2} = \left\| \nabla \sum_{t=1}^{T} \left( \sum_{l=1}^{t} \nabla \log \pi_{\vtheta}(\va_{\tau, l} | \vs_{\tau, l}) \right) \gamma^{t-1} r(\vs_{\tau,t}, \va_{\tau,t})\right\|_{2} \le \frac{1 - \gamma^{T}}{(1 - \gamma)^{2}} R_{\max} \LSpi.
    \end{align}
\end{proof}

\estimErrorBound*
\begin{proof}
    In order to study the concentration of $\| \estim{\estimname}_{\win_{k}} J(\target) - \nabla J(\target) \|_{2}$, we will resort to Freedman's inequality, that we state below.
    
    \begin{thr}[Freedman's Inequality, \citealt{freedman1975tail}]
        Let $(z_i)_{i=1}^m$ be a martingale difference sequence adapted to the filtration $(\mathcal{F}_{i-1})_{i=1}^m$ such that $|z_i| \le M$ a.s. for every $i$ and $\sum_{i=1}^m \E[z_i^2|\mathcal{F}_{i-1}] \le V$ (with $M$ and $V$ deterministic, possibly depending on $m$). Then, with probability $1-\delta$ it holds that:
        \begin{align}
            \sum_{i=1}^m z_i  \le \sqrt{2V \log \left(\frac{1}{\delta}\right)} + \frac{2 }{3} M \log \left(\frac{1}{\delta}\right).
        \end{align}
    \end{thr}
    
    Furthermore, we focus on the inner product between the gradient estimator and a fixed unit vector ${\mathbf{w}} \in \mathcal{B}^{d_{\Theta}}_1 \coloneqq \{ {\mathbf{w}'} \in \Reals^{d_{\Theta}} : \| \mathbf{w}' \|_2 \le 1\}$. For $i \in \dsb{k_0,k}$ and $j \in \dsb{N}$, let us define:
    \begin{align}
        x_{i,j}(\mathbf{w}) \coloneqq \frac{\alpha_{i,k}}{N} \frac{{\mathbf{w}}^\top \singleEstim{}_{\target}(\tau_{i,j})}{(1-\lambda_{i,k}) \frac{p_{\vtheta_i}(\tau_{i,j})}{p_{\target}(\tau_{i,j})}+\lambda_{i,k}}, \quad z_{i,j}(\mathbf{w}) = x_{i,j}(\mathbf{w}) - \E[x_{i,j}(\mathbf{w})|\mathcal{F}_{i-1}],
    \end{align}
    where $\mathcal{F}_{i-1} = \sigma(\vtheta_{k_0},\{\tau_{k_0,j}\}_{j=1}^{N},\dots,\vtheta_{i-1},\{\tau_{i-1,j}\}_{j=1}^{N},\vtheta_{i})$ is the filtration. Notice that the filtration depends on $i$ only, since, within the batch, the trajectories are independent. Furthermore, we have that $\E[x_{i,j}(\mathbf{w})|\mathcal{F}_{i-1}] = \E[x_{i,j'}(\mathbf{w})|\mathcal{F}_{i-1}]$ for every $j,j' \in \dsb{N}$. Given this, we have:
    \begin{align}
        {\mathbf{w}}^\top \estim{\estimname}_{\win_{k}} J(\target)  = \sum_{i= k_0}^{k} \sum_{j=1}^{N} x_{i,j}(\mathbf{w}).
    \end{align} 
    First of all, we observe the boundedness for every $i \in \dsb{k_0,k}$ and $j \in \dsb{N}$:
    \begin{align}
        |x_{i,j}(\mathbf{w})| \le \frac{\alpha_{i,k} G(\mathbf{w})}{\lambda_{i,k} N}, \qquad |z_{i,j}(\mathbf{w})| \le \frac{2 \alpha_{i,k} G(\mathbf{w})}{\lambda_{i,k} N},
    \end{align}
    a.s., being $G(\mathbf{w}) \coloneqq \sup_{\vtheta, \tau \; \in \; \Theta \times \cT} {\mathbf{w}}^\top \singleEstim{}_{\vtheta}(\tau)$.
    We now prove that $((z_{i,j}(\mathbf{w}))_{j=1}^{N})_{i=k_0}^{k}$ is a martingale difference sequence. Indeed, for $i \in \dsb{k_0,k}$ and $j \in \dsb{N}$, we have:
    \begin{align}
    & \E[|z_{i,j}(\mathbf{w})|] \le \frac{2 \alpha_{i,k} G(\mathbf{w})}{N \lambda_{i,k}} < +\infty,\\
    & \E[z_{i,j}(\mathbf{w})|\mathcal{F}_{i-1}]  = \E[x_{i,j}(\mathbf{w}) - \E[x_{i,j}(\mathbf{w})|\mathcal{F}_{i-1}]|\mathcal{F}_{i-1}] = 0,
    \end{align}
    a.s.. Let us now compute the second moment:
    \begin{align}
        \E[z_{i,j}(\mathbf{w})^2|\mathcal{F}_{i-1}] \le \E[x_{i,j}(\mathbf{w})^2|\mathcal{F}_{i-1}] \le \frac{\alpha_{i,k}^2 G(\mathbf{w})^2 D}{N^2},
    \end{align}
    since, conditioned to $\mathcal{F}_{i-1}$, this is the standard power mean estimator, whose variance has been established in~\citep[Lemma 5.1,][]{metelli2021subgaussian}.
    Regarding the bias, let us define:
    \begin{align}
        y_{i,j}(\mathbf{w}) = x_{i,j}(\mathbf{w})\rvert_{\lambda_{i,k}=0} = \frac{\alpha_{i,k}}{N}  \frac{{\mathbf{w}}^\top \singleEstim{}_{\target}(\tau_{i,j})}{ \frac{p_{\vtheta_i}(\tau_{i,j})}{p_{\target}(\tau_{i,j})}}.
    \end{align}
    Note that: $\E[y_{i,j}(\mathbf{w})|\mathcal{F}_{i-1}] = {\mathbf{w}}^\top \nabla J(\target) $ for every $i \in \dsb{k_0,k}$ and $j \in \dsb{N}$. Thus:
    \begin{align}
        |\E[x_{i,j}(\mathbf{w})|\mathcal{F}_{i-1}] - {\mathbf{w}}^\top \nabla J(\target) | = | \E[x_{i,j}(\mathbf{w})|\mathcal{F}_{i-1}] - \E[y_{i,j}(\mathbf{w})|\mathcal{F}_{i-1}]| \le \frac{G(\mathbf{w}) \alpha_{i,k} \lambda_{i,k} D}{N},
    \end{align}
    since, when conditioning to $\mathcal{F}_{i-1}$, we are evaluating the bias of a PM estimator, whose bias has been established in~\citep[Lemma 5.1,][]{metelli2021subgaussian}.
    In order to apply Freedman's inequality, we have to guarantee that the bounds on the variance and maximum value of the martingale difference sequence are deterministic. Thus, we can choose $\alpha_{i,k}$ and $\lambda_{i,k}$ based on the index $i$, but not on the history $\mathcal{H}_k$. We choose:
    \begin{align}
        \lambda_{i,k} = \sqrt{\frac{4 \log \frac{1}{\delta}}{3 D N \win_{k}}}, \qquad \alpha_{i,k} = \frac{1}{\omega_k}. \label{eq:mpm-ada-coeff}
    \end{align}
    Notice that this property ensures that $\frac{\alpha_{i,k}}{\lambda_{i,k}}$ is a constant independent on $i$.
    Thus, w.p. $1-\delta$, we have:
    \begin{align}
        {\mathbf{w}}^\top (\estim{\estimname}_{\win_{k}} J(\target) & - \nabla J(\target))  \\
        & = {\mathbf{w}}^\top \estim{\estimname}_{\win_{k}} J(\target) - \sum_{i=k_0}^{k}  \sum_{j=1}^{N} \E[x_{i,j}(\mathbf{w})|\mathcal{F}_{i-1}]+ \sum_{i=k_0}^{k} \sum_{j=1}^{N} \E[x_{i,j}(\mathbf{w})|\mathcal{F}_{i-1}] - {\mathbf{w}}^\top\nabla J(\target) \\
        &\le  \sum_{i=k_0}^{k} \sum_{j=1}^{N} z_{i,j}(\mathbf{w})  + \sum_{i=k_0}^{k} \sum_{j=1}^{N} |\E[x_{i,j}(\mathbf{w})|\mathcal{F}_{i-1}] - {\mathbf{w}}^\top \nabla J(\target)| \\
        &\le G(\mathbf{w}) \sqrt{\frac{2}{N} \sum_{i=k_0}^{k} \alpha_{i,k}^2 D \log \left( \frac{1}{\delta} \right)} + \frac{4 G(\mathbf{w})}{3 N \win_{k}} \sum_{i=k_0}^{k} \frac{\alpha_{i,k}}{\lambda_{i,k}} \log \left( \frac{1}{\delta} \right)+ G(\mathbf{w}) \sum_{i=k_0}^{k} \alpha_{i,k} \lambda_{i,k} D. \label{eq:exp-hook}
    \end{align}
    By substituting our choices of $\lambda_{i,k}$ and $\alpha_{i,k}$:
    \begin{align}
        \frac{G(\mathbf{w}) \sqrt{D}}{\win_{k}} \sqrt{\frac{\win_{k}}{N}\log \left( \frac{1}{\delta} \right)} \underbrace{(\sqrt{2} + \sqrt{4/3} + \sqrt{4/3})}_{\le 4} \le 4 G(\mathbf{w}) \sqrt{\frac{D}{N \win_{k}}\log \left( \frac{1}{\delta} \right)}.
    \end{align}
    To bound the norm, we follow the standard approach based on a covering argument. Define $\mathcal{C}_\eta$ as an $\eta$-cover of the unit ball $\mathcal{B}^{d_{\Theta}}_1$ (i.e., $\sup_{{\mathbf{w}} \in \mathcal{B}^{d_{\Theta}}_1} \inf_{{\mathbf{w}'} \in \mathcal{C}_\eta} \| \mathbf{w}- \mathbf{w}'\|_2 \le \eta$), which has cardinality at most $|C_\eta| \le \left(1+\frac{2}{\eta}\right)^{\dt} \le \left( \frac{3}{\eta} \right)^{\dt}$, where the last inequality holds for $\eta \le 1$ (see Lemma 20.1 of \citet{lattimore2020bandit}). Then, the following hold:
    \begin{align}
        \left\| \estim{\estimname}_{\win_{k}} J(\target) - \nabla J(\target) \right\|_{2} 
        & = \sup_{{\mathbf{w}} \in \mathcal{B}^{d_{\Theta}}_1}  {\mathbf{w}}^\top ( \estim{\estimname}_{\win_{k}} J(\target) - \nabla J(\target) ) \\
        & \le \sup_{{\mathbf{w}} \in \mathcal{B}^{d_{\Theta}}_1}  \inf_{{\mathbf{w}'} \in \mathcal{C}_\eta} \left\{ (\mathbf{w}')^\top ( \estim{\estimname}_{\win_{k}} J(\target) - \nabla J(\target)) + (\mathbf{w}-\mathbf{w}')^\top  ( \estim{\estimname}_{\win_{k}} J(\target) - \nabla J(\target))\right\} \\
        & \le \sup_{{\mathbf{w}} \in \mathcal{C}_\eta} {\mathbf{w}}^\top ( \estim{\estimname}_{\win_{k}} J(\target) - \nabla J(\target) ) + \eta\left\| \estim{\estimname}_{\win_{k}} J(\target) - \nabla J(\target) \right\|_{2} \\
        & = (1-\eta)^{-1} \max_{{\mathbf{w}} \in \mathcal{C}_\eta} {\mathbf{w}}^\top ( \estim{\estimname}_{\win_{k}} J(\target) - \nabla J(\target) ).
    \end{align}
    With a union bound over the points of the cover, we have with probability at least $1-|\mathcal{C}_\eta|\delta \ge 1 - \left( \frac{3}{\eta} \right)^{\dt}\delta $:
    \begin{align}
        \left\| \estim{\estimname}_{\win_{k}} J(\target) - \nabla J(\target) \right\|_{2} & \le (1-\eta)^{-1} 4 \gradbfree  \sqrt{\frac{D}{N \win_{k}} \log \left(\frac{1}{\delta} \right)},
    \end{align}
    where we exploited that:
    \begin{align}
        \sup_{{\mathbf{w}} \in \mathcal{C}_\eta} G(\mathbf{w}) = \sup_{{\mathbf{w}} \in \mathcal{C}_\eta} \sup_{\vtheta,\tau \; \in \; \Theta \times \cT} \mathbf{w}^{\top} \singleEstim{}_{\vtheta}(\tau) \le \sup_{\vtheta,\tau \; \in \; \Theta \times \cT} \|\singleEstim{}_{\vtheta}(\tau)\|_2 = \gradbfree,
    \end{align}
     where $\gradbfree$ comes from Lemma~\ref{lem:bounded-G-G2}. We choose $\eta = 1/2$, obtaining, with probability at least $1- 6^{\dt} \delta$:
    \begin{align}
        \left\| \estim{\estimname}_{\win_{k}} J(\target) - \nabla J(\target) \right\|_{2} & \le 8 \gradbfree \sqrt{\frac{D}{N \win_{k}} \log \left(\frac{1}{\delta} \right)}.
    \end{align}
    By rescaling $\delta$, we have, with probability at least $1-\delta$:
    \begin{align}
        \left\| \estim{\estimname}_{\win_{k}} J(\target) - \nabla J(\target) \right\|_{2} & \le 8 \gradbfree \sqrt{\frac{D}{N \win_{k}} \log \left(\frac{6^{\dt}}{\delta} \right)} = 8 \gradbfree \sqrt{\frac{D \dt \log 6  + D \log\left( \frac{1}{\delta} \right)}{N \win_{k}}}.
    \end{align}
    Having rescaled $\delta$, we have to adapt the expression of the coefficients $\lambda_{i,k}$ for every $i \in \dsb{k_0,k}$, that become:
    \begin{align}
        \lambda_{i,k}  = \sqrt{\frac{4 \dt \log 6 + 4 \log\left(\frac{1}{\delta}\right)}{3 D N \win_{k}}}.
    \end{align}

    We conclude the proof by recalling that we have to enforce $\lambda_{i,k} \le 1$. To ensure this, we can impose the following condition, independent of $\omega_k$, on the batch size $N$:
    \begin{align}
        N  \ge  \frac{4 \dt \log 6 + 4 \log\left(\frac{1}{\delta}\right)}{3 D} = \cO\left( \frac{\dt + \log \left( \frac{1}{\delta} \right)}{D}\right),
    \end{align}
    which is independent of $k$ and $\win_k$.
\end{proof}

\begin{restatable}[Smoothness of $J$]{lemma}{smoothJ} \label{asm:J-lc-ls}
    Under Assumption \ref{asm:log-pi-lc-ls}, for every $\vtheta_1,\vtheta_2 \in \Theta$, it holds that:
    \begin{align*}
        \left\| \nabla J(\vtheta_{1}) - \nabla J(\vtheta_{2}) \right\|_{2} \le L_{2,J} \left\| \vtheta_{1} - \vtheta_{2} \right\|_{2},
    \end{align*}
    where $L_{2,J} \le \gradbfree T \LCpi  + G_2$.
\end{restatable}

\begin{proof}
    We equivalently upper bound $\left\|\nabla_{\target} \nabla J(\target) \right\|_{2}$:
    \begin{align}
        \left\| \nabla_{\target} \nabla J(\target)\right\|_2 & =  \left\| \nabla_{\target} \E_{\tau \sim p_{\target}} \left[ \mathbf{g}_{\target}(\tau) \right] \right\|_2 \\
        & =  \left\| \E_{\tau \sim p_{\target}} \left[ \nabla \log p_{\target}(\tau) \mathbf{g}_{\target}(\tau)^\top + \nabla \mathbf{g}_{\target}(\tau) \right] \right\|_2 \\
        & \le \E_{\tau \sim p_{\target}} \left[ \sum_{l=1}^T \|\nabla \log \pi_{\target}(\mathbf{a}_{\tau,l}|\mathbf{s}_{\tau,l}) \|_2 \| \mathbf{g}_{\target}(\tau) \|_2 \right] + \E_{\tau \sim p_{\target}} \left[ \| \nabla \mathbf{g}_{\target}(\tau)\|_2 \right] \\
        & \le \gradbfree T \LCpi  + G_2,
    \end{align}
    where we exploited Assumption \ref{asm:log-pi-lc-ls}.
\end{proof}

\begin{restatable}[Lipschitzianity of the Estimation Error]{lemma}{estimErrorLC} \label{lem:estim-error-lc}
    Under Assumptions~\ref{asm:log-pi-lc-ls} and \ref{asm:bounded-renyi}, for every pair of parameters $\target_{1}, \target_{2} \in \Theta$, using the choices of $\alpha_{i,k} = 1/\omega_k$ and $\lambda_{i,k}$ not smaller than those prescribed in Theorem~\ref{thr:concentration-pm}, the following holds:
    \begin{align*}
        \left\| \left(\estim{\estimname}_{\win_{k}}J(\target_{1}) - \nabla J(\target_{1})\right) - \left(\estim{\estimname}_{\win_{k}}J(\target_{2}) - \nabla J(\target_{2})\right) \right\|_{2} \le \LCpm \left\|\target_{1} - \target_{2} \right\|_{2},
    \end{align*}
    where 
    \begin{align*}
        \LCpm \coloneqq\left( \gradbfree T \LCpi + G_{2} \right) \left(1+\sqrt{\frac{3 }{4} D N \win_{k}}\right).
    \end{align*}
\end{restatable}

\begin{proof}
    We start the proof with the following derivation:
    \begin{align}
        \Big\| (\estim{\estimname}_{\win_{k}} J(\target_{1}) - \nabla J(\target_{1}))&  - (\estim{\estimname}_{\win_{k}} J(\target_{2}) - \nabla J(\target_{2})) \Big\|_{2} \\
        & \le \left\| \estim{\estimname}_{\win_{k}} J(\target_{1}) - \estim{\estimname}_{\win_{k}} J(\target_{2}) \right\|_{2} + \left\| \nabla J(\target_{1}) - \nabla J(\target_{2}) \right\|_{2},
    \end{align}
    where we used the triangular inequality.
    
    Now, in order to deal with $\| \estim{\estimname}_{\win_{k}} J(\target_{1}) - \estim{\estimname}_{\win_{k}} J(\target_{2}) \|_{2}$, we can equivalently bound $\| \nabla_{\target} \estim{\estimname}_{\win_{k}} J(\target) \|_{2}$, for any $\target \in \Theta$.
    \begin{align}
        \left\| \nabla_{\target} \estim{\estimname}_{\win_{k}} J(\target) \right\|_{2} &= \left\| \nabla_{\target} \left( \frac{1}{N} \sum_{i=k_0}^{k} \sum_{j=1}^{N} \frac{\alpha_{i,k}}{(1-\lambda_{i,k}) \frac{p_{\vtheta_{i}}(\tau_{i,j})}{p_{\target}(\tau_{i,j})} + \lambda_{i,k}} \singleEstim{}_{\target}(\tau_{i,j}) \right)\right\|_{2} \\
        &\le \left\| \frac{1}{N} \sum_{i=k_0}^{k} \sum_{j=1}^{N} \left( \frac{\alpha_{i,k} (1-\lambda_{i,k}) \frac{p_{\vtheta_{i}}(\tau_{i,j})}{p_{\target}(\tau_{i,j})^{2}} \nabla_{\overbar{{\vtheta}}} p_{\target}(\tau_{i,j}) \singleEstim{}_{\target}(\tau_{i,j})}{\left((1-\lambda_{i,k}) \frac{p_{\vtheta_{i}}(\tau_{i,j})}{p_{\target}(\tau_{i,j})} + \lambda_{i,k} \right)^{2}} + \frac{\alpha_{i,k} \nabla_{\target} \singleEstim{}_{\target}(\tau_{i,j})}{(1-\lambda_{i,k}) \frac{p_{\vtheta_{i}}(\tau_{i,j})}{p_{\target}(\tau_{i,j})} + \lambda_{i,k}} \right) \right\|_{2},
    \end{align}
    obtained by simply making the form of the \estimname estimator explicit and by computing the gradient \wrt $\target$.
    Before carrying on with the derivation, note the following three inequalities:
    \begin{align}
        & \frac{\nabla_{\target}p_{\target}(\tau_{i,j})}{p_{\target}(\tau_{i,j})} = \nabla_{\target} \log p_{\target}(\tau_{i,j}),\\
        & \frac{1}{(1-\lambda_{i,k}) \frac{p_{\vtheta_{i}}(\tau_{i,j})}{p_{\target}(\tau_{i,j})} + \lambda_{i,k}} (1-\lambda_{i,k}) \frac{p_{\vtheta_{i}}(\tau_{i,j})}{p_{\target}(\tau_{i,j})} \le 1,\\
        & \frac{\alpha_{i,k}}{(1-\lambda_{i,k})\frac{p_{\vtheta_{i}}(\tau_{i,j})}{p_{\target}(\tau_{i,j})} + \lambda_{i,k}} \le \frac{\alpha_{i,k}}{\lambda_{i,k}}.\label{eq:boundWeight}
    \end{align}
    
    That being said, we have what follows:
    \begin{align}
        \left\| \nabla_{\target} \estim{\estimname}_{\win_{k}} J(\target) \right\|_{2} &\le \left\| \frac{1}{N} \sum_{i=k_0}^{k} \sum_{j=1}^{N} \left( \frac{\alpha_{i,k} (1-\lambda_{i,k}) \frac{p_{\vtheta_{i}}(\tau_{i,j})}{p_{\target}(\tau_{i,j})^{2}} \nabla_{\overbar{{\vtheta}}} p_{\target}(\tau_{i,j}) \singleEstim{}_{\target}(\tau_{i,j})}{\left((1-\lambda_{i,k}) \frac{p_{\vtheta_{i}}(\tau_{i,j})}{p_{\target}(\tau_{i,j})} + \lambda_{i,k} \right)^{2}} + \frac{\alpha_{i,k} \nabla_{\target} \singleEstim{}_{\target}(\tau_{i,j})}{(1-\lambda_{i,k}) \frac{p_{\vtheta_{i}}(\tau_{i,j})}{p_{\target}(\tau_{i,j})} + \lambda_{i,k}} \right) \right\|_{2} \\
        &\le \frac{1}{N} \sum_{i=k_0}^{k} \sum_{j=1}^{N} \left( \frac{\alpha_{i,k} \left\|\nabla_{\overbar{{\vtheta}}} \log p_{\target}(\tau_{i,j})\right\|_{2} \left\|\singleEstim{}_{\target}(\tau_{i,j})\right\|_{2}}{(1-\lambda_{i,k}) \frac{p_{\vtheta_{i}}(\tau_{i,j})}{p_{\target}(\tau_{i,j})} + \lambda_{i,k}} + \frac{\alpha_{i,k} \left\| \nabla_{\target} \singleEstim{}_{\target}(\tau_{i,j}) \right\|_{2}}{(1-\lambda_{i,k}) \frac{p_{\vtheta_{i}}(\tau_{i,j})}{p_{\target}(\tau_{i,j})} + \lambda_{i,k}} \right)  \\
        &\le \underbrace{\frac{1}{N} \sum_{i=k_0}^{k} \sum_{j=1}^{N} \frac{\alpha_{i,k}}{\lambda_{i,k}} \left\| \singleEstim{}_{\target}(\tau_{i,j})\right\|_{2} \left\| \nabla_{\overbar{{\vtheta}}} \log p_{\target}(\tau_{i,j}) \right\|_{2}}_{\eqqcolon \text{\normalsize \Att}} + \underbrace{\frac{1}{N} \sum_{i=k_0}^{k} \sum_{j=1}^{N} \frac{\alpha_{i,k}}{\lambda_{i,k}} \left\| \nabla_{\target} \singleEstim{}_{\target}(\tau_{i,j}) \right\|_{2}}_{\eqqcolon \text{\normalsize \Btt}},
    \end{align}
    where we used inequalities introduced above.
    Before continuing with the derivation, we exploit the choices for the $\alpha_{i,k}$ and $\lambda_{i,k}$ terms:
    \begin{align}
        \lambda_{i,k} \ge \sqrt{\frac{4 \dt \log 6 + 4 \log\left(\frac{1}{\delta}\right)}{3 D N \win_{k} }} \quad \text{and} \quad \alpha_{i,k} = \frac{1}{\win_{k}}.
    \end{align}
    
    This choice for $\alpha_{i,k}$ and $\lambda_{i,k}$ leads to the following constant ratios for every $i \in \dsb{k_0,k}$:
    \begin{align}\label{eq:boundLA}
        \frac{\alpha_{i,k}}{\lambda_{i,k}} \le \sqrt{\frac{3 N D}{4 \win_{k} \left( \dt \log 6 + \log \left(\frac{1}{\delta}\right) \right)}} \le \sqrt{\frac{3 N D}{4 \win_{k}}}.
    \end{align}
    Now, exploiting this bound on $\alpha_{i,k}/\lambda_{i,k}$, we focus on the first term \Att:
    \begin{align}
        \Att &= \frac{1}{N} \sum_{i=k_0}^{k} \sum_{j=1}^{N} \frac{\alpha_{i,k}}{\lambda_{i,k}} \left\| \singleEstim{}_{\target}(\tau_{i,j})\right\|_{2} \left\| \nabla_{\overbar{{\vtheta}}} \log p_{\target}(\tau_{i,j}) \right\|_{2} \\
        &\le \gradbfree \sqrt{\frac{3 D}{4 N \win_{k}}} \sum_{i=k_0}^{k} \sum_{j=1}^{N} \left\| \nabla_{\overbar{{\vtheta}}} \log p_{\target}(\tau_{i,j}) \right\|_{2} \\
        &\le \gradbfree \sqrt{\frac{3 D}{4 N \win_{k}}} \sum_{i=k_0}^{k} \sum_{j=1}^{N}  \sum_{l=1}^{T} \left\| \nabla_{\overbar{{\vtheta}}} \log \pi_{\target}(\va_{\tau_{i,j},l} | \vs_{\tau_{i,j},l})\right\|_{2} \\
        &\le \gradbfree T \LCpi\sqrt{\frac{3 }{4}D N \win_{k}},
    \end{align}
    where we exploited Lemma~\ref{lem:bounded-G-G2}, holding under Assumption~\ref{asm:log-pi-lc-ls}.
    We can now focus on term \Btt. We will exploit the bound on $\alpha_{i,k}/\lambda_{i,k}$ and Lemma~\ref{lem:bounded-G-G2}. The following holds:
    \begin{align}
        \Btt &= \frac{1}{N} \sum_{i=k_0}^{k} \sum_{j=1}^{N} \frac{\alpha_{i,k}}{\lambda_{i,k}} \left\| \nabla_{\target} \singleEstim{}_{\target}(\tau_{i,j}) \right\|_{2} \\
        &\le G_{2} \sqrt{\frac{3 }{4}D N \win_{k}}.
    \end{align}
    Moving to the term $\left\| \nabla J(\target_{1}) - \nabla J(\target_{2}) \right\|_{2}$, we simply exploit Lemma \ref{asm:J-lc-ls}.
    All in all, we have:
    \begin{align}
        \left\| (\estim{\estimname}_{\win_{k}} J(\vtheta_{1}) - \nabla J(\vtheta_{1})) - (\estim{\estimname}_{\win_{k}} J(\vtheta_{2}) - \nabla J(\vtheta_{2})) \right\|_{2} & \le (GT\LCpi + G_2) \left(1 +  \sqrt{\frac{3 }{4}D N \win_{k}} \right)\left\| \vtheta_{1} - \vtheta_{2} \right\|_{2}\\
        & = \LCpm \left\| \vtheta_{1} - \vtheta_{2} \right\|_{2}.
    \end{align}
    We observe that when $T=+\infty$ and $\gamma<1$, we identify the length of a trajectory with the effective horizon $T \approx \widetilde{\cO}(1/(1-\gamma))$. This approximation only affects logarithmic terms in the sample complexity~\citep{yuan2022general}.
\end{proof}

\begin{restatable}[]{lemma}{setContaining} \label{lem:setContaining}
    Under Assumptions~\ref{asm:log-pi-lc-ls} and \ref{asm:bounded-renyi}, suppose to run \emph{\algnameshort} for $k \in \Nat$ iterates, using the choices of $\alpha_{i,k} = 1/\omega_k$ and $\lambda_{i,k}$ not smaller than those prescribed in Theorem \ref{thr:concentration-pm}. Then, it holds that:
    \begin{align}
        \| \vtheta_{k} - \vtheta_{k_0} \|_2 \le  \overline{\zeta}_k  \omega_k^{3/2} \gradbfree \sqrt{\frac{3}{4} D N },
    \end{align}
    where $k_0 = k - \omega_k+1$ and $\overline{\zeta}_k = \max_{z \in \dsb{k_0,k-1}} \zeta_z$.
\end{restatable}

\begin{proof}
Let us consider the maximum displacement between two subsequent parameterizations $\vtheta_{z+1}$ and $\vtheta_{z}$ where $z \in \dsb{k_0,k-1}$:
    \begin{align}
        \left\| \vtheta_{z+1} - \vtheta_{z} \right\|_{2} &= \zeta_z \left\| \estim{\estimname}_{\win_{z}} J(\vtheta_{z}) \right\|_{2} \\
        &= \zeta_z \left\| \frac{1}{N} \sum_{i=z-\win_{z}+1}^{z} \sum_{j=1}^{N} \frac{\alpha_{i,z}}{(1-\lambda_{i,z}) \frac{p_{\vtheta_{i}(\tau_{i,j})}}{p_{\vtheta_{z}(\tau_{i,j})}} + \lambda_{i,z}} \singleEstim{}_{\vtheta_{z}} (\tau_{i,j}) \right\|_{2} \\
        & \le \zeta_z \frac{\gradbfree}{N} \sum_{i=z-\win_{z}+1}^{z} \sum_{j=1}^{N} \frac{\alpha_{i,z}}{\lambda_{i,z}} \\
        &\le \zeta_z \gradbfree \sqrt{\frac{3 D N \win_{z}}{4 \dt \log 6 + 4 \log \left(\frac{1}{\delta}\right)}} \\
        &\le \zeta_z \gradbfree \sqrt{\frac{3}{4} D N \win_{z}},
    \end{align}
    given the selection of the terms $\alpha_{i,k}$ and $\lambda_{i,k}$ and recovering the upper bound on the ratios $\alpha_{i,k}/\lambda_{i,k}$ shown in the derivation of Lemma~\ref{lem:estim-error-lc}.
    Thus, the maximum displacement for $\| \vtheta_{k} - \vtheta_{k_0}\|_{2}$ is upper bounded as follows:
    \begin{align}
        \left\| \vtheta_{k} - \vtheta_{k_0} \right\|_{2} \le \sum_{z = k_0}^{k-1} \left\| \vtheta_{z+1} - \vtheta_{z} \right\|_{2} \le \gradbfree \sqrt{\frac{3}{4} D N } \sum_{z =k_0}^{k-1} \zeta_z \sqrt{\omega_z} \le  \max_{z \in \dsb{k_0,k-1}} \zeta_z \gradbfree \omega_k^{3/2} \sqrt{\frac{3}{4} D N },
    \end{align}
    having observed that $\omega_z \le \omega_k$ for $z \in \dsb{k_0,k-1}$.
\end{proof}

\estimErrorBoundCover*
\begin{proof}
    Let $\rho_k = \overline{\zeta}_k  \omega_k^{3/2} \gradbfree \sqrt{\frac{3}{4} D N }$ be a radius and let us define the ball centered in $\vtheta_{k_0}$ and with radius $\rho_k$:
    \begin{align}
        \mathcal{B}^{d_{\Theta}}_{\rho_k}(\vtheta_{k_0}) \coloneqq \left\{ \vtheta \in \Reals^{d_{\Theta}} : \| \vtheta - \vtheta_{k_0} \|_2 \le \rho_k \right\}. 
    \end{align}
    Let  us now define $\overline{\Theta}_k \coloneqq \mathcal{B}^{d_{\Theta}}_{\rho_k}(\vtheta_{k_0}) \cap \Theta$. From Lemma \ref{lem:setContaining}, we have that $\vtheta_k \in  \overline{\Theta}_k$. Thus, the following uniform bound holds:
    \begin{align}
        \left\| \estim{\estimname}_{\win_{k}} J(\vtheta_{k}) - \nabla J(\vtheta_{k}) \right\|_{2} &\le \sup_{\target \in \overline{\Theta}_k } \left\| \estim{\estimname}_{\win_{k}} J(\target) - \nabla J(\target) \right\|_{2}.
    \end{align}
    Define $\mathcal{C}_{\eta_k}$ as an $\eta_k$-cover of the ball $\mathcal{B}^{d_{\Theta}}_{\rho_k}(\vtheta_{k_0})$ (i.e., $\sup_{\vtheta \in \mathcal{B}^{d_{\Theta}}_{\rho_k}(\vtheta_{k_0})} \inf_{\vtheta' \in \mathcal{C}_{\eta_k}} \| \vtheta- \vtheta'\|_2 \le \eta_k$), which has cardinality at most $|C_{\eta_k}| \le \left(1+\frac{2\rho_k}{\eta_k}\right)^{\dt} $, holding for every $\eta_k > 0$, even for $\eta_k > \rho_k$ (see Lemma 20.1 of \citealt{lattimore2020bandit}). Clearly, $\mathcal{C}_{\eta_k}$ represents also an $\eta_k$-cover of $\overline{\Theta}_k$. By exploiting the Lipschitzianity of the estimation error, as proved in Lemma \ref{lem:estim-error-lc}, we have:
    \begin{align}
        \sup_{\target \in \overline{\Theta}_k } & \left\| \estim{\estimname}_{\win_{k}} J(\target) - \nabla J(\target) \right\|_{2} \\
        & \le \sup_{\target \in \overline{\Theta}_k } \inf_{\target' \in \mathcal{C}_{\eta_k}} \left\{ \left\| \estim{\estimname}_{\win_{k}} J(\target) - \nabla J(\target) \right\|_{2} \pm \left\| \estim{\estimname}_{\win_{k}} J(\target') - \nabla J(\target') \right\|_{2}  \right\} \\
        & \le \sup_{\target \in \overline{\Theta}_k } \inf_{\target' \in \mathcal{C}_{\eta_k}} \left\{ \left|\left\| \estim{\estimname}_{\win_{k}} J(\target) - \nabla J(\target) \right\|_{2} - \left\| \estim{\estimname}_{\win_{k}} J(\target') - \nabla J(\target') \right\|_{2}  \right| \right\} +  \max_{\target \in  \mathcal{C}_{\eta_k} }\left\| \estim{\estimname}_{\win_{k}} J(\target) - \nabla J(\target) \right\|_{2} \\
        & \le  \sup_{\target \in \overline{\Theta}_k } \inf_{\target' \in \mathcal{C}_{\eta_k}} \left\{ \left\| \left(\estim{\estimname}_{\win_{k}} J(\target) - \nabla J(\target)\right) - \left( \estim{\estimname}_{\win_{k}} J(\target') - \nabla J(\target') \right) \right\|_{2}   \right\} +  \max_{\target \in  \mathcal{C}_{\eta_k} }\left\| \estim{\estimname}_{\win_{k}} J(\target) - \nabla J(\target) \right\|_{2}\\
        & \le  \LCpm \sup_{\target \in \overline{\Theta}_k } \inf_{\target' \in \mathcal{C}_{\eta_k}} \| \vtheta - \vtheta'\|_2 + \sup_{\target \in  \mathcal{C}_{\eta_k} }\left\| \estim{\estimname}_{\win_{k}} J(\target) - \nabla J(\target) \right\|_{2} \\
        & \le \LCpm \eta_k + \max_{\target \in  \mathcal{C}_{\eta_k} }\left\| \estim{\estimname}_{\win_{k}} J(\target) - \nabla J(\target) \right\|_{2},
    \end{align}
    having also exploited the following triangular inequality $|\|\mathbf{x}\|_2 - \|\mathbf{y}\|_2 | \le \| \mathbf{x} - \mathbf{y}\|_2$ and the definition of cover. Now, by applying Theorem \ref{thr:concentration-pm} with a union bound, we have that, with probability at least $1-|\mathcal{C}_{\eta_k}|\delta \ge  1- \left(1+\frac{2\rho_k}{\eta_k}\right)^{\dt}\delta$:
    \begin{align}
        \left\| \estim{\estimname}_{\win_{k}} J(\vtheta_{k}) - \nabla J(\vtheta_{k}) \right\|_{2} \le  \LCpm \eta_k + 8 \gradbfree \sqrt{\frac{D \dt \log 6  + D \log\left( \frac{1}{\delta} \right)}{N \win_{k}}}.
    \end{align}
    We select $\eta_{k}$ as:
    \begin{align}
        \eta_{k} = \frac{8G}{\LCpm} \sqrt{\frac{D \dt}{N \win_{k}}}. 
    \end{align}
    By substituting, we have, with probability at least $ 1- \left(1+\frac{2\rho_k}{\eta_k}\right)^{\dt}\delta$:
    \begin{align}
        \left\| \estim{\estimname}_{\win_{k}} J(\vtheta_{k}) - \nabla J(\vtheta_{k}) \right\|_{2} & \le  8 \gradbfree \sqrt{\frac{D \dt}{N \win_{k}}} + 8 \gradbfree \sqrt{\frac{D \dt \log 6  + D \log\left( \frac{1}{\delta} \right)}{N \win_{k}}} \\
        & \le 16 \gradbfree \sqrt{\frac{D \dt \log 6  + D \log\left( \frac{1}{\delta} \right)}{N \win_{k}}}.
    \end{align}
    By rescaling $\delta$ and making the symbols explicit, we have that, with probability at least $1-\delta$:
    {\thinmuskip=2mu
  \medmuskip=2mu \thickmuskip=2mu
    \begin{align}
        \left\| \estim{\estimname}_{\win_{k}} J(\vtheta_{k}) - \nabla J(\vtheta_{k}) \right\|_{2} 
        & \le 16 \gradbfree \sqrt{\frac{D \dt \log \left(6+\frac{12\rho_k}{\eta_k}\right)  + D \log\left( \frac{1}{\delta} \right)}{N \win_{k}}} \\
        & = 16 \gradbfree \sqrt{\frac{D \dt \log \left(6 + \frac{3\sqrt{3} \LCpm \overline{\zeta}_k \omega_k^2 N}{4 \sqrt{d_{\Theta}}} \right)  + D \log\left( \frac{1}{\delta} \right)}{N \win_{k}}} \\
        & = 16 \gradbfree \sqrt{\frac{D \dt \log \left(6 + (\gradbfree T\LCpi + G_2) \left(1+\sqrt{\frac{3}{4} DN\omega_k}\right)\frac{3\sqrt{3} \ \overline{\zeta}_k \omega_k^2 N}{4 \sqrt{d_{\Theta}}} \right)  + D \log\left( \frac{1}{\delta} \right)}{N \win_{k}}} \label{eq:boundComplete} \\
        & = \widetilde{\mathcal{O}} \left( \gradbfree \sqrt{\frac{D \left(\dt  +  \log\left( \frac{1}{\delta} \right) \right)}{N \win_{k}}} \right),
    \end{align}}%
    This requires to adapt the value of the regularizers $\lambda_{i,k}$ as follows:
    \begin{align}
        \lambda_{i,k} & = \sqrt{\frac{4 \dt \log \left(6 + \frac{3\sqrt{3} \LCpm \overline{\zeta}_k \omega_k^2 N}{4 \sqrt{d_{\Theta}}} \right)  + 4 \log\left( \frac{1}{\delta} \right)}{3 D N  \win_{k}}} \\
        & = \sqrt{\frac{4 \dt \log \left(6 + (\gradbfree T \LCpi + G_2) \left(1+\sqrt{\frac{3}{4} DN\omega_k}\right)\frac{3\sqrt{3} \ \overline{\zeta}_k \omega_k^2 N}{4 \sqrt{d_{\Theta}}} \right)  + 4 \log\left( \frac{1}{\delta} \right)}{3 D N \win_{k}}}\label{eq:lambdaDiff} \\
        & = \widetilde{\mathcal{O}} \left(  \sqrt{\frac{ \dt+ \log\left( \frac{1}{\delta} \right)}{ D N \win_{k}}} \right),
    \end{align}
    which are not smaller than those prescribed in Theorem \ref{thr:concentration-pm}, making Lemmas \ref{lem:estim-error-lc} and \ref{lem:setContaining} hold. To guarantee that $\lambda_{i,k} \le 1$, we enforce the condition:
    \begin{align}
        N \ge \widetilde{\mathcal{O}} \left( \frac{\dt + \log\left( \frac{1}{\delta} \right)}{D} \right).
    \end{align}
\end{proof}

\section{Proofs of Section~\ref{sec:sc}} \label{apx:sc}

\begin{restatable}[Expectation bound]{lemma}{expBound} \label{lem:expBound}
    Under the same assumptions of Theorem \ref{thr:estimation-error}, if $N \ge \widetilde{\mathcal{O}}\left(\frac{\dt}{D}\right)$, for every $k \in \Nat$, it holds that:
    \begin{align}
        \E\left[\left\| \estim{\estimname}_{\win_{k}} J(\vtheta_{k}) - \nabla J(\vtheta_{k}) \right\|_{2}^2\right] \le \widetilde{\cO}\left(\frac{\gradbfree^2 D \dt}{N \win_{k}} \right).
    \end{align}
\end{restatable}

\begin{proof}
    Let us denote $B(\delta)$ the bound in Equation \eqref{eq:boundComplete} and decompose the expectation as follows:
    \begin{align}
         \E\left[\left\| \estim{\estimname}_{\win_{k}} J(\vtheta_{k}) - \nabla J(\vtheta_{k}) \right\|_{2}^2\right] & =  \E\left[\left\| \estim{\estimname}_{\win_{k}} J(\vtheta_{k}) - \nabla J(\vtheta_{k}) \right\|_{2}^2 \mathds{1}\left\{\left\| \estim{\estimname}_{\win_{k}} J(\vtheta_{k}) - \nabla J(\vtheta_{k}) \right\|_{2}^2 \le B(\delta)^2\right\}\right] \\
         & \quad +  \E\left[\left\| \estim{\estimname}_{\win_{k}} J(\vtheta_{k}) - \nabla J(\vtheta_{k}) \right\|_{2}^2 \mathds{1}\left\{\left\| \estim{\estimname}_{\win_{k}} J(\vtheta_{k}) - \nabla J(\vtheta_{k}) \right\|_{2}^2 > B(\delta)^2\right\}\right] \\
         & \le B(\delta)^2 +\left(1+\sqrt{\frac{3}{4} ND\omega_k} \right)^2 \gradbfree^2 \delta.
    \end{align}
    having applied Theorem \ref{thr:estimation-error} and having bounded the norm as:
    \begin{align}
        \left\|\estim{\estimname}_{\win_{k}} J(\vtheta_{k}) - \nabla J(\vtheta_{k}) \right\|_{2} & \le \left\|\estim{\estimname}_{\win_{k}} J(\vtheta_{k})\right\|_2 +\left\| \nabla J(\vtheta_{k}) \right\|_{2} \\
        & \le \left\|  \left( \frac{1}{N} \sum_{i=k_0}^{k} \sum_{j=1}^{N} \frac{\alpha_{i,k}}{(1-\lambda_{i,k}) \frac{p_{\vtheta_{i}}(\tau_{i,j})}{p_{\vtheta_{k}}(\tau_{i,j})} + \lambda_{i,k}} \singleEstim{}_{\vtheta_{k}}(\tau_{i,j}) \right)\right\|_{2} + \left\| \E_{\tau \sim p_{\vtheta_k}}\left[ \singleEstim{}_{\vtheta_{k}}(\tau) \right]\right\|_2\\
        & \le \frac{1}{N} \sum_{i=k_0}^{k} \sum_{j=1}^{N} \frac{\alpha_{i,k}}{\lambda_{i,k}} \left\| \singleEstim{}_{\vtheta_k}(\tau_{i,j}) \right\|_{2} + \E_{\tau \sim p_{\vtheta_k}}\left[ \left\|\singleEstim{}_{\vtheta_{k}}(\tau)\right\|_2  \right] \\
        & \le \left(1+\sqrt{\frac{3}{4} ND\omega_k }\right)\gradbfree,
    \end{align}
    having applied the inequalities in Equations \eqref{eq:boundWeight} and \eqref{eq:boundLA}.
    Now we choose:
    \begin{align}
        \delta = \frac{1}{N^2\omega_k^2} \le 1.
    \end{align}
    Substituting, we obtain:
    \begin{align}
        \E & \left[\left\| \estim{\estimname}_{\win_{k}} J(\vtheta_{k}) - \nabla J(\vtheta_{k}) \right\|_{2}^2\right] \le B\left( \frac{1}{N^2\omega_k^2}  \right)^2+ \left(1+\sqrt{\frac{3}{4} ND\omega_k} \right)^2 \gradbfree^2 \frac{1}{ N^2\omega_k^2}  \\
        & \le B\left( \frac{1}{N^2\omega_k^2}  \right)^2+  \frac{4 \gradbfree^2 D}{ N\omega_k}\\ 
        & =  16^2 \gradbfree^2 \frac{D \dt \log \left(6 + (\gradbfree T\LCpi + G_2) \left(1+\sqrt{\frac{3}{4} DN\omega_k}\right)\frac{3\sqrt{3} \ \overline{\zeta}_k \omega_k^2 N}{4 \sqrt{d_{\Theta}}} \right)  + D \log\left( N^2\omega_k^2\right)}{N \win_{k}} +  \frac{4 \gradbfree^2 D}{ N\omega_k}  \\
        & \le 260 \gradbfree^2 \frac{D \dt \log \left(6 + (\gradbfree T\LCpi + G_2) \left(1+\sqrt{\frac{3}{4} DN\omega_k}\right)\frac{3\sqrt{3} \ \overline{\zeta}_k \omega_k^2 N}{4 \sqrt{d_{\Theta}}} \right)  + D \log\left( N^2\omega_k^2\right)}{N \win_{k}} \label{eq:boundExpect}
        \\
        & = \widetilde{\cO}\left(\frac{\gradbfree^2 D \dt}{N \win_{k}} \right).
    \end{align}
    By replacing the chosen value of $\delta$, we get the condition on $N$.
\end{proof}

\sampleComplexity*
\begin{proof}
    For the sake of readability, we divided this proof into three parts. In the first one, we bound the performance difference across subsequent iterations $J(\vtheta_{k+1}) - J(\vtheta_{k})$ for $k \in \Nat$. In the second part, we telescope the previous result in order to obtain an upper bound on $\sum_{k=1}^{\ite} \E[\| \nabla J(\vtheta_{k}) \|_{2}^{2}] / \ite$, for which we employ the result of Theorem~\ref{thr:estimation-error}. In the third part, we leverage the result obtained in the second part to compute the convergence rate of \algnameshort.
    
    \textbf{Part $(i)$: Bounding the Performance Difference Across Iterations.}~~Let $k \in \Nat$ and recall that we restrict to constant learning rates $\zeta_k = \zeta$. Let us start by bounding the difference in performance between $\vtheta_{k+1}$ and $\vtheta_{k}$. Under Assumption \ref{asm:log-pi-lc-ls}, the following quadratic bound holds (see Lemma \ref{asm:J-lc-ls}):
    \begin{align}
        J(\vtheta_{k+1}) - J(\vtheta_{k}) &\ge \left< \vtheta_{k+1} - \vtheta_{k}, \nabla J(\vtheta_{k}) \right> - \frac{\LSJ}{2} \left\| \vtheta_{k+1} - \vtheta_{k} \right\|_{2}^{2} \\
        &= \zeta \left< \estim{\estimname}_{\win_{k}} J(\vtheta_{k}), \nabla J(\vtheta_{k}) \right> - \frac{\LSJ}{2} \zeta^{2} \left\| \estim{\estimname}_{\win_{k}} J(\vtheta_{k}) \right\|_{2}^{2}, \label{eq:rpg-convergence-quad-bound}
    \end{align}
    where the last inequality follows from the update rule of \algnameshort.
    Before going on, note that the following holds for the inner product of $\estim{\estimname}_{\win_{k}} J(\vtheta_{k})$ and $\nabla J(\vtheta_{k})$:
    \begin{align}
        \left\| \estim{\estimname}_{\win_{k}} J(\vtheta_{k}) - \nabla J(\vtheta_{k}) \right\|_{2}^{2} = \left\| \estim{\estimname}_{\win_{k}} J(\vtheta_{k}) \right\|_{2}^{2} + \left\| \nabla J(\vtheta_{k}) \right\|_{2}^{2} - 2 \left< \estim{\estimname}_{\win_{k}} J(\vtheta_{k}), \nabla J(\vtheta_{k}) \right>, 
    \end{align}
    which implies the following:
    \begin{align}
        \left< \estim{\estimname}_{\win_{k}} J(\vtheta_{k}), \nabla J(\vtheta_{k}) \right> = -\frac{1}{2}\left\| \estim{\estimname}_{\win_{k}} J(\vtheta_{k}) - \nabla J(\vtheta_{k}) \right\|_{2}^{2} + \frac{1}{2} \left\| \estim{\estimname}_{\win_{k}} J(\vtheta_{k}) \right\|_{2}^{2} + \frac{1}{2}\left\| \nabla J(\vtheta_{k}) \right\|_{2}^{2}.
    \end{align}
    By substituting this result into Line~\eqref{eq:rpg-convergence-quad-bound}, we obtain the following:
    \begin{align}
        &J(\vtheta_{k+1}) - J(\vtheta_{k}) \\
        &\ge -\frac{\zeta}{2} \left\| \estim{\estimname}_{\win_{k}} J(\vtheta_{k}) - \nabla J(\vtheta_{k}) \right\|_{2}^{2} + \frac{\zeta}{2} \left\| \estim{\estimname}_{\win_{k}} J(\vtheta_{k}) \right\|_{2}^{2} + \frac{\zeta}{2} \left\| \nabla J(\vtheta_{k}) \right\|_{2}^{2} - \frac{\LSJ}{2} \zeta^{2} \left\| \estim{\estimname}_{\win_{k}} J(\vtheta_{k}) \right\|_{2}^{2} \\
        &= -\frac{\zeta}{2} \left\| \estim{\estimname}_{\win_{k}} J(\vtheta_{k}) - \nabla J(\vtheta_{k}) \right\|_{2}^{2} + \frac{\zeta}{2} \left\| \nabla J(\vtheta_{k}) \right\|_{2}^{2} + \frac{\zeta}{2}\left( 1 - \LSJ \zeta \right) \left\| \estim{\estimname}_{\win_{k}} J(\vtheta_{k}) \right\|_{2}^{2} \\
        &\ge -\frac{\zeta}{2} \left\| \estim{\estimname}_{\win_{k}} J(\vtheta_{k}) - \nabla J(\vtheta_{k}) \right\|_{2}^{2} + \frac{\zeta}{2} \left\| \nabla J(\vtheta_{k}) \right\|_{2}^{2}, \label{eq:rpg-convergence-perf-bound}
    \end{align}
    where the last inequality follows by selecting a step size such that $\zeta \le 1/\LSJ$.

    \paragraph{Part $(ii)$: Telescope the Performance Difference Across Iterations.}
    Now, telescoping the performance difference across iterations $J(\vtheta_{k+1}) - J(\vtheta_{k})$ up to $K \in \Nat$, the following holds:
    \begin{align}
        \sum_{k=1}^{\ite} \left( J(\vtheta_{k+1}) - J(\vtheta_{k}) \right) = J(\vtheta_{\ite+1}) - J(\vtheta_{1}).
    \end{align}

    Moreover, by exploiting the result of Equation~\eqref{eq:rpg-convergence-perf-bound}, the following holds:
    \begin{align}
        \sum_{k=1}^{\ite} \left(J(\vtheta_{k+1}) - J(\vtheta_{k})\right) \ge - \frac{\zeta}{2} \sum_{k=1}^{\ite} \left\| \estim{\estimname}_{\win_{k}} J(\vtheta_{k}) - \nabla J(\vtheta_{k}) \right\|_{2}^{2} + \frac{\zeta}{2} \sum_{k=1}^{\ite} \left\| \nabla J(\vtheta_{k}) \right\|_{2}^{2}.
    \end{align}
    We recall that from Lemma \ref{lem:expBound} (Equation \ref{eq:boundExpect}), we have that for every $k \in \Nat$:
    \begin{align}
        \E & \left[\left\| \estim{\estimname}_{\win_{k}} J(\vtheta_{k}) - \nabla J(\vtheta_{k}) \right\|_{2}^2\right]\\
        & \le 260 \gradbfree^2 \frac{D \dt \log \left(6 + (\gradbfree T\LCpi + G_2) \left(1+\sqrt{\frac{3}{4} DN\omega_k}\right)\frac{3\sqrt{3} \ \zeta \omega_k^2 N}{4 \sqrt{d_{\Theta}}} \right)  + D \log\left( N^2\omega_k^2\right)}{N \win_{k}} \\
        & \le 1170 \frac{ \gradbfree^2 D \dt}{N\omega_k} \log \left(\left(  6 + \frac{3\sqrt{3}}{2} (\gradbfree T\LCpi+G_2)\zeta \sqrt{\frac{D}{\dt}}  \right)^{2/9} N\omega_k\right)\\
        & \le  \frac{1170 \gradbfree^2 D \dt}{N\omega_k} \log \Bigg( \underbrace{\left(  6 + \frac{3\sqrt{3}}{2L_{2,J}} (\gradbfree T\LCpi+G_2)\zeta \sqrt{\frac{D}{\dt}}  \right)^{2/9}}_{\eqqcolon \Psi} N\omega_k\Bigg),\label{eq:bound-estim-error-after-K-ub}
    \end{align}
    having simply observed that $\overline{\zeta}_k=\zeta$ because we selected a fixed learning rate, having applied algebraic manipulations, and recalled that $\zeta \le 1/ L_{2,J}$.
    Now, taking the expectation and exploiting Equation~\eqref{eq:bound-estim-error-after-K-ub}, the following holds:
    \begin{align}
        \E\left[\sum_{k=1}^{\ite} \left(J(\vtheta_{k+1}) - J(\vtheta_{k})\right) \right] & \ge \frac{\zeta}{2} \sum_{k=1}^{\ite} \E\left[\left\| \nabla J(\vtheta_{k}) \right\|_{2}^{2}\right] - \frac{\zeta}{2} \sum_{k=1}^{K} \E\left[\left\| \estim{\estimname}_{\win_{k}} J(\vtheta_{k}) - \nabla J(\vtheta_{k}) \right\|_{2}^{2}\right]\\
        &\ge \frac{\zeta}{2} \sum_{k=1}^{\ite} \E\left[\left\| \nabla J(\vtheta_{k}) \right\|_{2}^{2}\right]  -  \frac{ 1170 \zeta \gradbfree^2 D \dt \log(\Psi N\omega_K)}{2N} \sum_{k=1}^K \frac{1}{\omega_k}.
    \end{align}
    Rearranging the previous result and dividing both sides by $\ite$, we obtain:
    \begin{align}
        \frac{\sum_{k=1}^{\ite} \E\left[\left\| \nabla J(\vtheta_{k}) \right\|_{2}^{2}\right]}{\ite} \label{eq:convergence-eq-1} & \le  \frac{ 1170 \gradbfree^2 D \dt \log(\Psi N\omega_K)}{NK} \sum_{k=1}^K \frac{1}{\omega_k} + \frac{2 \left( \E[J(\vtheta_{\ite+1})] - J(\vtheta_{1}) \right)}{\zeta \ite} \\
        & \le  \frac{1170  \gradbfree^2 D \dt \log(\Psi N\omega_K)}{NK} \sum_{k=1}^K \frac{1}{\omega_k} + \frac{2 \left( J^{*} - J(\vtheta_{1}) \right)}{\zeta \ite} ,
    \end{align}
    where we just exploited the fact that for every $\vtheta \in \Theta$, we have $J(\vtheta) \le J^{*}$, with $J^{*} = \sup_{\vtheta \in \Theta} J(\vtheta)$.
    To control the first addendum, we consider two cases:
    
    \paragraph{Case I: $\omega  \ge K$.} In this case, we have that $\omega_k = \min\{k,\omega\} = k$ for every $k \in \dsb{K}$ and, consequently, $\omega_K = K$. Thus, we have:
    \begin{align}
        \frac{\sum_{k=1}^{\ite} \E\left[\left\| \nabla J(\vtheta_{k}) \right\|_{2}^{2}\right]}{\ite}  & \le  \frac{ 1170 \gradbfree^2 D \dt \log(\Psi NK)}{NK} \sum_{k=1}^K \frac{1}{k} + \frac{2 \left( J^{*} - J(\vtheta_{1}) \right)}{\zeta \ite}  \\
        & \le  \frac{1170  \gradbfree^2 D \dt \log(e \Psi NK)^2}{NK} + \frac{2 \left( J^{*} - J(\vtheta_{1}) \right)}{\zeta \ite} ,
    \end{align}
    having bounded $\sum_{k=1}^K \frac{1}{k} \le \log K + 1 \le \log(e \Psi NK)$. 

    \paragraph{Case II: $\omega  < K$.} In this case, we have that $\omega_K = \omega$ and we split the summation into two parts:
    \begin{align}
        \frac{\sum_{k=1}^{\ite} \E\left[\left\| \nabla J(\vtheta_{k}) \right\|_{2}^{2}\right]}{\ite} & \le  \frac{1170  \gradbfree^2 D \dt \log(\Psi N\omega)}{NK} \left( \sum_{k=1}^{\omega} \frac{1}{k} + \frac{K-\omega}{\omega}\right)+ \frac{2 \left( J^{*} - J(\vtheta_{1}) \right)}{\zeta \ite} \\
        & \le \frac{1170  \gradbfree^2 D \dt \log(e \Psi NK)^2}{NK} + \frac{1170  \gradbfree^2 D \dt \log(\Psi N\omega)}{N\omega} + \frac{2 \left( J^{*} - J(\vtheta_{1}) \right)}{\zeta \ite},
    \end{align}
    having bounded $\sum_{k=1}^\omega \frac{1}{k} \le \log \omega + 1 \le \log(e \Psi N\omega) \le \log(e \Psi NK)$.

    \paragraph{Part $(iii)$: Rate Computation.}
    We have to find sufficient conditions ensuring:
    \begin{equation}\label{eq:avg-iterate}
        \frac{\sum_{k=1}^{\ite} \E\left[\left\| \nabla J(\vtheta_{k}) \right\|_{2}^{2}\right]}{\ite} \le \epsilon.
    \end{equation} 
    We consider the two cases:

    \paragraph{Case I: $\omega \ge K$.} We enforce:
    \begin{align}
        \underbrace{\frac{1170  \gradbfree^2 D \dt \log(e \Psi NK)^2}{NK}}_{\eqqcolon \text{\normalsize \Att}} + \underbrace{\frac{2 \left( J^{*} - J(\vtheta_{1}) \right)}{\zeta \ite}}_{\eqqcolon \text{\normalsize \Btt}}\le \epsilon.
    \end{align}
    To do so, we enforce $\text{\normalsize \Att}\le \frac{\epsilon}{2}$ and $\text{\normalsize \Btt}\le \frac{\epsilon}{2}$. We start by imposing $\Btt \le \frac{\epsilon}{2}$, which allows us to retrieve a requirement on the iteration complexity of \algnameshort for ensuring the convergence to an $\epsilon$-approximate stationary point:
    \begin{align}
        \Btt = \frac{2 \left( J^{*} - J(\vtheta_{1}) \right)}{\zeta \ite} \le \frac{\epsilon}{2} \quad \implies \quad \ite \ge \frac{4 (J^{*} - J(\vtheta_{1}))}{\zeta \epsilon}.\label{eq:ite-comp}
    \end{align}
    Thus, the iteration complexity is of order $K \ge \cO((J^*-J(\vtheta_1))\zeta^{-1}\epsilon^{-1})$.
    Next, we impose $\Att \le \frac{\epsilon}{2}$ as well, from which we will recover a condition on the total sample complexity $N \ite$. To achieve this, we have to find the {minimum} $N \ite$ such that:
    \begin{align}
       \Att = \frac{1170  \gradbfree^2 D \dt \log(e \Psi NK)^2}{NK} \le \frac{\epsilon}{2}. \label{eq:rpg-convergence-sc-cond}
    \end{align}
    Alternatively, we can find the {maximum} $N \ite$ such that:
    \begin{align}
        \frac{1170  \gradbfree^2 D \dt \log(e \Psi NK)^2}{NK} \ge \frac{\epsilon}{2}.
    \end{align}
    Going for the latter method, and considering that $\log(ex)^{2} \le 4 \sqrt{x}$ for $x \ge 1$, we have the following:
    \begin{align}
        N \ite &\le  \frac{2340  \gradbfree^2 D \dt \log(e \Psi NK)^2}{\epsilon} \le \frac{9360  \gradbfree^2 D \dt \sqrt{\Psi NK}}{\epsilon} \implies N \ite \le \left( \frac{9360  \gradbfree^2 D \dt \sqrt{\Psi}}{\epsilon} \right)^{2}.
    \end{align}
    Now, switching back to Equation~\eqref{eq:rpg-convergence-sc-cond}, we substitute inside the $\log(e\Psi N \ite)$ the value of $NK$ just computed:
    \begin{align}
        N \ite &\ge \frac{2340  \gradbfree^2 D \dt}{\epsilon} \log\left( e \Psi \left(\frac{9360  \gradbfree^2 D \dt \sqrt{\Psi}}{\epsilon} \right)^2 \right)^2 = \frac{9360  \gradbfree^2 D \dt}{\epsilon} \log\left(  \frac{9360 \sqrt{e} \gradbfree^2 D \dt {\Psi}}{\epsilon} \right)^2 .
    \end{align}
    Thus, the sample complexity is of order $NK \ge \widetilde{\cO}(\gradbfree^2 D \dt \epsilon^{-1})$. Recalling the condition $N \ge \widetilde{\mathcal{O}}(D^{-1}\dt)$ inherited from Lemma \ref{lem:expBound}, combined with the iteration complexity bound, we have that the sample complexity is of order:
    \begin{align}
        NK \ge \widetilde{\mathcal{O}}\left( \frac{\dt}{\epsilon} \max\left\{ \gradbfree^2 D, \frac{J^*-J(\vtheta_1)}{D\zeta}\right\} \right).
    \end{align}
    
     \paragraph{Case II: $\omega < K$.} We enforce:
     \begin{align}
         \underbrace{\frac{1170  \gradbfree^2 D \dt \log(e \Psi NK)^2}{NK}}_{\eqqcolon \text{\normalsize \Att}} + \underbrace{\frac{1170  \gradbfree^2 D \dt \log(\Psi N\omega)}{N\omega}}_{\eqqcolon \text{\normalsize \Ctt}} + \underbrace{\frac{2 \left( J^{*} - J(\vtheta_{1}) \right)}{\zeta \ite}}_{\eqqcolon \text{\normalsize \Btt}} \le \epsilon.
     \end{align}
     To do so, we enforce $\text{\normalsize \Att}\le \frac{\epsilon}{3}$, $\text{\normalsize \Ctt}\le \frac{\epsilon}{3}$, and $\text{\normalsize \Btt}\le \frac{\epsilon}{3}$. Regarding $\Att$ and $\Btt$, we already derived the sufficient conditions for Case I, with just a different fraction of $\epsilon$, leading to:
     \begin{align}
         & \ite \ge \frac{6 (J^{*} - J(\vtheta_{1}))}{\zeta \epsilon}, \\
         & N \ite \ge \frac{3510  \gradbfree^2 D \dt}{\epsilon} \log\left( e \Psi \left(\frac{14040  \gradbfree^2 D \dt \sqrt{\Psi}}{\epsilon} \right)^2 \right)^2 = \frac{14040  \gradbfree^2 D \dt}{\epsilon} \log\left( \frac{14040 \sqrt{e}  \gradbfree^2 D \dt {\Psi}}{\epsilon}  \right)^2.
     \end{align}
     Let us now consider term $\text{\normalsize \Ctt}$ and enforce the corresponding condition:
     \begin{align}
         \text{\normalsize \Ctt} = \frac{1170  \gradbfree^2 D \dt \log(\Psi N\omega)}{N\omega} \le \frac{\epsilon}{3}.
     \end{align}
     We proceed analogously as above finding the {maximum} $N \omega$ such that:
    \begin{align}
        \frac{1170  \gradbfree^2 D \dt \log(\Psi N\omega)}{N\omega}  \ge \frac{\epsilon}{3}.
    \end{align}
    Considering that $\log(x) \le \sqrt{x}$ for $x \ge 1$, we have the following:
    \begin{align}
        N \omega &\le  \frac{3510  \gradbfree^2 D \dt \log(\Psi N\omega)}{\epsilon} \le \frac{3510  \gradbfree^2 D \dt \sqrt{\Psi N\omega}}{\epsilon} \implies N \omega \le \left( \frac{3510 \gradbfree^2 D \dt \sqrt{\Psi} }{\epsilon} \right)^{2}.
    \end{align}
    We can substitute inside the $\log(N \omega)$, to get:
    \begin{align}
        N \omega &\ge \frac{3510  \gradbfree^2 D \dt}{\epsilon} \log\left(  \Psi \left( \frac{3510 \gradbfree^2 D \dt \sqrt{\Psi} }{\epsilon} \right)^{2} \right)^2 = \frac{14040  \gradbfree^2 D \dt}{\epsilon} \log\left(   \frac{3510 \gradbfree^2 D \dt {\Psi} }{\epsilon} \right)^2.
    \end{align}
    This leads, beyond the conditions already enforced for Case I, i.e., $NK \ge \widetilde{\cO}(\gradbfree^2 D \dt \epsilon^{-1})$  and $K \ge \cO((J^{*} - J(\vtheta_{1}))\zeta^{-1}\epsilon^{-1})$, to the condition on the batch size $N \ge \widetilde{\mathcal{O}}(\gradbfree^2D\dt \omega^{-1} \epsilon^{-1} )$. Combining these conditions, we obtain a sample complexity of order:
    \begin{align}
        NK & \ge \widetilde{\mathcal{O}} \left( \frac{\gradbfree^2 D \dt}{\epsilon} \max\left\{1, \frac{ J^*-J(\vtheta_1)}{\zeta \omega\epsilon}  \right\} \right).
    \end{align}
    We conclude the proof by noting that by selecting $\vtheta_{\text{OUT}}$ uniformly at random from the parameters encountered during the learning $\{\vtheta_{k}\}_{k=1}^{\ite}$, then with the discussed conditions on iteration, batch, and sample complexities, Equation~\eqref{eq:avg-iterate} is equivalent to:
    \begin{align}
        \E\left[ \left\| \nabla J(\vtheta_{\text{OUT}}) \right\|_{2}^{2} \right] \le \epsilon,
    \end{align}
    where the expectation is taken \wrt the entire learning process and the uniform sampling procedure to extract $\vtheta_{\text{OUT}}$.
\end{proof}
\section{Implementation Details} \label{apx:implementation-details}

In this Appendix, we present the practical version of the \algnameshort method, \ie the one used in our experimental campaign. Its pseudo-code is provided in Algorithm~\ref{alg:full_rpg}, and it differs from the theoretical version described in Section~\ref{sec:algo} as outlined in the following.

\textbf{Estimator Coefficients.}~~Rather than relying on the theoretical values of $\lambda_{i,k}$ and $\alpha_{i,k}$, which depend on the typically unknown global bound $D$ (Assumption~\ref{asm:bounded-renyi}), we adopt an adaptive coefficient design.
This approach is motivated by a refined analysis of Theorem~\ref{thr:concentration-pm}, which suggests that optimal coefficients should scale with the local divergences rather than a worst-case bound. 
Specifically, as similarly done in~\citep[Theorem 5.1,][]{metelli2021subgaussian}, starting from Equation~\eqref{eq:exp-hook} and minimizing \wrt $\lambda_{i,k}$ and $\alpha_{i,k}$ we have the following shapes of the \estimname~coefficients, that we will use as a heuristic:
\begin{align}
    \lambda_{i,k} = \sqrt{\frac{4 \dt \log 6 + 4 \log(1/\delta)}{3 (D_{i}+1) N \win_{k}}} \quad \text{and} \quad \alpha_{i,k} = \frac{(D_{i}+1)^{-1/2}}{\sum_{l=k-\win_{k}+1}^{k} (D_{l}+1)^{-1/2}},
\end{align}
where $\delta$ is the confidence level, and $N$ is the batch size (assumed sufficiently large to offset the dimension dependence $\dt$ such that $\lambda_{i,k} \in [0,1]$). 
Here, $D_{i}$ represents the specific $\chi^2$-divergence between the distributions at iteration $i$ and $k$ (with $D_k=0$ when the target $\vtheta_{k}$ comes from the learning process), whereas the theoretical analysis simplified this by bounding $D_{i} + 1 \le D$ for all $i < k$ to construct deterministic coefficients.
In our practical implementation, we estimate these specific divergences directly. 
We define the coefficients as:
\begin{align}
    \lambda_{i,k} = \sqrt{\frac{1}{(\widehat{D}_{i}+1) N \win_{k}}} \quad \text{and} \quad \alpha_{i,k} = \frac{(\widehat{D}_{i}+1)^{-1/2}}{\sum_{l=k-\win_{k}+1}^{k} (\widehat{D}_{l}+1)^{-1/2}},
\end{align}
where $\widehat{D}_{i}$ is the empirical $\chi^2$-divergence between $p_{\vtheta_{i}}$ and $p_{\vtheta_{k}}$ (see Appendix~\ref{subapx:renyi} for estimation details). 
Note that 
the explicit scaling with $N$ and $\win_k$ is preserved to maintain the correct asymptotic behavior.
This adaptive design ensures that the coefficients reflect the actual discrepancy between the sampling and target policies. Crucially, both $\lambda_{i,k}$ and $\alpha_{i,k}$ decrease as the divergence $\widehat{D}_{i}$ increases. A small $\alpha_{i,k}$ effectively suppresses the contribution of high-variance gradients from policies presenting a high divergence w.r.t. the current one, while a small $\lambda_{i,k}$ limits the PM correction, thereby adaptively limiting its inherent bias when the distribution mismatch is severe.

\textbf{Output Parameterization.}~~We return the best parameterization observed during training, instead of sampling one uniformly at random from the set of visited iterates. While the theoretical version of \algnameshort relies on uniform sampling to support average-iterate convergence guarantees (see Theorem~\ref{thr:rpg-sc}), this strategy is not practical.

\textbf{Step Size.}~~We replace the constant step size $\zeta$ with an adaptive one, optimized via the Adam scheduler~\citep{kingma2017adammethodstochasticoptimization}. Appendix~\ref{subapx:behavior} provides guidance on how to set the initial learning rate when using modern optimizers.

In addition, Appendix~\ref{subapx:replay} discusses the differences between the trajectory-based buffer used in \algnameshort and the classic transition-based replay buffer used in actor-critic methods. This practical implementation of \algnameshort is the one employed in the experimental campaign reported in Section~\ref{sec:experiments} and detailed further in Appendix~\ref{apx:experiments}.

\RestyleAlgo{ruled}
\begin{algorithm}[h!]
    \caption{\algnameshort (Practical Version).}\label{alg:full_rpg}
    \SetKwInOut{Input}{Input}
    \small
    \Input{Iterations $\ite$, batch size $N$, learning rate schedule $\{\zeta_{k}\}_{k=1}^{\ite}$, initial parameterization $\vtheta_{1}$, maximum window length $\win$.}
    
    \For{$k \in \dsb{1,\ite}$}{
        \vspace{0.2cm}
        
        Set $\widetilde{k} \coloneqq k-\omega_{k} + 1$, the oldest iteration in the current window

        \vspace{0.2cm}
        
        Collect $N$ trajectories $\{ \tau_{k,j} \}_{j=1}^{N}$ with policy $\pi_{\vtheta_{k}}$
        
        \vspace{0.2cm}
        
        Estimate the distances between trajectory densities as  $\{ \widehat{D}_{i} = \widehat{\chi}^{2} (p(\cdot|\theta_k)||p(\cdot|\theta_i))\}_{i=\Tilde{k}}^{k}$
        
        \vspace{0.2cm}
        
        Compute  $\lambda_{i,k} = \sqrt{\frac{1}{(\widehat{D}_{i}+1) N \win_{k}}} \quad \text{and} \quad \alpha_{i,k} = \frac{(\widehat{D}_{i}+1)^{-1/2}}{\sum_{l=\widetilde{k}}^{k} (\widehat{D}_{l}+1)^{-1/2}}$
        
        \vspace{0.2cm}
        
        Compute the GPOMDP gradient $\singleEstim{G}_{\vtheta_{k}}(\tau_{i,j})$  for each trajectory in the window (\ie $i \in \dsb{\Tilde{k}, k}$ and $j \in \dsb{1,N}$)
        
        \vspace{0.2cm}
        
        Compute the gradient: 
        $$\estim{\estimname}_{\win_{k}} J(\vtheta_{k}) = \frac{1}{N} \sum_{i=\widetilde{k}}^{k} \sum_{j=1}^{N} \frac{\alpha_{i,k} p_{\vtheta_{k}}(\tau_{i,j})}{(1-\lambda_{i,k}) p_{\vtheta_{i}}(\tau_{i,j}) + \lambda_{i,k} p_{\vtheta_{k}}(\tau_{i,j})} \singleEstim{}_{\vtheta_{k}}(\tau_{i,j})$$
        
        \vspace{0.2cm}
        
        Update the policy parameterization:
        $$\vtheta_{k+1} \leftarrow \vtheta_{k} + \zeta_{k} \estim{\estimname}_{\win_{k}} J(\vtheta_{k})$$
    }
    Return the best parametrization
\end{algorithm}

\subsection{Divergence Estimation} \label{subapx:renyi}

\textbf{Rényi Divergence.}~~Before delving into divergence estimation, we introduce the $\alpha$-Rényi divergence $D_{\alpha}$ and its exponentiated version $d_{\alpha}$, for any $\alpha \ge 1$. Let $P,Q \in \Delta(\mathcal{X})$ admitting densities $p$ and $q$ respectively. If $P \ll Q$, the $\alpha$-Rényi divergence is defined as:
\begin{align}
    D_{\alpha} (P \| Q) \coloneqq \frac{1}{\alpha-1} \log \left( \int p(x)^{\alpha} q(x)^{1-\alpha} \de x \right).
\end{align}
Note that for $\alpha=1$ we have the KL divergence. The exponentiated $\alpha$-Rényi divergence is defined as:
\begin{align}
    d_{\alpha} (P \| Q) \coloneqq \exp\left( (\alpha - 1) D_{\alpha}(P \| Q)\right) = \int p(x)^{\alpha} q(x)^{1-\alpha} \de x.
\end{align}
In the main paper, we employ the $\chi^{2}$ divergence, which shows the following relation with $d_{\alpha}$:
\begin{align}
    \chi^{2}(P \| Q) = d_{2}(P \| Q) - 1,
\end{align}
thus under Assumption~\ref{asm:bounded-renyi} it holds the following:
\begin{align}
    \sup_{\vtheta_{1}, \vtheta_{2} \in \Theta} d_{2}(p_{\vtheta_{1}} \| p_{\vtheta_{2}}) \le D.
\end{align}
Given the provided equivalence, for the sake of generality and simplicity, we focus on $d_{\alpha}$.

\textbf{Employed Divergence Estimator.}~~In the main manuscript, the convergence result of Theorem~\ref{thr:rpg-sc} is established by carefully selecting the sequence of parameters $\alpha_{i,k}$ and $\lambda_{i,k}$, for any $i \in \dsb{k}$ and for any $k \in \dsb{\ite}$. However, this choice entails two notable drawbacks in practical implementation: $(i)$ the absence of a mechanism to constrain the modification of the parameterization (\eg a trust‐region) makes the determination of the global upper bound $D$ infeasible in practice; $(ii)$ all previous trajectories are treated equally (in terms of $\alpha_{i,k}$ and $\lambda_{i,k}$), regardless of their proximity to the trajectory distribution under the current policy parameter. 

To tackle these two problems, we no longer employ the global upper bound $D$, but we use a dynamic weighting relying on a divergence estimate $\widehat{D}_{i} \coloneqq \widehat{d}_{2}(p_{\vtheta_{k}}(\cdot) \| p_{\vtheta_{i}}(\cdot)) - 1$, where $\vtheta_{k}$ is the parametrization at the current iteration $k$ and $\vtheta_{i}$ is a parametrization belonging to a previous iterate $i$. The immediate consequence is an increased weighting of trajectories that are collected under \quotes{closer} trajectory distributions to the current parametrization, while automatically discarding trajectories generated by \quotes{farther} parameterizations. In what follows, we consider two trajectory distributions parameterized by $\vtheta$ (target) and $\vtheta_{b}$ (behavioral).

A na\"{i}ve estimate of $d_{\alpha}(p_{\vtheta} \| p_{\vtheta_{b}})$ consists in using the sample mean:
\begin{align}
    \widehat{d}_{\alpha} (p_{\vtheta} \| p_{\vtheta_{b}}) = \frac{1}{N} \sum_{j=1}^{N} \left( \frac{p_{\vtheta}(\tau_{b,j})}{ p_{\vtheta_{b}}(\tau_{b,j})} \right)^{\alpha},
\end{align}
where $\tau_{b,j} \sim p_{\vtheta_{b}}$.
As one would expect, this estimator is inefficient~\citep{metelli2018policy,metelli2020importance} and may need a large sample size to be accurate. Empirically, it has also resulted in approximations  $\widehat{d}_{\alpha}(\cdot) \rightarrow 0$ violating the positive Rényi divergence constraint $\frac{1}{\alpha-1} \log (\widehat{d}_{\alpha}(\cdot))\geq 0$.

A practical $d_{\alpha}(\cdot)$ estimator has been proposed by~\citep{metelli2018policy,metelli2020importance}, which expresses $\widehat{d}_{\alpha}(\cdot)$ as a measure of the distance between the two respective parameterized policies at each time step of the trajectory:
\begin{align}
    \widehat{d}_{\alpha} (p_{\vtheta} \| p_{\vtheta_{b}}) = \frac{1}{N} \sum_{j=1}^{N} \prod_{t=1}^{T} d_{\alpha} (\pi_{\vtheta}(\cdot | \vs_{\tau_{b,j},t}) \| \pi_{\vtheta_{b}}(\cdot| \vs_{\tau_{b,j},t})),
\end{align}
where $\tau_{b,j} \sim p_{\vtheta_{b}}$. The proposed estimator estimates the distance between two trajectories as the product of the distance of the two policies at each state. The advantage is that this distance can be computed accurately and is not sample-based. However, this depends on the choice of the policy distribution. The properties of this estimator are discussed in greater detail in~\citep[][Remark 6]{metelli2020importance}. 

Given that our experimental campaign primarily relies on Gaussian policies, as is common practice~\citep{papini2018stochastic,xu2019sample,yuan2020stochastic,paczolay2024sample}, we provide the closed-form expression for $d_{\alpha} (\pi_{\vtheta}(\cdot \mid \vs_{\tau_{b,j},t}) \| \pi_{\vtheta_{b}}(\cdot \mid \vs_{\tau_{b,j},t}))$ in the case of this kind of policies.
\begin{prop}[\citealt{RenyiDivergences}]
    Let $\vtheta, \vtheta_{b} \in \Theta$ and let $\vs \in \cS$. Now, let $\pi_{\vtheta}(\cdot | \vs) = \mathcal{N}(\vtheta^\top \vs, \sigma^2 I_{\da})$ and $\pi_{\vtheta_{b}}(\cdot | \vs) = \mathcal{N}(\vtheta_{b}^\top \vs, \sigma^2 I_{\da})$, be two $\da$-dimensional Gaussian policies.
    Then, it holds:
    \begin{align}
         d_{\alpha} (\pi_{\vtheta}(\cdot|\vs)\| \pi_{\vtheta_b}(\cdot|\vs)) = \exp\left({\frac{\alpha (\alpha -1) \|\vmu - \vmu_{b}\|_{2}^{2}}{2\sigma^2}}\right),
    \end{align}
    where $\vmu \coloneqq \vtheta^\top \vs$ and $\vmu_{b} \coloneqq \vtheta_{b}^\top \vs$.
\end{prop}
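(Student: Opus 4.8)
The plan is to evaluate the defining integral $d_\alpha(\pi_\vtheta(\cdot|\vs)\|\pi_{\vtheta_b}(\cdot|\vs)) = \int p(\va)^\alpha q(\va)^{1-\alpha}\,\de\va$ directly, where $p$ and $q$ denote the densities of $\mathcal{N}(\vmu,\sigma^2 I_{\da})$ and $\mathcal{N}(\vmu_b,\sigma^2 I_{\da})$ respectively, exploiting the fact that the two Gaussian policies share the same isotropic covariance $\sigma^2 I_{\da}$. Substituting the explicit densities, the normalizing constants combine multiplicatively as $(2\pi\sigma^2)^{-\alpha\da/2}(2\pi\sigma^2)^{-(1-\alpha)\da/2}=(2\pi\sigma^2)^{-\da/2}$, leaving an integrand of the form $(2\pi\sigma^2)^{-\da/2}\exp\left(-\frac{1}{2\sigma^2}\bigl(\alpha\|\va-\vmu\|_2^2+(1-\alpha)\|\va-\vmu_b\|_2^2\bigr)\right)$.

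The key step is to rewrite the convex combination of quadratics appearing in the exponent. Expanding and collecting terms in $\va$ yields $\alpha\|\va-\vmu\|_2^2+(1-\alpha)\|\va-\vmu_b\|_2^2 = \|\va-\bar{\vmu}\|_2^2 + C$, where $\bar{\vmu} \coloneqq \alpha\vmu+(1-\alpha)\vmu_b$ is the $\alpha$-weighted mean and $C \coloneqq \alpha\|\vmu\|_2^2+(1-\alpha)\|\vmu_b\|_2^2-\|\bar{\vmu}\|_2^2$ is a constant independent of $\va$. Crucially, because the two covariances coincide, the coefficient of $\|\va\|_2^2$ after recombination is exactly $\alpha+(1-\alpha)=1>0$ for every $\alpha$, so the quadratic form stays positive definite and the resulting Gaussian integral converges with no restriction on $\alpha$.

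I would then factor $\exp(-C/(2\sigma^2))$ out of the integral and recognize the remaining $\int (2\pi\sigma^2)^{-\da/2}\exp\left(-\|\va-\bar{\vmu}\|_2^2/(2\sigma^2)\right)\de\va$ as the total mass of an $\mathcal{N}(\bar{\vmu},\sigma^2 I_{\da})$ density, which equals $1$. This gives $d_\alpha = \exp(-C/(2\sigma^2))$. The proof concludes with the elementary simplification of $-C$: expanding $\|\bar{\vmu}\|_2^2 = \alpha^2\|\vmu\|_2^2+2\alpha(1-\alpha)\vmu^\top\vmu_b+(1-\alpha)^2\|\vmu_b\|_2^2$ and collecting the $\|\vmu\|_2^2$, $\|\vmu_b\|_2^2$, and cross terms shows that $-C = \alpha(\alpha-1)\bigl(\|\vmu\|_2^2+\|\vmu_b\|_2^2-2\vmu^\top\vmu_b\bigr)=\alpha(\alpha-1)\|\vmu-\vmu_b\|_2^2$, yielding the claimed closed form.

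Since every step is a routine Gaussian manipulation, there is no genuine obstacle here; the only point deserving care is the convergence of the integral, which---unlike the general unequal-covariance case, where $d_\alpha$ is finite only for a restricted range of $\alpha$---holds for all $\alpha$ precisely because the shared covariance makes the quadratic coefficient in $\va$ independent of $\alpha$.
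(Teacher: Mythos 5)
Your proof is correct and follows essentially the same route as the paper: substitute the shared-covariance Gaussian densities, complete the square around the $\alpha$-weighted mean $\alpha\vmu + (1-\alpha)\vmu_{b}$, integrate out the resulting $\mathcal{N}(\alpha\vmu + (1-\alpha)\vmu_{b}, \sigma^2 I_{\da})$ mass, and simplify the leftover constant to $\alpha(\alpha-1)\|\vmu-\vmu_{b}\|_{2}^{2}$. Your closing remark that the integral converges for every $\alpha$ because the quadratic coefficient in the action variable stays equal to $1$ is a correct observation that the paper leaves implicit.
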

\begin{proof}
We start the proof by recalling the explicit form of $d_{\alpha} (\pi_{\vtheta}(\cdot|\vs)\| \pi_{\vtheta_b}(\cdot|\vs))$:
\begin{align}
    d_{\alpha} (\pi_{\vtheta}(\cdot|\vs)\| \pi_{\vtheta_b}(\cdot|\vs)) &= \exp \left((\alpha-1) D_{\alpha}(\pi_{\vtheta}(\cdot|\vs)\| \pi_{\vtheta_b}(\cdot|\vs))\right) \\
    & = \int \pi_{\vtheta}(\vx \mid \vs)^\alpha \pi_{ \vtheta_b }(\vx \mid \vs)^{1-\alpha} \de\vx.
\end{align}

By exploiting the fact that both $\pi_{\vtheta}$ and $\pi_{\vtheta_{b}}$ are multivariate Gaussian policies, the following derivation holds:
\begin{align}
    & \int \pi_{\vtheta}(\vx \mid \vs)^\alpha \pi_{ \vtheta_b }(\vx \mid \vs)^{1-\alpha} \de\vx \\
    & = \int
    \left[ \frac{ 1 }{ ( 2 \pi )^{ \da/2 } \sigma }
    \exp \left( - \tfrac1{ 2 \sigma^2} \|\vx - \vmu \|_{2}^{2} \right) \right]^{ \alpha }
    \left[ \frac{ 1 }{( 2 \pi )^{ \da/2 } \sigma }
    \exp \left( - \tfrac1{ 2 \sigma^2 } \|\vx - \vmu_{ b } \|_{2}^{2} \right) \right]^{ 1 - \alpha} \de \vx\\
    & = \int \frac{ 1 }{ ( 2 \pi )^{ \da/2 } \sigma} \exp \left(
     - \frac{ \alpha \|\vx - \vmu \|_{2}^{2} + (1 - \alpha) \|\vx - \vmu_{b} \|_{2}^{2}}
    { 2 \sigma^2 } \right) \de \vx \\
    & = \int \frac{ 1 }{ ( 2 \pi )^{ \da/2 } \sigma} \exp \left( - \frac{ 1 }{ 2 \sigma^2 } \left( \alpha \left( \| \vx \|_{2}^{2} - 2 \left < \vmu,\vx \right > + \| \vmu \|^{ 2 }_{ 2 } \right)+ \left( 1 - \alpha \right) \left( \| \vx \|_{ 2 }^{ 2 } - 2 \left < \vmu_{ b }, \vx \right> + \| \vmu_{ b } \|^{ 2 }_{ 2 } \right) \right) \right) \de \vx \\
    & = \int \frac{ 1 }{( 2 \pi )^{ \da/2 } \sigma} \exp \left( - \frac{ 1 }{ 2 \sigma^2} \left( \| \vx \|_{2}^{2} - 2 \left( \alpha \vmu + ( 1 - \alpha ) \vmu_{ b } \right)^\top \vx +  \alpha \| \vmu \|_{2}^{2} + ( 1 - \alpha) \| \vmu_{ b } \|_{2}^{2} \right) \right) \de \vx.
\end{align}

Now, by letting $\vmu_{ \alpha } = \alpha \vmu + ( 1 - \alpha) \vmu_{ b }$, we have the following:
\begin{align}
    & \int \pi_{ \vtheta }(\vx \mid \vs)^\alpha \pi_{ \vtheta_b }(\vx \mid \vs )^{ 1-\alpha } \de\vx \\
    & = \int \frac{ 1 }{ ( 2 \pi )^{ \da/2 } \sigma} \exp \left( - \frac{ 1 }{ 2 \sigma^2 } \left(\| \vx \|_{2}^{2} - 2 \left( \alpha \vmu + (1-\alpha) \vmu_{b} \right)^\top \vx + \alpha \| \vmu \| ^2 + (1-\alpha) \| \vmu_{b} \|_{2}^{2} \right) \right) \de \vx \\
    & = \int \frac{1}{(2 \pi)^{\da/2} \sigma} \exp \left( - \frac{ 1 }{ 2 \sigma^2 } \left(\| \vx \|_{2}^{2} - 2 \langle \vmu_{\alpha}, \vx \rangle + \alpha \| \vmu \|_{2}^{2} + (1- \alpha) \| \vmu_{b} \|_{2}^{2} \right) \right) \de \vx.
\end{align}

Adding and subtracting $\| \vmu_{\alpha} \|_{2}^{2}$ inside the exponent:
\begin{align}
    & \int \pi_{\vtheta}(\vx \mid \vs)^\alpha \pi_{ \vtheta_b }(\vx \mid \vs)^{1-\alpha} \de\vx \\
    & = \int \frac{ 1}{(2 \pi)^{\da/2} \sigma} \exp \left( - \frac{ 1 }{ 2 \sigma^2 } \left(\| \vx - \vmu_{\alpha} \|_{2}^{2} + \alpha \| \vmu \|_{2}^{2} + (1 - \alpha) \| \vmu_{b} \|_{2}^{2} - \| \vmu_{\alpha} \|_{2}^{2} \right) \right) \de \vx\\
    & = \int \frac{ 1 }{ ( 2 \pi )^{ \da/2 } \sigma} \exp \left( - \frac{ 1 }{ 2 \sigma^2} \| \vx - \vmu_{\alpha} \|_{2}^{2} \right) \underbrace{\exp\left( - \frac{1}{2 \sigma^2} \left( \alpha \| \vmu \|_{2}^{2} + ( 1 - \alpha ) \| \vmu_{b} \|_{2}^{2} - \| \vmu_{\alpha} \|_{2}^{2} \right) \right)}_{\eqqcolon \textsf{A}} \de \vx.
\end{align}

Now that \textsf{A} is independent of $\vx$, we continue our derivation as:
\begin{align}
    & \int \pi_{\vtheta}(\vx \mid \vs)^\alpha \pi_{ \vtheta_b }(\vx \mid \vs)^{1-\alpha} \de\vx \\
    & = \exp \left( - \frac{1}{2 \sigma^2} \left( \alpha \| \vmu \|_{2}^{2} + ( 1 - \alpha ) \| \vmu_{b} \|_{2}^{2} - \| \vmu_{\alpha} \|_{2}^{2} \right) \right) \cdot
    \underbrace{ \int \frac{ 1 }{( 2 \pi )^{ \da/2 } \sigma} \exp \left( - \frac{\| \vx - \vmu_{\alpha} \|_{2}^{2} }{ 2 \sigma^2 } \right) \de \vx}_{\text{=1}} \\
    & = \exp \left( - \frac{ 1 }{ 2 \sigma^2 } \left( \alpha \| \vmu \|_{2}^{2} + ( 1 -\alpha) \| \vmu_{b} \|_{2}^{2} - (\alpha^2 \| \vmu \|_{2}^{2} + 2 \alpha ( 1 - \alpha ) \langle \vmu, \vmu_{b} \rangle + ( 1 - \alpha )^2 \| \vmu_{b} \|_{2}^{2}) \right) \right) \\
    & = \exp \left( - \frac{ 1 }{ 2 \sigma^2 } \left( \alpha (1- \alpha) \| \vmu \|_{2}^{2} - 2 \alpha (1 - \alpha) \langle \vmu, \vmu_{b} \rangle + \alpha( 1 - \alpha) \| \vmu_{b} \|_{2}^{2} \right) \right)\\
    & = \exp \left( - \frac{ \alpha (1 - \alpha)\| \vmu - \vmu_{b}\|_{2}^{2}}{2 \sigma^2} \right) \\
    & = \exp \left( \frac{ \alpha (\alpha - 1)\| \vmu - \vmu_{b}\|_{2}^{2}}{2 \sigma^2} \right),
\end{align}
which concludes the proof.
\end{proof}

\subsection{\algnameshort's Behavior with Modern Optimizers} \label{subapx:behavior}
The magnitude of $\widehat{D}_i$ is intrinsically linked to the step size of the parameter updates. Indeed, since large policy deviations from the current parameterization may incur penalties for scoring the seen trajectories, employing a large step size may cause the update to depend almost exclusively on the most recent batch of trajectories. This phenomenon is analogous to the behavior of step size schedulers such as Adam~\citep{kingma2017adammethodstochasticoptimization}. Specifically, when Adam exhibits uncertainty regarding the gradient’s direction, it reduces the effective learning rate, thereby placing greater emphasis on accumulated gradient history and facilitating escape from local optima by integrating information across numerous trajectories. Conversely, when the optimizer attains high directional confidence, manifested as a larger step size, the update is dominated by the information contained in the latest trajectories. Since the estimates of the distance between parameterizations $\widehat{D}_i$ introduce variance, we would like to keep the updates small enough such that old parameterizations are not discarded by small variations in the learning rate, which occurs if the initial learning rate $\zeta_1$ is small enough.

\subsection{Analogy with Transition-based Replay Buffers of Actor-Critic Methods} \label{subapx:replay}

Another widely used variance reduction mechanism in PG methods involves the use of \emph{critics}~\citep{sutton2018reinforcement}. In particular, Actor-Critic methods~\citep{duan2016benchmarking} jointly learn a parametric critic (\eg value or advantage function) alongside the policy (actor). The critic's estimates of value or advantage functions reduce the variance of the policy gradient. This approach has given rise to several successful deep RL algorithms~\citep{mnih2013playing,schulman2015trust,duan2016benchmarking}, which often rely on experience replay buffers~\citep{lin1992reinforcement}. These buffers store individual environment interactions, allowing the agent to sample past transitions at random. In doing so, they effectively smooth the training distribution across multiple past behaviors, decoupling data collection from policy updates.

While \algnameshort also maintains a buffer, there are two key differences: $(i)$ the learning signal is the full cumulative return of each trajectory, rather than individual transitions; and $(ii)$ the entire buffer is used at every iteration, with each trajectory’s contribution weighted according to the representativeness of its behavioral policy.
\section{Experimental details} \label{apx:experiments}

In this section, we report the hyperparameter configurations used in the experiments presented in the main manuscript, along with additional experiments aimed at validating the empirical performance of \algnameshort. All experiments are conducted using environments from the MuJoCo control suite~\citep{mujoco}.
Table~\ref{tab:environments} summarizes the observation and action space dimensions, as well as the horizon and discount factor used for each environment.
For baseline comparisons, we consider the following methods, already introduced in Section~\ref{sec:intro}:
\begin{itemize}[noitemsep,topsep=0pt,leftmargin=10pt]
    \item GPOMDP~\citep{baxter2001infinite};
    \item SVRPG~\citep{papini2018stochastic};
    \item SRVRPG~\citep{xu2019sample};
    \item STORM-PG~\citep{yuan2020stochastic};
    \item DEF-PG~\citep{paczolay2024sample};
    \item MIW-PG (uniform weighting)~\citep{veach1995optimally,owen2013monte,lin2025reusing};
    \item BH-PG~\citep{veach1995optimally,owen2013monte}.
\end{itemize}
The code for \algnameshort, GPOMDP, MIW-PG, and BH-PG is available at \url{https://github.com/MontenegroAlessandro/MagicRL/tree/offpolicy}. The implementation of DEF-PG was obtained from the original GitHub repository (\url{https://github.com/paczyg/defpg}), while the remaining baselines were implemented using the Potion library (\url{https://github.com/T3p/potion}).

\begin{table}[h!]
    \centering
    \resizebox{0.9\linewidth}{!}{\renewcommand{\arraystretch}{1.5}
\begin{tabular}{|l|cccc|}
    \hline
    \rowcolor{ibmIndigo!20} \textbf{Environment Name} & \textbf{Observation Space} & \textbf{Action Space} & \textbf{Horizon} & \textbf{Discount Factor}\\
    \hline \hline
    \emph{Continuous Cart Pole}~\citep{ContCartpole} & $\ds = 4$ & $\da = 1$ & $T = 200$ & $\gamma = 1$\\
    \emph{HalfCheetah-v4}~\citep{mujoco} & $\ds = 17$ & $\da = 6$ & $T = 100$ & $\gamma = 1$\\
    \emph{Swimmer-v4}~\citep{mujoco} & $\ds = 8$ & $\da = 2$& $T = 200$ & $\gamma = 1$\\
    \hline
\end{tabular}}

    \vspace{0.1cm}
    
    \caption{Summary of the environments' characteristics.}
    \label{tab:environments}
\end{table}


\subsection{Employed Policies}

\textbf{Linear Gaussian Policy}: a linear parametric Gaussian policy $\pi_{\vtheta} : \cS \to \Delta(\cA)$ with $\dt = \ds \times \da$ and with fixed variance $\sigma^2$ draws action $\va \sim \cN(\vtheta^\top \vs, \sigma^2 I_{\da})$, being $\vs \in \cS$ and $\va \in \cA$. The score of the policy is defined as follows:
\begin{align}
    \nabla_{\vtheta} \log \pi_{\vtheta}(\bm{a}) = \frac{((\bm{a} - \vtheta^\top \bm{s}) \bm{s}^\top)^\top}{\sigma^2}.
\end{align}

\textbf{Deep Gaussian Policy}: a deep parametric Gaussian policy $\pi_{\vtheta} : \cS \to \Delta(\cA)$ with fixed variance $\sigma^2$ draws action $\va \sim \cN(\vmu_{\vtheta}(\vs), \sigma^2 I_{\da})$, where $\vs \in \cS$, $\va \in \cA$, and $\mu_{\vtheta}(\vs)$ is the action mean output from the neural network. The score of the policy is the gradient \wrt the log probability of the chosen action.

\subsection{Window Sensitivity} \label{subapx:window_sensitivity}

In this experiment, we study the sensitivity of \algnameshort, MIW-PG, and BH-PG to the window size $\omega$.
Here we conduct the evaluations in the \emph{Continuous Cart Pole} environment~\citep{ContCartpole}. All learning rates are managed by the Adam~\citep{kingma2017adammethodstochasticoptimization} optimizer, with a starting learning rate of $\zeta_{1} = 0.01$. Parameters are initialized as $\bm{0}_{\dt}$. Exploration is managed by a variance of $\sigma^2 = 0.3$ in the context of linear Gaussian policies. All the methods use a fixed batch size of $N=5$ and we evaluate the performance over a window size $\omega \in \{2, 4, 8, 16, 32, 64\}$ averaged over $10$ trials.

\begin{figure}[t!]
    \centering
    \resizebox{\linewidth}{!}{\includegraphics[]{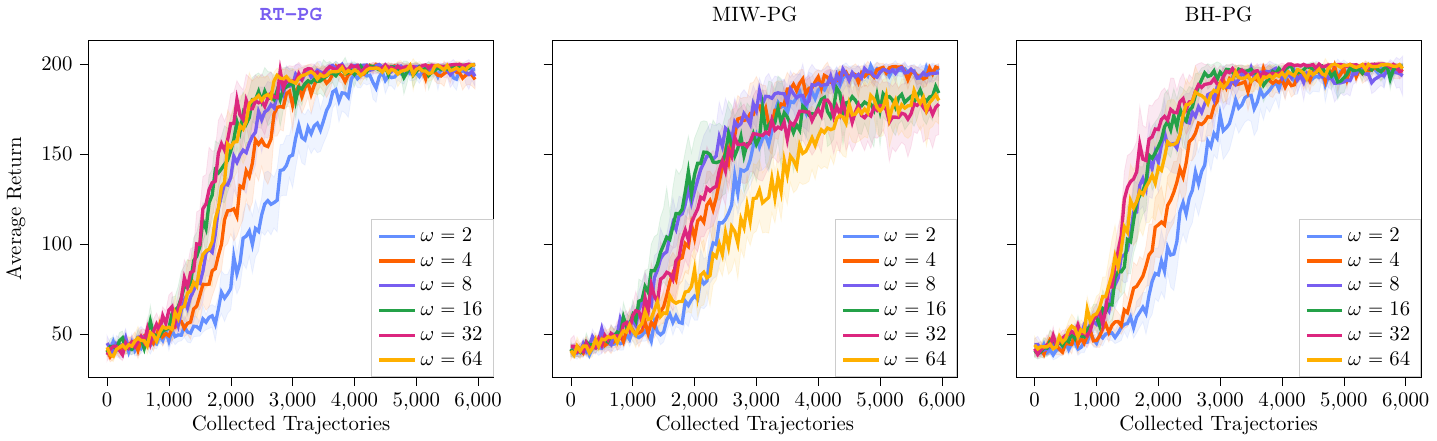}}
    \caption{Window sensitivity study of \algnameshort, MIW-PG, and BH-PG on \emph{Cart Pole} (Appendix~\ref{subapx:window_sensitivity}). $10$ trials (mean $\pm 95\%$ C.I.).}
    \label{fig:window-sensitivity}
\end{figure}

As shown in Figure~\ref{fig:window-sensitivity}, for \algnameshort the benefit of increasing the window size $\omega$ exhibits diminishing returns: while performance improves significantly when increasing $\omega$ from $2$ to $16$, larger windows (e.g., $\omega = 32$ or beyond) offer no noticeable advantage. 
Indeed, in the practical version of \algnameshort, the variance of each term in the gradient estimate is controlled by the divergence $\widehat{D}_i$ between the behavior policy and the current policy~\citep{metelli2021subgaussian}. Increasing the window size incorporates trajectories from policies that are farther from the current one, to which a low weight is dynamically assigned. 
Moreover, expanding the window may yield limited additional information, as the data within a smaller window may already be sufficient to capture the correct gradient direction. 
Finally, we highlight that a larger $\omega$ leads to a larger computational effort, since the method is required to evaluate the gradient and the IWs for $\omega N$ trajectories. 
Consequently, there exists an optimal window size $\omega$ that balances variance, information gain, and computational time. This value is generally environment-dependent. For our experimental campaign, we found that $\omega = 8$ offers a good tradeoff between computational efficiency and performance.

When looking at the results for MIW-PG, this effect is even more evident. Indeed, the uniform weighting of the IWs is not controlling at all the variance introduced by the IWs themselves, as done by the $\alpha_{i,k}$ coefficients employed by our \estimname estimator. As a result, we can observe that $(i)$ the learning curves seem to be generally more unstable (as also evident in experiments employing deep policies), and $(ii)$ performances start degrading when increasing the $\win$ value, conversely to what happens for \algnameshort.

Finally, BH-PG seems not to suffer from the convergence issues nor from the instabilities reported by MIW-PG for large window values. This is likely due to the particular construction of the $\beta_{i}^{\text{BH}}$ coefficients which are proven to lead the BH estimator to enjoy nearly optimal variance when the estimator is unbiased~\citep[][Theorem 1]{veach1995optimally}. 
However, as discussed in depth in Section~\ref{sec:biases}, the BH estimator comes at additional memory and computational costs.
Indeed, at each iteration $k \in \Nat$, besides evaluating old trajectories under the current $\vtheta_{k}$, as done by \algnameshort and MIW-PG too, the BH estimator is required to evaluate the newer trajectories under the older policies within the window, thus requiring to store all the past parameters needed.


\subsection{On Reusing Trajectories} \label{subapx:batch-sensitivity}

While this experiment was briefly discussed in Section~\ref{sec:experiments}, we provide here additional considerations.

\subsubsection{Details for the Experiment Shown in Section~\ref{sec:experiments}}
Before presenting additional experimental results, we highlight that to produce data shown in Figure~\ref{fig:boost} and Table~\ref{tab:boost} we conducted the evaluations in the \emph{Continuous Cart Pole} environment~\citep{ContCartpole}. All methods were trained using the Adam optimizer~\citep{kingma2017adammethodstochasticoptimization}. The specific hyperparameters' values are reported in Table~\ref{tab:details-onreusing}.

\begin{table}[h]
    \centering
    \resizebox{.65\linewidth}{!}{\renewcommand{\arraystretch}{1.5}
\begin{tabular}{|l|cc|}
\hline
\rowcolor{ibmIndigo!20} \multicolumn{1}{|c|}{\textbf{Hyperparameter}} & \multicolumn{1}{c}{\textbf{\algnameshort}} & \multicolumn{1}{c|}{\textbf{GPOMDP}} \\
\hline\hline
Adam $\zeta_{1}$ & $0.01$ & $0.01$ \\
Parameter Initialization $\vtheta_{1}$ & $\bm{0}_{\dt}$ & $\bm{0}_{\dt}$ \\
Variance $\sigma^2$ & $0.3$ & $0.3$ \\
Batch size - window size pairs $(N,\win)$ & \makecell{$(16,2), (8,4), (4,8)$\\$(32,2), (16,4), (8,8)$\\$(64,2), (32,4), (16,8)$} & \makecell{$(32,1)$\\$(64,1)$\\$(128,1)$} \\
\hline
\end{tabular}}
    \vspace{0.1cm}
    \caption{Hyperparameters for the experiment shown in Figure~\ref{fig:boost}.}
    \label{tab:details-onreusing}
\end{table}

\textbf{Estimating the Convergence Speedup Factor.}~~In Table~\ref{tab:boost-conv}, we report the empirical convergence speedup factor $s$ achieved by \algnameshort relative to GPOMDP across different reuse window sizes $\win$. To quantify $s$, we determine the scaling factor applied to the \algnameshort trajectory axis that minimizes the Mean Squared Error (MSE) between the \algnameshort performance curve and the GPOMDP baseline.
Specifically, for each $\win$, we perform a grid search over $s \in [0.5, \win+1]$ with a step size of $0.01$. For each candidate $s$, we rescale the x-axis (number of collected trajectories) of \algnameshort, interpolate the resulting curve, and compute the MSE against the GPOMDP mean curve.
To estimate the confidence interval of $s$, we repeat this optimization for the worst-case and best-case scenarios: the lower bound is derived by matching the lower confidence bound of \algnameshort (mean $-97.5\%$ C.I.) against the upper confidence bound of GPOMDP (mean $+97.5\%$ C.I.), while the upper bound is derived by matching the upper confidence bound of \algnameshort (mean $+97.5\%$ C.I.) against the lower confidence bound of GPOMDP (mean $-97.5\%$ C.I.). The choice of the confidence interval gives us an interval in which the reported mean speedup factor lies with probability $95\%$.

\subsubsection{Additional Considerations}
We examine the sensitivity of \algnameshort to the batch size $N$ and compare it with GPOMDP under an equal total data budget, that is, when $\omega N_{\text{\algnameshort}} = N_{\text{GPOMDP}}$, where $N_{\text{\algnameshort}}$ is the batch size used by \algnameshort and $N_{\text{GPOMDP}}$ the one used by GPOMDP. The aim is to empirically assess the relative informational value of older trajectories versus those collected under the current policy. Specifically, we investigate whether reusing past trajectories, thereby reducing the need for newly sampled data, can accelerate learning in practice.

The evaluations are conducted in the \emph{Continuous Cart Pole} environment~\citep{ContCartpole}. All methods are trained using the Adam optimizer~\citep{kingma2017adammethodstochasticoptimization}, with initial learning rates reported in Table~\ref{tab:sensitivity}.

\begin{table}[h!]
    \centering
    \resizebox{0.5\linewidth}{!}{\renewcommand{\arraystretch}{1.5}
\begin{tabular}{|l|cc|}
\hline
\rowcolor{ibmIndigo!20} \multicolumn{1}{|c|}{\textbf{Hyperparameter}} & \multicolumn{1}{c}{\textbf{\algnameshort}} & \multicolumn{1}{c|}{\textbf{GPOMDP}} \\
\hline\hline
Adam $\zeta_{1}$ & $0.01$ & $0.01$ \\
Parameter Initialization $\vtheta_{1}$ & $\bm{0}_{\dt}$ & $\bm{0}_{\dt}$ \\
Variance $\sigma^2$ & $0.3$ & $0.3$ \\
Batch size $N$ & $\{5,10,25\}$ & $\{20,40,100\}$ \\
Window Size $\omega$ & $4$ & -- \\
\hline
\end{tabular}}

    \vspace{0.1cm}
    
    \caption{Hyperparameters for the experiment in Appendix~\ref{subapx:batch-sensitivity}.}
    \label{tab:sensitivity}
\end{table}

\begin{figure}[t!]
    \centering
    \subcaptionbox{\algnameshort vs. GPOMDP \wrt Used Trajectories.\label{fig:batch-sensitivity-used}}[0.49\linewidth]{
        \includegraphics[width=\linewidth]{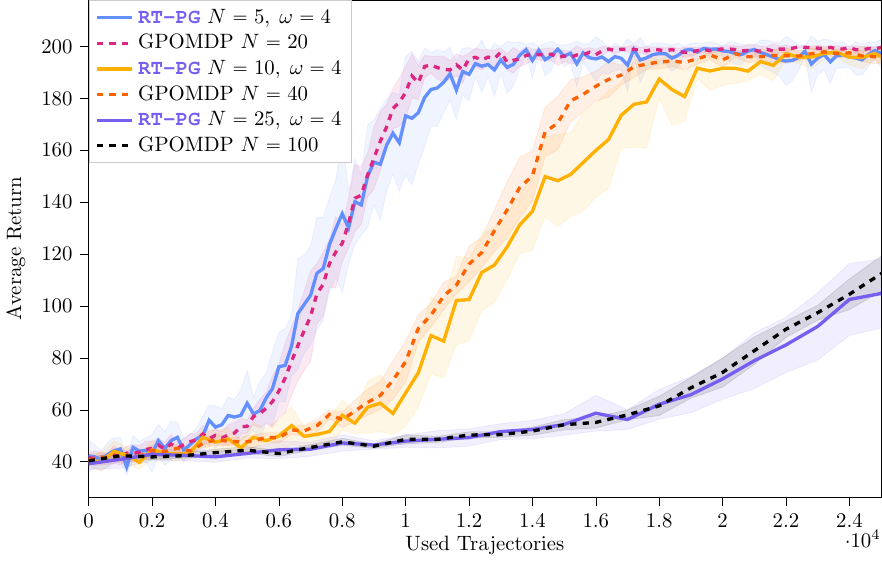}
    }
    \hfill
    \subcaptionbox{\algnameshort vs. GPOMDP \wrt Collected Trajectories.\label{fig:batch-sensitivity-collected}}[0.49\linewidth]{
        \includegraphics[width=\linewidth]{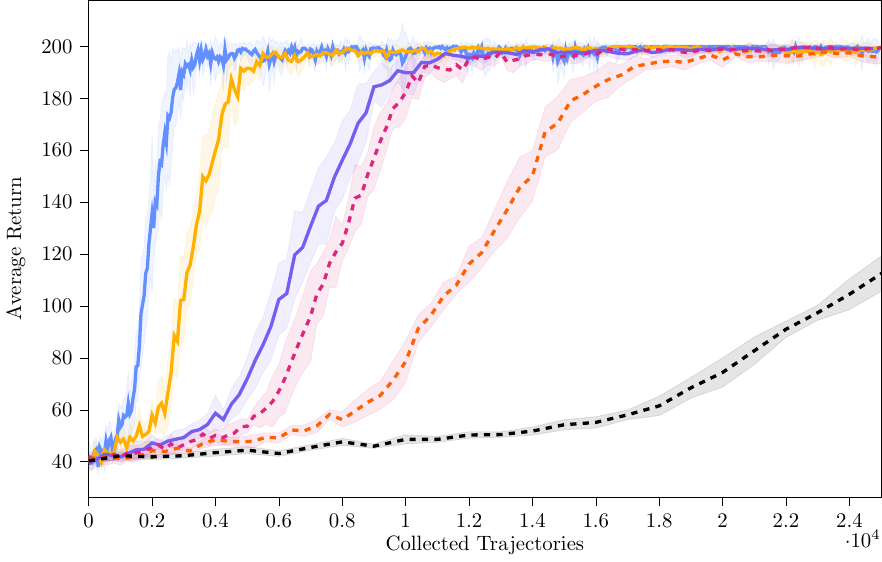}
    }
    \caption{Trajectory reusing study on \emph{Cart Pole} (Appendix~\ref{subapx:batch-sensitivity}). $10$ trials (mean $\pm 95\%$ C.I.).}
    \label{fig:batch-sensitivity}
\end{figure}

As shown in Figure~\ref{fig:batch-sensitivity-used}, when GPOMDP and \algnameshort are matched by number of updates (e.g., \algnameshort with $\omega = 4$ and $N = 5$ versus GPOMDP with $N = 20$), their learning curves are nearly indistinguishable. This trend is consistent across various parameter configurations and is supported by statistically significant results.

However, when performance is instead plotted against the number of collected trajectories (Figure~\ref{fig:batch-sensitivity-collected}), \algnameshort consistently demonstrates faster convergence than GPOMDP across all settings. This provides empirical confirmation that reusing trajectories enhances learning efficiency in practice.

In relatively simple control tasks, previously collected trajectories appear to provide nearly the same informational value as freshly sampled ones, an insight particularly valuable in data-scarce or expensive environments. Additionally, due to its ability to continuously leverage past data, \algnameshort exhibits superior sample efficiency. For instance, GPOMDP with a batch size of $N = 100$ fails to converge within the allowed trajectory budget, while \algnameshort with $\omega = 4$ and $N = 25$ converges rapidly to the optimal policy.

Finally, we highlight that \algnameshort shows a faster convergence even when compared against GPOMDP collecting a lower amount of fresh data (see for instance the case in which $N_{\text{\algnameshort}} = 25$ and $N_{\text{GPOMDP}}=20$).

\subsection{Comparison Against Baselines in \emph{Cart Pole}} \label{subapx:linear-baselines}

In this section, we compare our method \algnameshort against PG methods with state-of-the-art rates, which are discussed in Section~\ref{sec:intro} and listed in Appendix~\ref{apx:experiments}. All the methods employ a linear Gaussian policy.
We conduct the evaluations in the \emph{Continuous Cart Pole}~\citep{ContCartpole} environment. All learning rates are managed by the Adam~\citep{kingma2017adammethodstochasticoptimization} optimizer. All the hyperparameters are specified in Table~\ref{tab:comparison-baselines-2}.

\begin{table}[h!]
    \centering
    \resizebox{\linewidth}{!}{\renewcommand{\arraystretch}{1.5}
\begin{tabular}{|l|cccccccc|}
    \hline
    \rowcolor{ibmIndigo!20} \multicolumn{1}{|c}{\textbf{Hyperparameter}} & \multicolumn{1}{c}{\textbf{\algnameshort}} & \multicolumn{1}{c}{\textbf{MIW-PG}} & \multicolumn{1}{c}{\textbf{BH-PG}} & \multicolumn{1}{c}{\textbf{GPOMDP}} & \multicolumn{1}{c}{\textbf{STORM-PG}} & \multicolumn{1}{c}{\textbf{SRVRPG}} & \multicolumn{1}{c}{\textbf{SVRPG}} & \multicolumn{1}{c|}{\textbf{DEF-PG}} \\
    \hline\hline
    Adam $\zeta_{1}$              & $0.01$ & $0.01$ & $0.01$ & $0.01$ & $0.01$ & $0.01$ & $0.01$ & $0.01$ \\
    Parameter Initialization  $\vtheta_{1}$ & $\bm{0}_{\dt}$ & $\bm{0}_{\dt}$ & $\bm{0}_{\dt}$ & $\bm{0}_{\dt}$ & $\bm{0}_{\dt}$ & $\bm{0}_{\dt}$ & $\bm{0}_{\dt}$ & $\bm{0}_{\dt}$ \\
    Variance  $\sigma^2$                 & $0.3$  & $0.3$  & $0.3$  & $0.3$  & $0.3$  & $0.3$  & $0.3$  & $0.3$  \\
    Number of trials           & $10$    & $10$    & $10$    & $10$    & $10$    & $10$    & $10$    & $10$    \\
    Batch size  $N$               & $10$   & $10$   & $10$   & $10$   & $10$   & -- & -- & -- \\
    Init-batch size $N_1$           & -- & -- & -- & -- & $10$   & -- & -- & -- \\
    Window size  $\omega$              & $8$    & $8$    & $8$    & -- & -- & -- & -- & -- \\
    Snapshot batch size        & -- & -- & -- & -- & -- & $55$   & $55$   & $55$   \\
    Mini-batch size            & -- & -- & -- & -- & -- & $5$    & $5$    & $5$    \\
    \hline
\end{tabular}
}

    \vspace{0.1cm}
    
    \caption{Hyperparameters for the experiment in Appendix~\ref{subapx:linear-baselines}.}
    \label{tab:comparison-baselines-2}
\end{table}

In relatively simple environments like the one considered here, the number of updates plays a crucial role in determining convergence. Therefore, the batch sizes and related parameters are configured to ensure that all methods use the same number of trajectories per update on average. Specifically, SVRPG and SRVRPG perform a full gradient update using a snapshot batch size of $55$ every $10$\textsuperscript{th} iteration, and stochastic mini-batch updates in between. DEF-PG, a defensive variant of PAGE-PG, performs full gradient updates with probability $p = 0.1$ using the snapshot batch size, and uses mini-batch updates otherwise. As a result, SVRPG, SRVRPG, and DEF-PG each consume an average of $10$ trajectories per iteration.

\begin{figure}[t!]
    \centering
    \resizebox{0.6\linewidth}{!}{\includegraphics[]{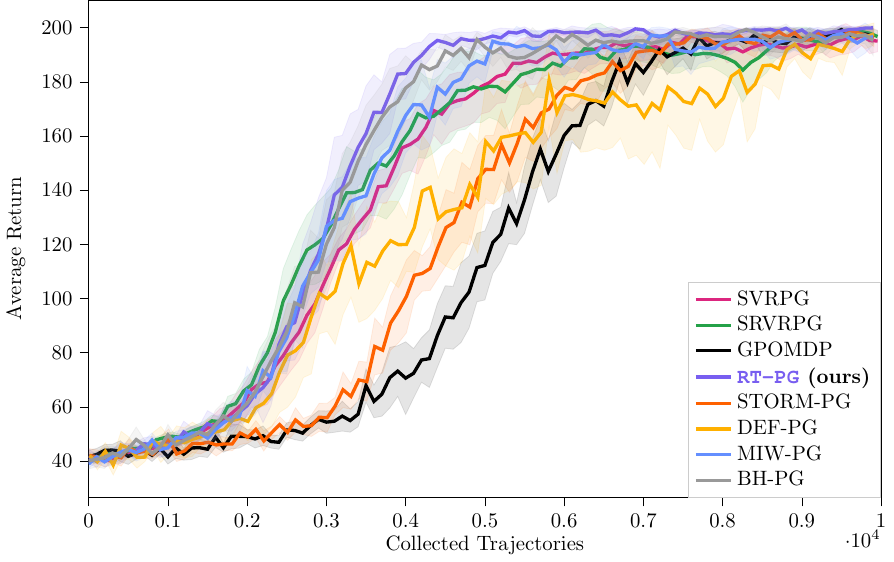}}
    \caption{Baselines comparison in \emph{Cart Pole} (Appendix~\ref{subapx:linear-baselines}). $10$ trials (mean $\pm 95\%$ C.I.).}
    \label{fig:linear-baselines}
\end{figure}

As shown in Figure~\ref{fig:linear-baselines}, when evaluated under equal trajectory budgets per iteration (i.e., by matching batch sizes across methods), \algnameshort consistently matches or outperforms all competing baselines and converges more rapidly to the optimal policy. Specifically, in the considered environment, \algnameshort achieves the highest mean return across all trajectory budgets and requires fewer freshly collected trajectories to reach optimal performance, thus confirming that the reuse of past trajectories enables faster convergence.

By contrast, STORM-PG and GPOMDP yield nearly overlapping learning curves, indicating comparable sample efficiency under this configuration. DEF-PG, however, exhibits pronounced return oscillations and often fails to sustain monotonic improvement, suggesting instability in the gradient estimates when operating with the same data budget. This behavior indicates that DEF-PG may require larger batch sizes to ensure stable updates.

Finally, we directly compare \algnameshort against baselines that also explicitly reuse trajectories: MIW-PG and BH-PG. While both methods exhibit behavior similar to \algnameshort, distinct differences emerge. MIW-PG demonstrates slightly worse convergence even in this simplified setting, confirming the limitations of uniform importance weighting. Conversely, while BH-PG performs similarly to \algnameshort, it suffers from practical drawbacks: $(i)$ higher memory costs, as it must store all policy parameterizations within the window alongside the trajectories; and $(ii)$ increased computational overhead, as computing the $\beta_{i}^{\text{BH}}$ coefficients (see Section~\ref{sec:pg}) requires evaluating newly collected trajectories under all past policies in the window at every iteration. 

\subsection{Baselines Comparison in \emph{Swimmer}} \label{apx:swimmer}






We conduct the evaluations in the \emph{Swimmer-v4} environment, part of the MuJoCo control suite~\citep{mujoco}. Using a horizon of $T = 200$ and a discount factor of $\gamma = 1$, this environment is known for exposing a local optimum around $J(\vtheta) \approx 30$. All methods are trained using the Adam optimizer~\citep{kingma2017adammethodstochasticoptimization}, with initial learning rates specified in Table~\ref{tab:comparison-baselines-deep}. Policies are implemented as deep Gaussian networks.

\begin{table}[h!]
    \centering
    \resizebox{.9\linewidth}{!}{\renewcommand{\arraystretch}{1.5}
\begin{tabular}{|l|cccccccc|}
    \hline
    \rowcolor{ibmIndigo!20} \textbf{Hyperparameter} & \textbf{\algnameshort} & \textbf{MIW-PG} & \textbf{BH-PG} & \textbf{GPOMDP} & \textbf{STORM-PG} & \textbf{SRVRPG} & \textbf{SVRPG} & \textbf{DEF-PG} \\
    \hline\hline
    NN Dimensions              & $32 \times 32$ & $32 \times 32$ & $32 \times 32$ & $32 \times 32$ & $32 \times 32$ & $32 \times 32$ & $32 \times 32$ & $32 \times 32$ \\
    NN Activations             & \texttt{tanh} & \texttt{tanh} & \texttt{tanh} & \texttt{tanh} & \texttt{tanh} & \texttt{tanh} & \texttt{tanh} & \texttt{tanh} \\
    Adam $\zeta_{1}$              & $1e-3$ & $1e-3$ & $1e-3$ & $1e-3$ & $1e-4$ & $1e-4$ & $1e-3$ & $1e-3$ \\
    Parameter Initialization $\vtheta_1$   & Xavier & Xavier & Xavier & Xavier & Xavier & Xavier & Xavier & Xavier \\
    Variance $\sigma^2$                  & $0.3$  & $0.3$  & $0.3$  & $0.3$  & $0.3$  & $0.3$  & $0.3$  & $0.3$  \\
    Number of trials           & $5$    & $5$    & $5$    & $5$    & $5$    & $5$    & $5$    & $5$    \\
    Batch size $N$                & $20 $  & $20 $  & $20 $  & $20 $  & $20 $  & -- & -- & -- \\
    Init‐batch size $N_1$           & -- & -- & -- & -- & $20 $  & -- & -- & -- \\
    Window size $\omega$                & $4$ & $4$  & $4$  & -- & -- & -- & -- & -- \\
    Snapshot batch size        & -- & -- & -- & -- & -- & $110$   & $110$   & $110$   \\
    Mini‐batch size            & -- & -- & -- & -- & -- & $10 $   & $10 $   & $10 $   \\
    \hline
\end{tabular}}

    \vspace{0.1cm}
    
    \caption{Hyperparameters for the experiment in Appendix~\ref{apx:swimmer}.}
    \label{tab:comparison-baselines-deep}
\end{table}

\begin{figure}[t!]
    \centering
    \resizebox{0.7\linewidth}{!}{\includegraphics[]{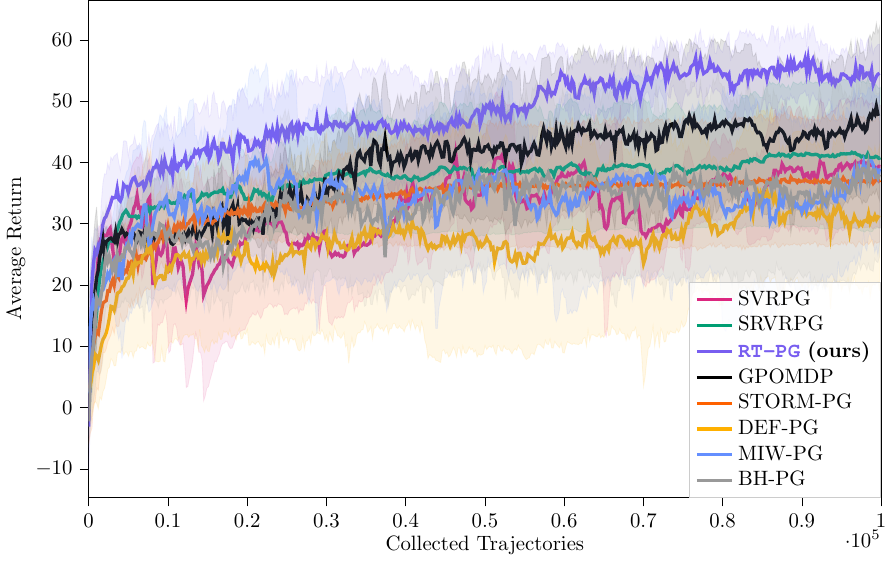}}
    \caption{Baselines comparison in \emph{Swimmer} (Appendix~\ref{apx:swimmer}). $5$ trials (mean $\pm 95\%$ C.I.).}
    \label{fig:swimmer}
\end{figure}

As in previous experiments, we ensure that all methods observe, on average, the same number of trajectories per iteration. This design enables a fair comparison between \algnameshort and the baseline algorithms in terms of data usage and sample efficiency.

As illustrated in Figure~\ref{fig:swimmer}, 
\algnameshort achieves higher average returns than the competing methods and it also seems to exhibit a greater capacity to escape the environment's local optimum. These results further reinforce the effectiveness of trajectory reuse in accelerating convergence and overcoming suboptimal regions in the policy landscape.
Interestingly, all competing baselines yield similar performance, and GPOMDP, despite lacking variance reduction techniques, seems to achieve higher mean performance than its variance-reduced counterparts. This observation suggests that methods relying on gradient reuse may require larger batch sizes to realize their theoretical advantages effectively.

Despite these advantages, the wider confidence intervals reflect notable variability in the number of iterations required to escape the local optimum. This variance is expected, given the task’s reward landscape: in some runs, the policy quickly discovers trajectories that facilitate escape, while in others, longer exploratory sequences are necessary. 

Finally, also in this case, let us directly compare \algnameshort against MIW-PG and BH-PG. Both methods exhibit behavior similar to other baselines, not managing to escape from the local optimum. MIW-PG, as expected, exhibits an unstable behavior due to the high variance of the importance weights to which a uniform weight is attributed. Notably, also BH-PG performs similarly to MIW-PG. Furthermore, BH-PG still has its inherent computational drawbacks (see Appendix~\ref{subapx:linear-baselines}).

\subsection{Baselines Comparison in \emph{Half Cheetah}} \label{apx:half_cheetah}

We conduct the evaluations in the \emph{Half Cheetah-v4} environment, part of the MuJoCo control suite~\citep{mujoco}. All methods are trained using the Adam optimizer~\citep{kingma2017adammethodstochasticoptimization}, with initial learning rates specified in Table~\ref{tab:comparison-baselines-half}. The policies are implemented as deep Gaussian networks.
As shown in Table~\ref{tab:environments}, \emph{Half Cheetah} features a significantly more complex observation and action space and is the most challenging of the three environments considered in this work.

\begin{table}[h!]
    \centering
    \resizebox{0.8\linewidth}{!}{\renewcommand{\arraystretch}{1.5}
\begin{tabular}{|l|cccccccc|}
    \hline
    \rowcolor{ibmIndigo!20}\textbf{Hyperparameter} & \textbf{\algnameshort} & \textbf{MIW-PG} & \textbf{BH-PG} & \textbf{GPOMDP} & \textbf{STORM-PG} & \textbf{SRVRPG} & \textbf{SVRPG} & \textbf{DEF-PG} \\
    \hline\hline
    NN Dimensions              & $32 \times 32$ & $32 \times 32$ & $32 \times 32$ & $32 \times 32$ & $32 \times 32$ & $32 \times 32$ & $32 \times 32$ & $32 \times 32$ \\
    NN Activations             & \texttt{tanh} & \texttt{tanh} & \texttt{tanh} & \texttt{tanh} & \texttt{tanh} & \texttt{tanh} & \texttt{tanh} & \texttt{tanh} \\
    Adam $\zeta_{1}$              & $1e-4$ & $1e-4$ & $1e-4$ & $1e-4$ & $1e-4$ & $1e-4$ & $1e-4$ & $1e-4$ \\
    Parameter Initialization $\vtheta_{1}$  & Xavier & Xavier & Xavier & Xavier & Xavier & Xavier & Xavier & Xavier \\
    Variance  $\sigma^2$                 & $0.1$  & $0.1$  & $0.1$  & $0.1$  & $0.1$  & $0.1$  & $0.1$  & $0.1$  \\
    Number of trials           & $10$    & $10$    & $10$    & $10$    & $10$    & $10$    & $10$    & $10$    \\
    Batch size  $N$               & $40$   & $40$   & $40$   & $40$   & $40$   & -- & -- & -- \\
    Init‐batch size $N_1$           & -- & -- & -- & -- & $40$   & -- & -- & -- \\
    Window size $\omega$               & $8$    & $8$    & $8$    & -- & -- & -- & -- & -- \\
    Snapshot batch size        & -- & -- & -- & -- & -- & $256$   & $256$   & $256$   \\
    Mini‐batch size            & -- & -- & -- & -- & -- & $16$    & $16$    & $16$    \\
    \hline
\end{tabular}}

    \vspace{0.1cm}
    
    \caption{Hyperparameters for the experiment in Appendix~\ref{apx:half_cheetah}.}
    \label{tab:comparison-baselines-half}
\end{table}

\begin{figure}[t!]
    \centering
    \resizebox{0.7\linewidth}{!}{\includegraphics[]{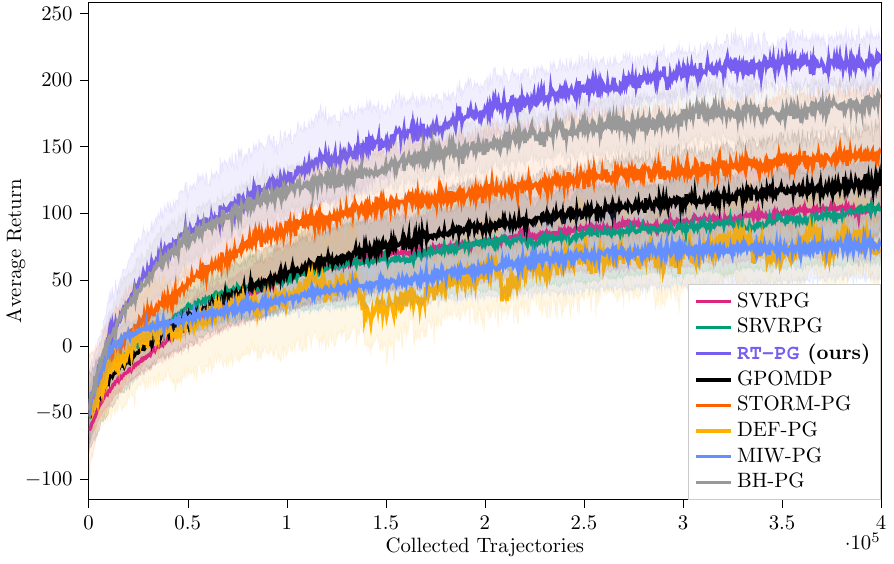}}
    \caption{Baselines comparison in \emph{Half-Cheetah} (Appendix~\ref{apx:half_cheetah}). $10$ trials (mean $\pm 95\%$ C.I.).}
    \label{fig:half_cheetah}
\end{figure}

As shown in Figure~\ref{fig:half_cheetah}, \algnameshort exhibits faster convergence, achieving nearly twice the final performance of most competing baselines. 
Notably, the lower bound of \algnameshort’s confidence interval at convergence exceeds the upper bounds of most of the baselines, providing strong statistical evidence of its advantages. 
As for the competing baselines, they exhibit similar behaviors, as indicated by the largely overlapping confidence intervals. Interestingly, GPOMDP, which does not employ any variance reduction technique, achieves higher average performance than some of its variance-reduced counterparts, specifically DEF-PG, SRVRPG, and SVRPG. This observation, as further discussed in Appendix~\ref{apx:swimmer}, suggests that methods reusing past gradients may require larger batch sizes to fully realize their theoretical advantages.

Finally, we explicitly compare \algnameshort against the trajectory-reusing baselines, MIW-PG and BH-PG. \algnameshort demonstrates a clear performance advantage over MIW-PG. As noted in previous experiments, the uniform weighting employed by MIW-PG fails to actively mitigate the variance introduced by importance sampling. This hinders performance, particularly with highly parameterized policies and large reuse windows, thereby validating the effectiveness of our adaptive weighting scheme.
Conversely, BH-PG exhibits behavior similar to \algnameshort. As discussed in Appendix~\ref{subapx:window_sensitivity}, the BH is nearly optimal in terms of variance~\citep[][Theorem 1]{veach1995optimally} and offers great stability in practice. However, this comes at the cost of memory and computational overhead to calculate the $\beta_{i}^{\text{BH}}$ coefficients (see Section~\ref{sec:pg}). 


\subsection{On Employing Non-Adaptive MPM Coefficients}
Here, we conduct an ablation regarding the MPM coefficients. This is to bridge, from an empirical standpoint, a theory-practice gap involving the implementation choices 
described in Appendix~\ref{apx:implementation-details} and the method's 
description carried out in the main paper, for which we deliver theoretical 
guarantees. Specifically, the main gaps between theory and practice are 
represented by: $(i)$ the use of an adaptive learning rate and $(ii)$ the 
use of adaptive MPM coefficients. The first choice, which we implement via 
the Adam~\citep{kingma2017adammethodstochasticoptimization} scheduler, is 
customary in the related literature~\citep[e.g.,][]{papini2018stochastic,
yuan2020stochastic}, while the second one deserves an empirical study that 
we conduct here. Specifically, the theoretical version of \algnameshort 
(see Algorithm~\ref{alg:rpg}), on which the presented theoretical results 
are built, prescribes constant and deterministic coefficients $\lambda_{i,k}$ 
and $\alpha_{i,k}$, explicitly depending on the constant $D$ from 
Assumption~5.2. In this part, we compare \algnameshort with its version 
employing theory-prescribed MPM coefficients, which we refer to as \thname. 
Specifically, \thname works exactly like \algnameshort, but employs, for 
every $k \in \dsb{K}$ and $i \in \dsb{\omega_{k}-k+1, k}$, the following 
MPM coefficients:
\begin{align*}
    \lambda_{i,k} = \sqrt{\frac{1}{D N \omega_{k}}} \quad \text{and} \quad 
    \alpha_{i,k} = \frac{1}{\omega_{k}}.
\end{align*}
All experiments in this section are conducted in \emph{Cart Pole} with $T=200$, employing linear Gaussian policies with 
$\sigma^{2}=0.3$.

Figure~\ref{fig:th-1} mimics the setting of Figure~\ref{fig:boost-32}, 
where the sample efficiency of \algnameshort is tested by keeping the 
total number of trajectories $N\omega$ employed to estimate the policy 
gradient constant, while varying the window size $\omega$ and the number 
of freshly collected trajectories. Here, we limit to the 
case $N\omega=32$, since we aim at analyzing the shift in behavior between 
\algnameshort and \thname when considering $D \in \{0.5, 1, 2, 4, 8\}$. 
As can be observed from Figure~\ref{fig:th-1}, there is no appreciable 
difference between \algnameshort and \thname in the considered settings, 
thus confirming the validity of the weighting scheme per se and the 
robustness of the method regardless of the choice of $D$, which remains 
unknown in practice. This issue is mitigated by the adaptive coefficients 
as described in Appendix~\ref{apx:implementation-details}.

Figures~\ref{fig:th-2} and~\ref{fig:th-3} show the results of additional 
ablations, both supporting the same conclusion that the method is effective 
and sample efficient, while the fact that $D$ is unknown in practice may 
hinder performance, especially as $\omega$ grows large, e.g., $\omega=64$.
Specifically, Figure~\ref{fig:th-2} considers fixed batch-$D$ 
configurations and performs ablations on the window size $\omega$, letting 
it take values in $\{2,4,8,16,64\}$. We also report the window sensitivity 
of \algnameshort (already shown in Figure~\ref{fig:window-sensitivity}), 
as well as the learning curve of GPOMDP, to assess the learning boost 
w.r.t.\ not reusing data. Across the various $D$ configurations, \thname 
shows increasing benefits in terms of convergence speed w.r.t.\ GPOMDP 
as the reuse window grows, consistently with \algnameshort. However, 
unlike \algnameshort, increasing the reuse window exhibits diminishing 
returns: rather than saturating without degrading, large windows 
(specifically $\omega=64$) may hinder the convergence speed of \thname.
The same conclusion can be drawn from Figure~\ref{fig:th-3}, which offers 
a different view of the previous experiment. We show \thname 
(across all proposed $D$ values) and \algnameshort grouped by window size 
$\omega$. Also in this case, the performances of the two are nearly 
indistinguishable for most values of $\omega$, until $\omega=64$, where a 
performance drop of \thname becomes apparent. However, we stress that this 
may be due to the unknown value of $D$ in practice; indeed, \thname's 
performance may benefit from a proper choice of $D$. Moreover, we recall 
that the adaptive MPM coefficients employed by \algnameshort, which exhibit 
consistent stability across all settings, are precisely those minimizing 
the MPM estimation error. 

\begin{figure}[t]
    \centering
    \resizebox{\linewidth}{!}{\includegraphics{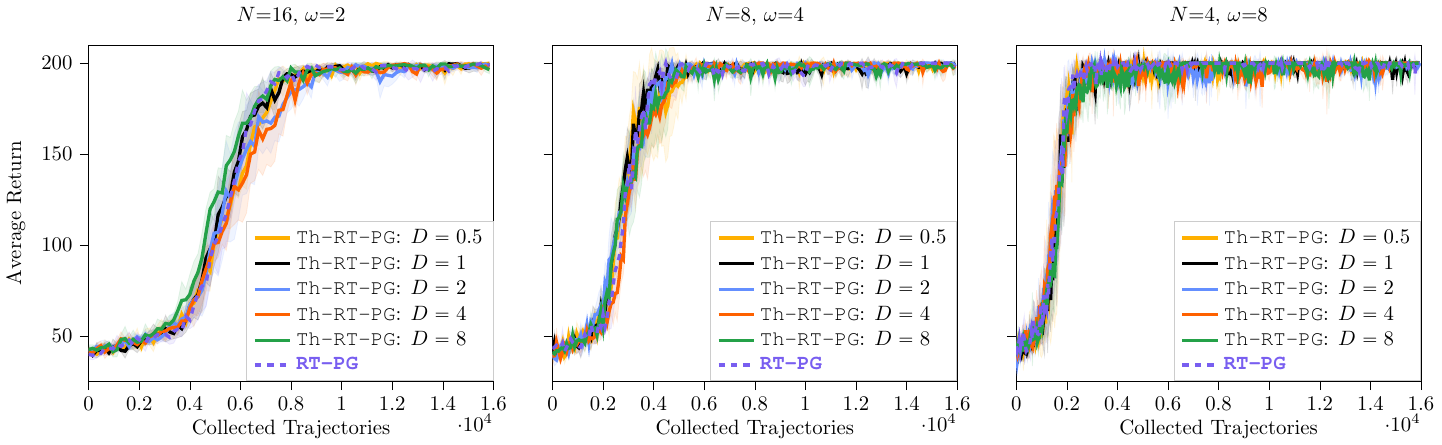}}
    \caption{\algnameshort versus \thname under various $D$ values in \emph{Cart Pole}. 5 trials (mean $\pm 95\%$ C.I.).}
    \label{fig:th-1}
\end{figure}

\subsection{Computational Resources} \label{subapx:it}

All the experiments were run on a MacBook Pro 14" with an Apple M4 CPU and 16 GB of RAM. All the time comparisons shown in the paper were performed using 8 parallel workers, each collecting an independent trajectory.

\begin{figure}[p]
    \centering
    \resizebox{.9\linewidth}{!}{\includegraphics{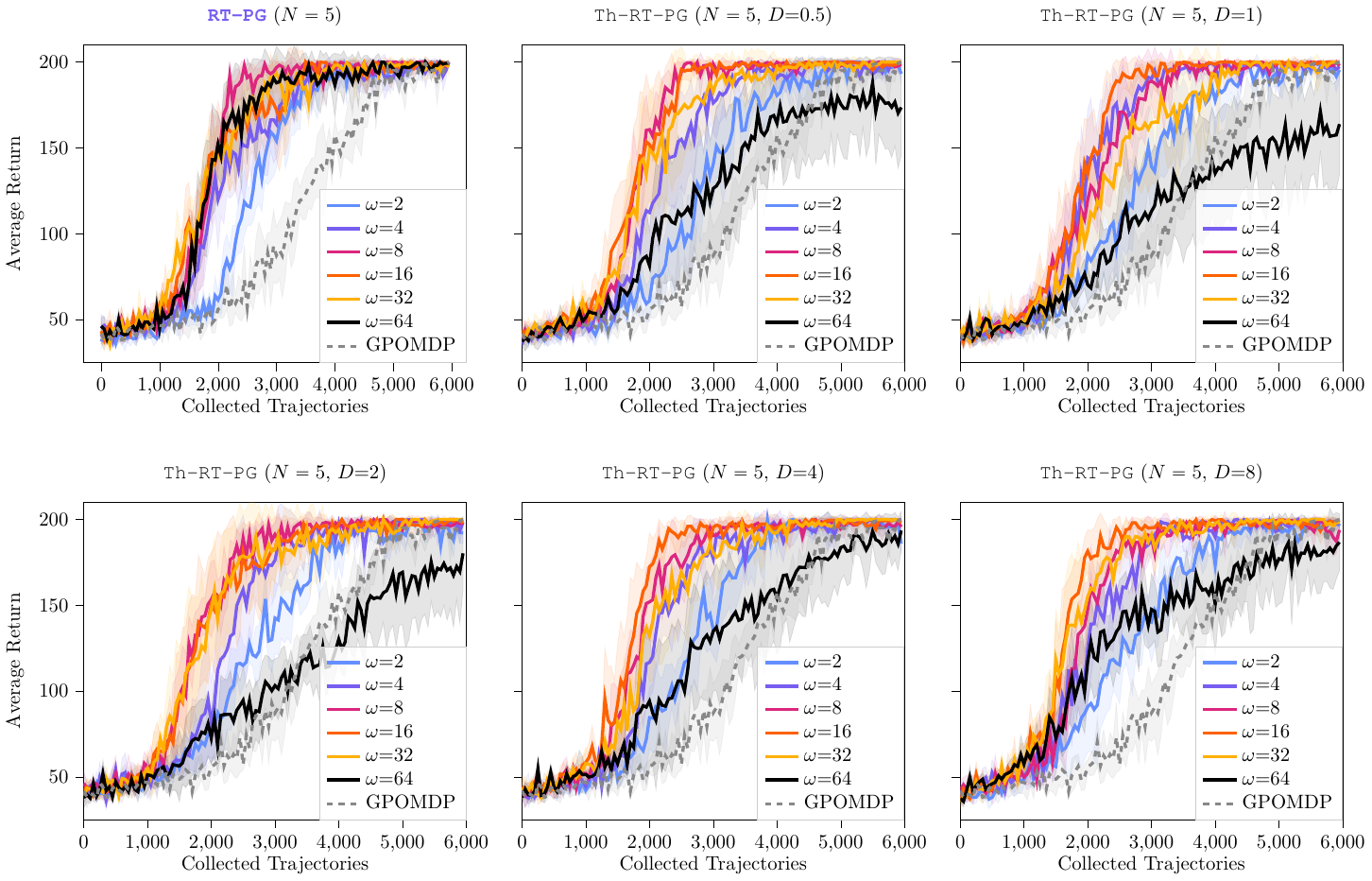}}
    \caption{Window sensitivity study of \algnameshort and \thname under various $D$ values in \emph{Cart Pole}. GPOMDP was run with $N=5$. 5 trials (mean $\pm 95\%$ C.I.).}
    \label{fig:th-2}
    
    \vfill
    
    \resizebox{.9\linewidth}{!}{\includegraphics{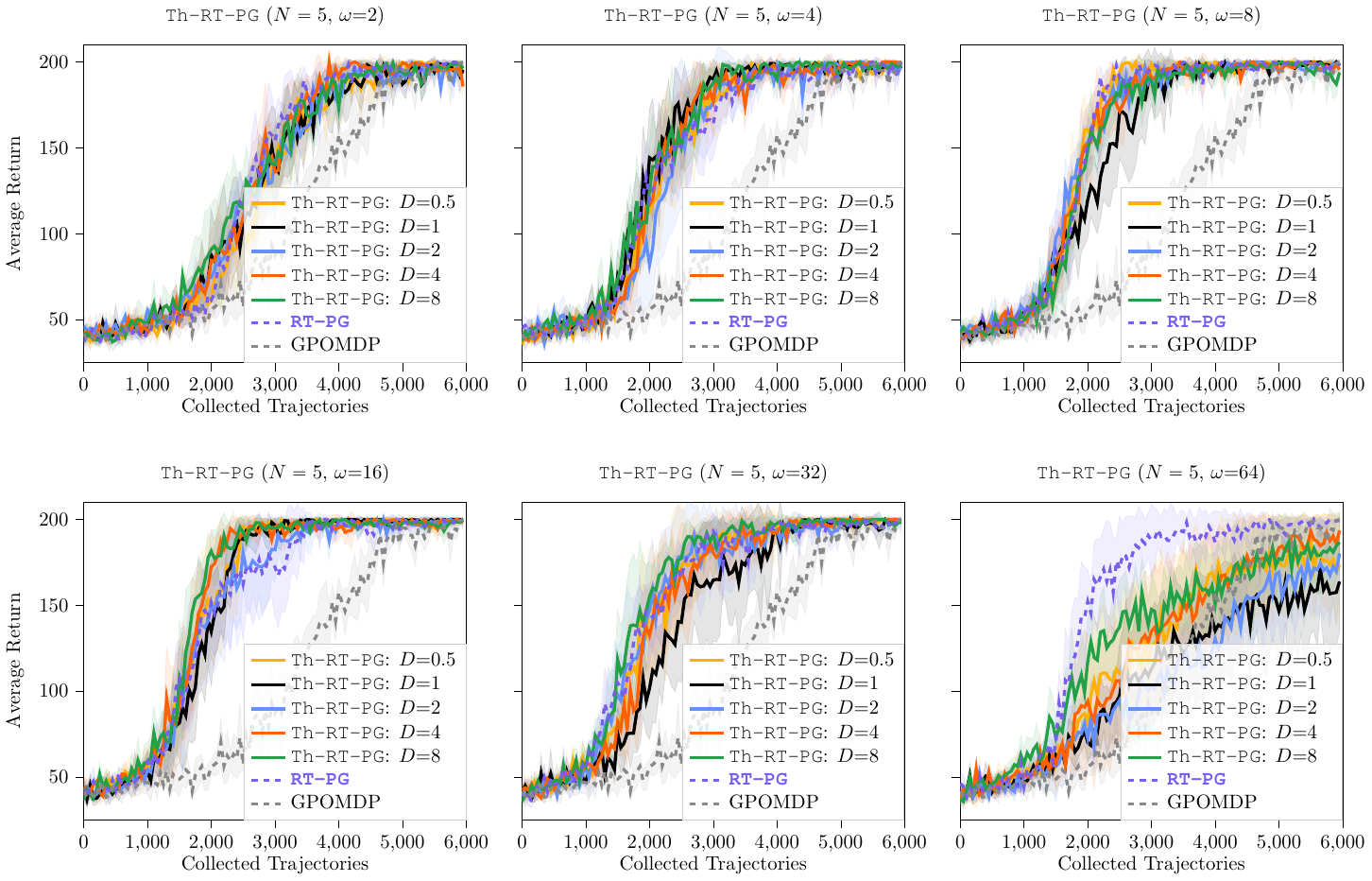}}
    \caption{$D$ sensitivity study under various batch-window configurations in \emph{Cart Pole}. GPOMDP was run with $N=5$. \algnameshort was run always with $N$ and the same window size $\omega$ of the \thname configuration it is compared with. 5 trials (mean $\pm 95\%$ C.I.).}
    \label{fig:th-3}
\end{figure}




\end{document}